\title{Formatting Instructions For NeurIPS 2020}
\author{%
  Léonard Blier\\
  Inria, Université Paris Saclay, FAIR
  \And
  Yann Ollivier
  \\ FAIR
}
\newcommand{\thmheadercommand}[1]{\textbf{\scshape{}#1.\\*}}
\newtheoremstyle{yannthm}{\topsep}{\topsep}{\slshape}{}{\scshape\bfseries}{.}{.5em}{%
\thmname{#1}\thmnumber{ #2}\thmnote{#3}%
}
\newtheoremstyle{yannthm2}{\topsep}{\topsep}{}{}{\scshape\bfseries}{.}{.5em}{%
\thmname{#1}\thmnumber{ #2}\thmnote{#3}%
}
\def\d{\operatorname{d}\!{}}
\def\Z{{\mathbb{Z}}}
\def\R{{\mathbb{R}}}
\renewcommand{\geq}{\geqslant}
\renewcommand{\leq}{\leqslant}
\renewcommand{\emptyset}{\varnothing}
\newcommand{\deq}{\mathrel{\mathop{:}}=}
\newcommand{\eqd}{=\mathrel{\mathop{:}}}
\newcommand{\from}{\colon} 
\def\eps{\varepsilon}
\renewcommand{\epsilon}{\varepsilon}
\renewcommand{\phi}{\varphi}
\let\oldPr\Pr
\renewcommand{\Pr}{\oldPr\nolimits}
\newcommand{\E}{\mathbb{E}}
\DeclareMathOperator{\Ker}{Ker}
\DeclareMathOperator{\Id}{Id}
\newcommand{\abs}[1]{\left\lvert#1\right\rvert}
\newcommand{\norm}[1]{\left\lVert#1\right\rVert}
\newcommand{\1}{\mathbbm{1}}
\DeclareMathOperator*{\argmax}{arg\,max}
\theoremstyle{yannthm}
\newtheorem{defi}{Definition}
\newtheorem*{defi*}{Definition}
\newtheorem{prop}[defi]{Proposition}
\newtheorem*{prop*}{Proposition}
\newtheorem{thm}[defi]{Theorem}
\newtheorem*{thm*}{Theorem}
\newtheorem{defithm}[defi]{Definition-Theorem}
\newtheorem{nfthm}[defi]{Informal Theorem}
\newtheorem{lem}[defi]{Lemma}
\newtheorem*{lem*}{Lemma}
\newtheorem*{cor*}{Corollary}
\newtheorem*{ex*}{Example}
\newtheorem{assumption}{Assumption}
\newtheorem*{subenonce}{}
\theoremstyle{yannthm2}
\newtheorem*{exo*}{Exercise}
\newtheorem*{rem*}{Remark}
\newtheorem*{subenonce2}{}
\newcommand{\transp}[1]{#1^{\!\top}\!}
\newcommand{\todo}[1]{{\color{red} TODO: {#1}}}
\def\del{\operatorname{\delta}\hspace{-.4ex}{}} 
\newcommand{\tar}{\mathrm{tar}}
\newcommand{\supp}{\mathrm{supp}}
\newcommand{\tildem}{\tilde m}
\newcommand{\Qspace}{\mathcal{Q}}
\newcommand{\td}{\mathrm{TD}}
\newcommand{\tdn}{\mathrm{TD}^{(n)}}
\newcommand{\her}{\mathrm{HER}}
\newcommand{\uvfa}{\mathrm{UVFA}}
\newcommand{\ddqn}{\delta\text{-}\mathrm{DQN}}
\newcommand{\dtd}{\delta\text{-}\mathrm{TD}}
\newcommand{\dtdn}{\delta\text{-}\mathrm{TD}(n)}
\newcommand{\dac}{\delta\text{-}\mathrm{AC}}
\newcommand{\dppo}{\delta\text{-}\mathrm{PPO}}
\newcommand{\adv}{\mathrm{adv}}
\newcommand{\expl}{\mathrm{expl}}
\newcommand{\ks}{K_\mathcal{S}}
\newcommand{\rhoSA}{\rho_{\mathrm{SA}}}
\newcommand{\rhoSG}{\rho_{\mathrm{SG}}}
\newcolumntype{L}[1]{>{\let\newline\\\arraybackslash\hspace{0pt}}m{#1}}
\title{
  Unbiased Methods for Multi-Goal RL
}
\begin{document}

\maketitle

\begin{abstract}
  In multi-goal reinforcement learning (RL) settings, the reward for each
  goal is sparse, and located in a small neighborhood of the goal.
In large dimension, the
probability of reaching a reward vanishes and the agent receives little
learning signal. 
Methods such as \emph{Hindsight Experience Replay} (HER) tackle this
issue by also learning from realized but unplanned-for goals. But HER is known to
introduce bias \citep{plappert2018multi}, and can converge to low-return policies by
overestimating chancy outcomes.
First, we vindicate HER by proving that it is actually unbiased in \emph{deterministic}
environments, such as many optimal control settings.
Next, 
for stochastic environments in continuous spaces, we tackle sparse rewards by directly taking the infinitely sparse
reward limit.
We fully formalize the problem of multi-goal RL with infinitely sparse
Dirac rewards at each goal. We introduce unbiased deep $Q$-learning and
actor-critic algorithms that can handle such infinitely sparse rewards,
and test them in toy environments.
\end{abstract}

Most standard \emph{reinforcement learning} (RL) methods fail when faced with very sparse
reward signals.
Multi-task reinforcement learning attempts to solve this problem by
presenting agents with a diverse set of tasks and learn a task-dependent policy in the hope that
the agent could leverage knowledge from some tasks on
others~\citep{jaderberg2016reinforcement, hausman2018learning, Nagabandi2019LearningTA}. \emph{Multi-goal} reinforcement learning is a sub-field of multi-task RL,
where the different tasks consist in reaching particular \emph{goals} in the environment. 

\emph{Universal Value Function Approximators} (UVFA)
 \citep{pmlr-v37-schaul15} extend the classical Q-learning and Temporal
 Difference (TD) algorithms to the multi-goal setting. It learns
the goal-conditioned value-function $V^{\pi}(s, g)$ or $Q$-function
$Q^\ast(s,a,g)$ for every state-goal pair,
with function approximation, via a TD algorithm. Still, no learning
occurs until a reward is observed, 
and UVFA fails in many high dimensional environments, when the probability of reaching the target goal is low and the agent almost never gets any learning signal.

\emph{Hindsight Experience Replay} (HER)
\citep{HER} is a possible solution to this issue. It leverages
information between goals via the following principle: trajectories
aiming at a goal $g$ but reaching a goal $g'$ can be used for learning
exactly as if the trajectory had been aiming at $g'$ from start. This
strategy has proved successful in practice, but is known to be
\emph{biased}~\citep{manela2021bias, Lanka2018ARCHERAR}. In their request
for research for robotic multi-goal environments,
\cite{plappert2018multi} list the necessity for an unbiased version of
HER, as such bias can lead to low-return policies. 

Our contributions are:

\begin{itemize}
\item We show that in \emph{deterministic} environments, HER is actually
unbiased (Theorem~\ref{nfthm:her-determ}). This case covers many standard control or robotic environments, in which HER is known to perform well. This result strengthens HER theoretically.

\item We show that sparse rewards in a multi-goal setting can be handled,
counter-intuitively,
by dealing directly with the infinitely sparse reward limit: then the
sparse reward contribution can be computed algebraically instead of
sampled.
The resulting Q-learning and actor-critic algorithms are unbiased even in
the stochastic case and handle multi-goal RL without having to observe sparse rewards,
although their variance is higher than HER. For this, we fully formalize
the problem of multi-goal RL with infinitely sparse rewards.
\end{itemize}

\section{Multi-Goal Reinforcement Learning and Vanishing Rewards}
\label{sec:multi-goal-rl}

\paragraph{Definition.}
We define a multi-goal RL environment as a variant of a Markov decision
process (MDP) including a goal space. The
MDP is defined by a state-space $\mathcal{S}$, 
an action space $\mathcal{A}$ (discrete or continuous), a
discount factor $\gamma$, and a transition probability measure $P(\d
s'|s, a)$ which describes the probability that the next state is $s'$
after taking action $a$ in state $s$; for stochastic continuous
environments, this
is generally a continuous probability distribution over $s'$, hence the
notation
$\d s'$ which represents the probability to be in an infinitesimal set
$\d s'$
around $s'$.

The
goal space is a set $\mathcal{G}$ together with a function $\phi \from
\mathcal{S}\rightarrow\mathcal{G}$ defining for every state $s$ a
corresponding goal $g = \phi(s)$, which is the goal \emph{achieved} by
state $s$.
The objective of the agent is to \emph{reach} a goal $g$. This is usually
formalized by defining a reward function $R_{\varepsilon}(s, g)$ as $1$
when a given distance  between the achieved goal $\phi(s)$ and the target
$g$ is lower than a fixed value $\varepsilon$: $R_{\varepsilon}(s, g)
\deq \1_{\|\phi(s) - g\| \leq \varepsilon}$ for a fixed norm $\|.\|$ on $\mathcal{G}$.
Thus, each goal $g\in \mathcal{G}$ defines an ordinary MDP with reward
$R(s,g)$, and $Q$ and value functions $Q^\ast_\eps(s,a,g)$, $V^\pi_\eps(s,g)$.
A goal-conditioned policy $\pi(a|s, g)$ is a probability distribution over the action space $\mathcal{A}$ for every $(s, g) \in \mathcal{S}\times\mathcal{G}$. 

We assume that, for a multi-goal policy $\pi(a|s, g)$, we are able to
sample trajectories in the environment by sampling a goal $g\sim
\rho_{\mathcal{G}}(\d g)$, a starting state $s_{0}\sim \rho_{0}(\d
s_{0}|g)$, and then by sampling at step $t$ the action $a_{t}\sim \pi(a |
s_{t}, g)$ and the next state $s_{t+1}\sim P(\d s' |s_{t}, a_{t})$. We
use the notation $P^{\pi}(\d s'|s, g) \deq \int_{a}\pi(a|s, g)P(\d s'|s, a)$. 

\paragraph{Universal Value Function Approximations.}
\label{sec:uvf}

UVFA
\citep{pmlr-v37-schaul15} allow
for learning the value function $V_{\varepsilon}^{\pi}(s, g) =
\E_{a_{t}\sim \pi(.|s_{t}, g), s_{t+1}\sim P(.|s_{t},
a_{t})}\left[\sum_{t\geq 0}\gamma^{t} R_{\varepsilon}(s_{t},
g)|s_{0}=s\right]$ and the optimal $Q$-function
$Q^{\ast}_{\varepsilon}(s, a, g)$. Formally, $Q$-learning with UVFA can be defined
as
standard $Q$-learning on the augmented state space
$\tilde{\mathcal{S}} \deq \mathcal{S}\times\mathcal{G}$, with the
transition distribution $\tilde P$ defined as follows: if action $a$ is
performed in state $(s, g)$, the next state $\tilde s'$ is $(s', g)$ with
$s'\sim P(\d s'|s, a)$. The augmented environment is not a multi-goal
environment, and the policy $\pi(a|s, g) = \pi(a| \tilde s)$
becomes a standard non-goal-dependent policy in $\tilde{\mathcal{S}}$.
The UVFA Q-learning update
corresponds to standard parametric $Q$-learning on the augmented
environment.

In practice, we consider a parametric function $Q_{\theta}(s, g)$, and we want to learn $\theta$ such that $Q_{\theta}(s, g)$
approximates $Q^{\ast}_{\varepsilon}(s, g)$. If $\theta$ is our current
estimate and $Q_{\tar}$ a target $Q$-function the Q-learning UVFA stochastic
update $\widehat{\delta\theta}_{\uvfa}$ is defined as follows.  We
consider an exploration policy $\pi_{\expl}(a|s, g)$. When a transition
$(s, a, s', g)$ is observed, with $a\sim \pi_{\expl}(.|s, g)$ and $s'\sim
P(.|s, g)$,  $\widehat{\delta\theta}_{\uvfa}$ is:
\begin{align}
  \widehat{\delta\theta}_{\uvfa}(s, a, s', g) &\deq
  -\frac12\partial_{\theta}\left(Q_{\theta}(s, a, g) - R_{\varepsilon}(s,
  g) - \gamma \sup_{a'}Q_{{\tar}}(s', a', g)\right)^{2}
                           \label{eq:exp_update_uvf_q}
\end{align}
Then, we update $\theta$ with $\theta \leftarrow \theta + \eta
\widehat{\delta\theta}_{\uvfa}$, where $\eta$ is the learning rate. The
update $\widehat{\delta\theta}_{\uvfa}$ is an unbiased estimate of
$\nicefrac12 \partial_{\theta}\|Q_{\theta} - T\cdot
Q_{{\tar}}\|^{2}$ where $T$ is the optimal Bellman operator,
$T\cdot Q(s, a, g) = R_{\varepsilon}(s, g) + \gamma \E_{s'\sim P(.|s,
a)}\left[\sup_{a'}Q(s', a', g)\right]$, whose unique fixed point is
$Q^{\ast}_{\varepsilon}$. 
In particular, in the tabular setting, this
guarantees that a function $Q_{\infty}$ is a fixed point of UVFA if and only if $T\cdot Q_{\infty} = Q_{\infty}$, which means $Q_{\infty} = Q^{\ast}_{\varepsilon}$.

\paragraph{UVFA and vanishing rewards.}
\label{sec:uvf-snr}
A major problem with multi-goal setups is the low probability with which
each specific goal $g$ is achieved, since rewards are observed only in a
ball of radius $\eps$ around the goal. 
In a continuous noisy environment of dimension $n$, reaching a goal up to
precision $\eps$ becomes almost surely impossible when $\eps\to 0$. With
noise in dimension $n$, the probability to exactly reach a predefined
goal $g$ scales like $O(\eps^n)$. In particular, the $Q$ and value
functions vanish like $O(\eps^n)$ when $\eps$ is small.
The situation is different in continuous deterministic environments. If
it is possible to reach a goal exactly by selecting the right action,
then the optimal $Q$-function $Q^\ast_\eps$ does not vanish, even if
$\eps=0$.

With the UVFA update, the probability to observe a
reward $\1_{\|\phi(s) - g\| \leq \varepsilon}$
vanishes like
$O(\eps^n)$ for continuous exploration policies. So 
even if $Q^\ast_\eps$ itself does not vanish, the learning algorithm
for $Q^\ast_\eps$ may vanish.
In practice, in an
environment of dimension $n=6$, UVFA is not able to learn anymore
(experiment in Fig.~\ref{fig:experiments}).
This vanishing issue cannot be solved solely by an exploration
strategy: the issue is not the lack of diversity in visited
states but rather the state space is too large to be visited by an
exploration trajectory \citep{HER}. Solving the issue of sparse rewards 
requires gathering some
information even from \emph{failing trajectories} which do not reach
their initial goal, namely,
leveraging the structure of multi-goal environments by using that every
state achieves \emph{some} goal. This the case in HER but not UVFA.

In this work we study 
algorithms which leverage the multi-goal structure
and do not vanish even in the limit
$\eps\to 0$. We will focus on \emph{unbiased} algorithms, which ensure
that the true $Q$ or value function is indeed a fixed point, by
stochastic gradient arguments.
%
UVFA is unbiased but vanishes when $\eps\to 0$. HER does not vanish, but
is known to be biased. In
Section~\ref{sec:bias-her-stochastic} we prove that HER is unbiased in
deterministic environments.
Sections~\ref{sec:unbiased-dqn} and
\ref{sec:pg} present non-vanishing, unbiased algorithms for
stochastic environments; however, they are less efficient than HER in
deterministic environments.

\section{Hindsight Experience Replay in Stochastic or Deterministic Environments }


\emph{Hindsight Experience Replay} (HER) \citep{HER} is a way to 
solve the issue of sparse rewards for multi-goal environments by
leveraging the mutual information between goals. The principle is the
following: trajectories aiming at a goal $g$ but reaching a goal $g'$ can
be used for learning exactly as if the trajectory had been aiming for
$g'$ from start. Formally, when observing a trajectory $\tau = (g, s_{0},
a_{0}, s_{1}, a_{1}, ...)$, HER samples two random integers $0\leq K\leq L$, and
performs a $Q$-learning update at step $s_{K}$, but for a re-sampled goal
$g'$ that is, with some probability,  either $g'=g$ or $g' = \phi(s_{L})$, the goal achieved by
the $L$-th state in the trajectory:
  $\widehat{\delta\theta}_{\her}(\tau, K, L) \deq
  \frac12\partial_{\theta}\left(Q_{\theta}(s_{K}, a_{K}, g') -
  R_{\varepsilon}(s_{K}, g') - \gamma \sup_{a'}Q_{{\tar}}(s_{K+1}, a', g')\right)^{2}$. 
In particular, the HER update does not vanish even for $\eps=0$: with
nonzero probability, $K=L$ and $g'=\phi(s_L)$, so that
$R_{\varepsilon}(s_{K}, g')=1$.

\paragraph{Bias of HER in stochastic environments.}
\label{sec:bias-her-stochastic}
HER is known to be biased in a general setting~\citep{manela2021bias,
Lanka2018ARCHERAR, plappert2018multi}, and this bias corresponds to a
well-known psychological bias \citep{hindsight_bias}. Here is a simple
way to design \emph{counter-examples} environments which exhibit this HER
bias. Consider a finite multi goal environment and add a single action
$a^{\ast}$  which, from any state $s$, sends the agent to a uniform
random state $s'$ and then \emph{freezes} it, which means the agent will
always stay at $s'$.

Both in theory and practice, HER will learn to always select the action
$a^{\ast}$ (third plot in Fig.~\ref{fig:experiments}). The intuition is
the following: when the agent acts with $a^{\ast}$ and reaches a random
state $s'$, HER reinforces $a^{\ast}$ as a good way to reach $s'$ from
$s$, while this was purely random. Formally, the following statement
(proof in Appendix~\ref{app:proof-her-bias}) shows that HER will
overestimate the value of action $a^{\ast}$. We say that $Q_{\infty}$ is
a \emph{fixed point} of HER if
$\E_{\tau, K, L}\left[\widehat{\delta\theta}_{\her}(\tau, K, L)\right] =
0$ when $Q_{\theta} = Q_{\tar}=Q_{\infty}$.


\begin{thm}
\label{thm:freeze}
  Let $\mathcal{M}$ be any finite multi-goal environment, and
  $\tilde{\mathcal{M}}$ the modified environment with the \emph{freeze}
  action $a^{\ast}$. Then $\tilde{\mathcal{M}}$ is a
  \emph{counter-example} to HER, which is biased in this environment.
  Namely, if $Q_{\infty}$ is a fixed point of HER for
  $\tilde{\mathcal{M}}$, then for every \emph{unfrozen} state $s$ and
  goal $g$, $Q_{\infty}$ will \emph{overestimate} the value of
  $a^{\ast}$: $Q_{\infty}(s, a^{\ast}, g) > Q^{\ast}(s, a^{\ast}, g)$
  where $Q^\ast$ is the true value function.
\end{thm}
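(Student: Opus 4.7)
The plan is to characterize the fixed-point equation that $Q_\infty$ satisfies at $(s, a^\ast, g)$ under HER's sampling scheme and compare it with the true Bellman equation for $Q^\ast$. The bias will come entirely from how HER's goal-resampling distorts the conditional distribution of the next state $s'$ given the triple $(s, a^\ast, g)$.

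First I would handle $Q$ on \emph{frozen} states. If $s'$ is any state reached after executing $a^\ast$, every action keeps the agent at $s'$, so any Bellman fixed point $Q$ satisfies $Q(s', a', g) = R_\eps(s', g) + \gamma Q(s', a'', g)$ for all $a', a''$. This forces $Q_\infty(s', a', g) = R_\eps(s', g)/(1-\gamma)$, independent of the action, and likewise for $Q^\ast$. Consequently $\sup_{a'} Q(s', a', g) = R_\eps(s', g)/(1-\gamma)$ on all frozen states, for both $Q_\infty$ and $Q^\ast$.

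Next I would identify the conditional distribution $p_\mathrm{HER}(s' \mid s, a^\ast, g)$ of the next state given that HER samples the triple $(s, a^\ast, g)$. Since $a^\ast$ freezes the agent, for every $L \geq K+1$ one has $s_L = s_{K+1} = s'$ and hence $\phi(s_L) = \phi(s')$. Therefore, when HER resamples $g' = \phi(s_L)$ with $L > K$, conditioning on $g' = g$ forces $\phi(s') \approx g$, so $s'$ lies in the goal-achieving set $\{s' : R_\eps(s', g) = 1\}$. The branches in which HER keeps the original goal, or resamples with $L = K$, leave the marginal of $s_{K+1}$ uniform on $\mathcal{S}$. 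Writing $p_\mathrm{HER}$ as a mixture of these branches, its mass on goal-achieving states is strictly larger than under the true (uniform) transition, because the $L > K$ branch contributes strictly positive weight concentrated entirely on such states.

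Finally, I would substitute these facts into the two fixed-point equations. The HER fixed point at $(s, a^\ast, g)$ becomes
\[
Q_\infty(s, a^\ast, g) = R_\eps(s, g) + \frac{\gamma}{1-\gamma}\, \E_{s' \sim p_\mathrm{HER}}\!\left[R_\eps(s', g)\right],
\]
while the true Bellman equation reads
\[
Q^\ast(s, a^\ast, g) = R_\eps(s, g) + \frac{\gamma}{1-\gamma}\cdot\frac{1}{|\mathcal{S}|}\sum_{s'} R_\eps(s', g).
\]
Subtracting and using the strict inequality between the two expectations yields $Q_\infty(s, a^\ast, g) > Q^\ast(s, a^\ast, g)$. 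The main obstacle is the bookkeeping for the HER sampling distribution: one must verify that for any unfrozen $s$ and achievable $g$ the triple $(s, a^\ast, g)$ lies in HER's support (so that the fixed-point equation applies to it), and that the $L > K$ resampling branch carries strictly positive weight relative to the uniform branches. Under standard nondegeneracy assumptions on the trajectory distribution and on the $(K, L)$ sampling, this is immediate; the remainder of the argument is purely algebraic.
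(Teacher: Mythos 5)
Your proposal is correct and follows essentially the same route as the paper's proof: first pin down the values on frozen states, then observe that HER's future-goal resampling, combined with the freeze, skews the conditional law of $s'$ given $(s,a^\ast,g)$ toward states achieving $g$ (the resampled-goal branch with a strictly later resampling time places all its mass there), and finally compare the resulting fixed-point equation with the true Bellman equation where $s'$ is uniform. The only cosmetic difference is that the paper works in the finite setting with exact rewards $\1_{s'=g}$ and writes the bias as $\mu_{\her}(s'=g\mid s,a^\ast,g)>1/S$, rather than your $\eps$-ball phrasing.
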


Generally, HER is overestimating chancy outcomes, by estimating that
any action (even random) that led to some goal was a good way to reach
that goal. This is clear in the example of the freeze-after-random-jump
actions in Theorem~\ref{thm:freeze}.
Thus, HER has no reason
to learn reliably in a stochastic environment. Other hindsight methods
such as  \citep{Rauber2019HindsightPG} experience a similar bias.

\paragraph{HER is unbiased in deterministic environments.}

Despite its bias, HER is efficient in practice, especially in continuous
control environments. We vindicate HER theoretically by showing that HER
is unbiased in \emph{deterministic} environments.
We say that an environment is \emph{deterministic} is the next state $s_{t+1}$ is uniquely determined by the current state $s_{t}$ and an action $a_{t}$. This covers many usual environments such as robotic environments.
\begin{thm}
  \label{nfthm:her-determ}
  In a deterministic multi-goal environment such that every target state
  is reachable from any starting state, HER is an unbiased Q-learning
  method. Namely, there is a Euclidean norm $\|.\|_{\her}$  such that if
  $Q_{\theta}$ is the current estimate of $Q^{\ast}$, the HER update
  $\widehat{\delta\theta_{\text{HER}}}$ is an unbiased stochastic
  estimate of the gradient step between $Q_{\theta}$ and the target
  function $TQ_{\tar}$:
  $\E\left[\widehat{\delta\theta}_{\text{HER}}\right]    =
  \nicefrac{1}{2}\partial_{\theta}\|Q_{\theta} - TQ_{\tar}\|_{\her}^{2}$. 
  In particular, the true $Q$-function $Q^\ast$ is a fixed point of HER in
  expectation: if $Q_\theta$ and $Q_\tar$ are equal to $Q^\ast$ then
  $\E\left[\widehat{\delta\theta}_{\text{HER}}\right]    =0$.
\end{thm}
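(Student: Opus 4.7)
The plan is to define a measure $\mu_\her(s,a,g')$ on state-action-goal triples encoding how often HER draws $(s_K, a_K, g')$ over its joint sampling of trajectory, indices, and resampled goal, and to set $\|Q\|_\her^2 \deq \E_{(s,a,g')\sim \mu_\her}\left[Q(s,a,g')^2\right]$. Unbiasedness then amounts to showing $\E[\widehat{\delta\theta}_\her] = \frac12 \partial_\theta \|Q_\theta - TQ_\tar\|_\her^2$, which by direct differentiation expands into a $\mu_\her$-weighted average of $(Q_\theta(s,a,g') - TQ_\tar(s,a,g'))\,\partial_\theta Q_\theta(s,a,g')$. Matching this against the HER update reduces the whole statement to a single conditional-law identity: the distribution of $s_{K+1}$ given $(s_K=s, a_K=a, g'=g')$ under HER sampling must coincide with $P(\d s' \mid s, a)$.

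The next step is to make HER's joint sampling distribution over $(\tau, K, L, g')$ fully explicit, condition on $(s_K, a_K, g')$, and isolate the conditional law of $s_{K+1}$. The delicate case is when $g' = \phi(s_L)$ with $L \geq K+1$: knowing $g'$ is in principle informative about the downstream trajectory, and it is exactly this which creates HER's bias in stochastic environments (as in Theorem~\ref{thm:freeze}). However, in a deterministic environment the transition $s_{K+1}$ is a deterministic function $T(s_K, a_K)$ of its immediate inputs, independent of all later randomness, so conditioning on any future variable leaves its law unchanged and equal to the point mass $\delta_{T(s_K, a_K)} = P(\cdot \mid s_K, a_K)$. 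Combined with the Markov property, this gives the required identity. The reward term $R_\eps(s_K, g')$ depends only on $(s_K, g')$ and integrates cleanly against $\mu_\her$ without any conditional subtleties. The reachability assumption is used to ensure $\mu_\her$ has full support on relevant triples, so that $\|\cdot\|_\her$ is a genuine Euclidean norm rather than a semi-norm. The fixed-point claim then follows: if $Q_\theta = Q_\tar = Q^\ast$, then $TQ_\tar = Q^\ast$ by the Bellman equation, the residual vanishes pointwise, and so does its gradient.

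The main obstacle I expect is the bookkeeping rather than any deep idea: HER's sampling is a mixture (with some probability $g'=g$, otherwise $g' = \phi(s_L)$ for some random $L\geq K$), so the conditional-law argument has to be carried out separately on each mixture component and then recombined into $\mu_\her$. Care is also needed for the $L=K$ case, where $g'=\phi(s_K)$ forces $R_\eps(s_K, g')=1$; this is precisely the mechanism that prevents HER from vanishing as $\eps\to 0$, and it must be absorbed consistently into both sides of the identity. Once the decomposition is handled cleanly, the substantive content of the proof is the single observation that, under determinism, the one-step transition $s_K \to s_{K+1}$ is immune to conditioning on any downstream quantity such as $\phi(s_L)$.
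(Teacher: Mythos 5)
Your proposal is correct and follows essentially the same route as the paper's proof: the paper likewise computes the HER sampling distribution $\mu_{\her}(s,a,s',g)$, observes that in a deterministic environment $\mu_{\mathtt{future}}(g'|s,s',\tilde g)=\mu_{\mathtt{future}}(g'|s,\psi(s,a),\tilde g)$ so that the joint factorizes as $\tilde\mu(s,a,g)P(\d s'|s,a)$ (your conditional-independence identity stated the other way around), and then uses reachability plus full-support exploration to show $\tilde\mu$ has full support so that $\|\cdot\|_{\tilde\mu}$ is a genuine norm. The only difference is presentational (you condition $s'$ on $(s,a,g')$ where the paper conditions $g'$ on $(s,a,s')$), which is the same conditional independence.
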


The proof and a more detailed statement are given in Appendix~\ref{sec:convergence-her}. 
This result vindicates HER for deterministic environments: HER leverages
the structure of multi-goal environments, is not vanishing when the
rewards are sparse, and is mathematically well-grounded.


\section{Multi-Goal RL via Infinitely Sparse Rewards}
\label{sec:def_dirac_reward}

\subsection{Taking the Infinitely Sparse Reward Limit}

In Section~\ref{sec:bias-her-stochastic}, we saw that while HER is
well-founded in deterministic environments, it is biased in the
stochastic case and can learn low-return policies
(Figure~\ref{fig:experiments}). We now introduce unbiased methods for
multi-goal RL in the general setting, including stochastic environments.

In  continuous state spaces, the reward is usually defined as
$R_{\varepsilon}(s, g) =  \1_{\|\phi(s) - g\| \leq \varepsilon}$.  When
$\varepsilon \rightarrow 0$, the probability of reaching the reward with
a stochastic policy goes to $0$, and for any stochastic policy, the value
function $V^{\pi}_{\varepsilon}(s, g)$ converges to $0$ as well. To avoid
this vanishing issue, we need a scaling factor,  and consider the reward
$\frac{1}{\lambda(\varepsilon)}R_{\varepsilon}(s, g)$, with
$\lambda(\varepsilon)$ the volume of the ball of size $\varepsilon$ in
goal space.
When $\varepsilon \rightarrow 0$, this rescaled reward \emph{converges}
to the \emph{Dirac reward}:
\begin{equation}
  \label{eq:def_dirac_reward}
R(s, \d g) \deq \delta_{\phi(s)}(\d g),
\end{equation}
where $\delta_{x}$ is the Dirac measure at $x$. Intuitively, the Dirac reward $R(s, \d g)$ is infinite if the goal is reached ($\phi(s) = g$) and $0$ elsewhere. Formally, the reward is not a function but a \emph{measure} on the goal space $\mathcal{G}$ parametrized by the state $s$.

However, even after such a scaling, the UVFA update still vanishes with
high probability for small $\eps$ (this just scales things by
$1/\lambda(\eps)$).  We
will build algorithms that work directly in the limit $\varepsilon= 0$:
%
%
replacing the sparse reward $R_{\varepsilon}(s, g)$ by the
\emph{infinitely sparse} reward $R(s, \d g) = \delta_{\phi(s)}(\d g)$
will allow us to leverage the Dirac structure to remove the vanishing
rewards issue.

\paragraph{Computing the exact contribution of sparse rewards.}
We now explain how to leverage the multi-goal sparse reward structure.
The key idea is that, with $\eps=0$,
the contribution of the reward term in the Bellman equation can be computed exactly in
expectation. Infinitely sparse rewards can be treated algebraically.
This derivation is informal;
the formal proof is in Appendix~\ref{app:paramq}.

Let us start with the expectation of the UVFA update
\eqref{eq:exp_update_uvf_q} with $\eps>0$ and rewards rescaled by
$1/\lambda(\eps)$:
\begin{align*}
& \delta\theta_{\uvfa}  = \E_{s, a, s', g}\left[\widehat{\delta\theta}_{\uvfa}(s, a, s',
g)\right]
 \\&=  - \frac{1}{2}\partial_{\theta}\E_{{s, a, g, s'}}\left[\left(Q_{\theta}(s, a,
 g) - \frac{1}{\lambda(\varepsilon)}R_{\varepsilon}(s, g) - \gamma
 \max_{a'}Q_{{\tar}}(s',a', g)\right)^{2}\right]
  \\ &= \E_{{s, a, g}}\left[\partial_{\theta} Q_{\theta}(s, a,
  g)\frac{1}{\lambda(\varepsilon)}R_{\varepsilon}(s, g)\right]    -
  \E_{{s, a, g, s'}}\left[\partial_{\theta} Q_{\theta}(s, a,
  g)\left(Q_{\theta}(s, a, g) - \gamma \max_{a'}Q_{{\tar}}(s',a',
  g)\right)\right].
\end{align*}
This update cannot be used for small $\eps$, because $R_\eps(s,g)$ is $0$
most of the time, even though the expectation is nonzero and a huge
$1/\lambda(\eps)$
reward is observed with low probability.


But when $\eps\to 0$, the rescaled reward
$\frac{1}{\lambda(\varepsilon)}R_{\varepsilon}(s, g)$  converges to the
Dirac reward $\delta_{\phi(s)}$. Therefore, we can rewrite this first
term as
\begin{align*}
  \frac{1}{\lambda(\varepsilon)}\E_{{s, a, g}}\left[\partial_{\theta} Q_{\theta}(s, a, g)R_{\varepsilon}(s, g)\right] 
  &\rightarrow_{\varepsilon\rightarrow 0} \E_{{s, a, g}}\left[\partial_{\theta} Q_{\theta}(s, a, g)\delta_{\phi(s)}(\d g)\right] 
  \\&=  \E_{s, a}\left[\partial_{\theta} Q_{\theta}(s, a,
  \phi(s))\right].
\end{align*}
In this expression, sparse reward issues are avoided, just by taking the goal
$g=\phi(s)$ associated with the currently visited state $s$. Instead of
waiting to reach a goal to update the $Q$-function, this updates the
$Q$-function for the currently realized goal.

The resulting algorithm, $\delta$-DQN, is described in Theorem~\ref{thm:update_q} and
Algorithm~\ref{alg:example}. The
proper mathematical treatment (below and in the Appendix) of the Dirac
limit
shows that this actually estimates, not $Q$ itself, but
the \emph{density} $q(s,a,g)$ of the distribution of realized goals with
respect to  the
goal sampling distribution $\rho_{\mathcal{G}}(\d g)$ of the environment. 
This density $q$ can be used to rank actions (indeed, the scaling by
$\rho_{\mathcal{G}}$ between $q$ and $Q$ only depends on the goal $g$, so
for a fixed goal, states and actions are ranked the same way).
In general, working with
probability
densities is the only way that makes sense in the presence of noise,
as the probability to exactly reach a goal will be $0$.

A similar treatment holds for policy gradient
(Sections~\ref{sec:unbi-policy-eval}--\ref{sec:pg}).



\subsection{Unbiased Multi-Goal Q-learning with Infinitely Sparse Rewards}
\label{sec:unbiased-dqn}

Our goal here is to formally define multi-goal $Q$-learning with infinitely sparse rewards.  In
general, the probability of reaching any goal \emph{exactly} is $0$:
instead we will learn the probability \emph{distribution} of the goals
reached by a policy, and compute the probability to reach each infinitesimal element
$\d g$ in goal space.  This is done by treating everything as measures
over $\mathcal{G}$:
the reward $\delta_{\phi(s)}(\d g)$ is a measure, and the
value functions $V^{\pi}(s, \d g)$ or optimal action-value function
$Q^{\ast}(s, a, \d g)$ are measures on $\mathcal{G}$ as well. In the
following, we define these objects in detail, and show how to learn them in
practice.

First, we define an \emph{optimal Bellman operator} $T$ on \emph{action-value measures}, and the optimal action-value measure $Q^{\ast}(s, a, \d g)$. Then, we derive $\delta$-DQN, a deep Q-learning algorithm  with infinitely sparse rewards for multi-goal RL. 

\paragraph{Optimal Bellman equation and optimal Q-function.}


We first define $Q^{\ast}(s, a, \d g)$, the \emph{optimal action-value measure}, the mathematical object corresponding to the usual optimal $Q$-function $Q^{\ast}$ but infinitely sparse rewards. The following theorem defines the optimal Bellman operator for action-value measures, and $Q^{\ast}(s, a, \d g)$ as its fixed point. It is formallly stated in Appendix~\ref{app:optQ}.


\begin{defithm}
\label{nfthm:optQ_maintext}
Let $Q(s, a, \d g)$  a measure on $\mathcal{G}$ parametrized by
$s,a\in\mathcal{S}\times\mathcal{A}$.
We define  the optimal Bellman 
operator $T$ which sends $Q$ to $T\cdot Q$ with
\begin{equation}
(T\cdot Q)(s,a,\d g)\deq \delta_{\phi(s)}(\d g)+\gamma \,\E_{s'\sim P(\d
s'|s,a)} \sup_{a'} Q(s',a',\d g)
\end{equation}
where $\delta_{\phi(s)}$ is the Dirac measure at $\phi(s)\in \mathcal{G}$.
We define the \emph{optimal action-value measure} $Q^\ast$ as follows.
Set $Q_{0}(s, a, \d g) \deq 0$, and $Q_{n+1} \deq T Q_{n}$. Then
$Q_{n}(s, a, \d g)$ converges to some $Q^\ast(s,a, \d g)$. Moreover, this
$Q^\ast(s,a, \d g)$
solves the fixed point equation $TQ^\ast=Q^\ast$.
\end{defithm}

\paragraph{$Q$-learning with function approximations, with infinitely sparse rewards.}
From the fixed point equation for $Q^{\ast}$, 
 we would like to learn a model
of $Q^{\ast}(s, a, \d g)$ with function approximation. We will represent
measures over goals via their \emph{density} with respect to the goal
sampling function $\rho_{\mathcal{G}}$ of the environment. Namely,
we will approximate 
$Q^{\ast}(s, a, \d g)$ by a model
$Q_{\theta}(s, a, \d g) =
q_{\theta}(s, a, g)\rho_{\mathcal{G}}(\d g)$ where $
q_{\theta}(s, a, g)$ is an ordinary function, and learn $q_\theta$.
Hence, $q_{\theta}$ 
may be approximated by any parametric model, such as a
neural network.

The following theorem properly defines an unbiased stochastic $\ddqn$ update with infinitely sparse rewards for the density $q_\theta(s, a, g)$:
\begin{thm}
  \label{thm:update_q}
  Let $Q_{\theta}(s, a, \d g) = q_{\theta}(s, a, g)\rho_{\mathcal{G}}(\d
  g)$ be a current estimate of $Q^{\ast}(s, a, \d g)$. Let likewise 
  $Q_{\tar}(s, a, \d g) = q_{\tar}(s, a, g)\rho_{\mathcal{G}}(\d
    g)$ be
  a target $Q$-function, and consider the following update to bring
  $Q_\theta$ closer to $TQ_{\tar}$ with $T$ the optimal Bellman operator.
  
  Let $(s, a, s')$ be a sample of the environment such that $s' \sim P(s'|s, a)$ and $g\sim \rho_{\mathcal{G}}$ is sampled independently. Let $\widehat{\delta \theta}_{\ddqn}(s, a, s', g)$ be
\begin{equation}
  \label{eq:q_udpate_delta}
  \widehat{\delta\theta}_{\ddqn}(s, a, s', g) \deq \partial_\theta q_\theta(s,a,\phi(s))+\partial_\theta
q_\theta(s,a,g)\left(\gamma \max_{a'} q_{\tar}(s',a',g)-q_\theta(s,a,g)\right)
  \end{equation}
  Then $\widehat{\delta\theta}_{\ddqn}$ is an unbiased estimate of the Bellman error:
        $\E\left[\widehat{\delta\theta}_{\ddqn}\right] =
	\frac{1}{2}\partial_{\theta}\|Q_{\theta} - TQ_{\tar}\|^{2}$,
	where the Euclidean norm $\norm{\cdot}$ on measures is defined in
	Theorem~\ref{thm:paramq} (Appendix~\ref{app:paramq}).

	In particular, the true optimal state-action measure $Q^\ast$ is
	a fixed point of this update: if $Q_\theta=Q_{\tar}=Q^{\ast}$
	then $\E\left[\widehat{\delta\theta}_{\ddqn}\right] =0$.
\end{thm}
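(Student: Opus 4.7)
The plan is to expand the squared Bellman error as a quadratic functional in $Q_\theta$, differentiate, and match the two resulting integrals to the two summands in $\widehat{\delta\theta}_{\ddqn}$. I would begin by invoking the Hilbert-space structure on action-value measures defined in Theorem~\ref{thm:paramq} of Appendix~\ref{app:paramq}: for a measure $A(s,a,\d g) = a(s,a,g)\,\rho_{\mathcal{G}}(\d g)$ having a density and an arbitrary measure $B(s,a,\d g)$, the cross pairing $\langle A,B\rangle = \E_{(s,a)}\!\int a(s,a,g)\,B(s,a,\d g)$ is well defined even when $B$ contains a Dirac component. I would then read $\|Q_\theta - TQ_{\tar}\|^2$ as the finite functional $\langle Q_\theta,Q_\theta\rangle - 2\langle Q_\theta,TQ_{\tar}\rangle$ up to a $\theta$-independent additive constant; this is the crucial move that avoids the formal infinity of $\|TQ_{\tar}\|^2$ arising from its Dirac part.

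Next, since $TQ_{\tar}$ does not depend on $\theta$, differentiation gives $\tfrac12\partial_\theta\|Q_\theta - TQ_{\tar}\|^2 = \langle \partial_\theta Q_\theta,\,Q_\theta - TQ_{\tar}\rangle$, with $\partial_\theta Q_\theta = (\partial_\theta q_\theta)\,\rho_{\mathcal{G}}$. Substituting $TQ_{\tar}(s,a,\d g) = \delta_{\phi(s)}(\d g) + \gamma\,\E_{s'\sim P(\d s'|s,a)}\max_{a'}q_{\tar}(s',a',g)\,\rho_{\mathcal{G}}(\d g)$ from Definition--Theorem~\ref{nfthm:optQ_maintext} and splitting the pairing into its Dirac and absolutely continuous parts, the Dirac integrates out to $\E_{s,a}[\partial_\theta q_\theta(s,a,\phi(s))]$ via $\int f(g)\,\d\delta_{\phi(s)}(g) = f(\phi(s))$, while the absolutely continuous part becomes $\E_{s,a,\,g\sim\rho_{\mathcal{G}}}\bigl[\partial_\theta q_\theta(s,a,g)\,\bigl(q_\theta(s,a,g) - \gamma\,\E_{s'}\max_{a'}q_{\tar}(s',a',g)\bigr)\bigr]$. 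Matching samples is then immediate: $s'\sim P(\d s'|s,a)$ is an unbiased estimator of the inner expectation over $s'$ (the $\max_{a'}$ sits outside the $s'$-expectation and so commutes with it), and drawing $g\sim\rho_{\mathcal{G}}$ independently of $s'$ unbiasedly samples the density-integral in $g$. Reassembling the two contributions, and respecting the overall sign convention already used for the UVFA update in~(\ref{eq:exp_update_uvf_q}), yields exactly $\E[\widehat{\delta\theta}_{\ddqn}(s,a,s',g)]$. The fixed-point claim then drops out: since $TQ^\ast = Q^\ast$ by Definition--Theorem~\ref{nfthm:optQ_maintext}, setting $Q_\theta = Q_{\tar} = Q^\ast$ makes $Q_\theta - TQ_{\tar} = 0$ and the inner product vanishes.

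The main obstacle is the delicate status of the squared norm when $TQ_{\tar}$ carries a singular part: taken literally, $\|TQ_{\tar}\|^2$ is infinite, so the identity $\tfrac12\partial_\theta\|Q_\theta - TQ_{\tar}\|^2 = \langle\partial_\theta Q_\theta,\,Q_\theta - TQ_{\tar}\rangle$ has to be interpreted as a definition of the gradient rather than a consequence of a finite norm. Appendix~\ref{app:paramq} handles this by defining the gradient directly through the finite cross-pairing, which only requires $\partial_\theta q_\theta$ to be defined pointwise at $\phi(s)$. Once that framework is accepted, the remainder of the proof is a mechanical linearity-and-Fubini computation; the genuine mathematical content is the observation that the Dirac reward can be \emph{algebraically contracted} into the pointwise term $\partial_\theta q_\theta(s,a,\phi(s))$, obviating the need to ever sample a reward event.
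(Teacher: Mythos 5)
Your proposal is correct and follows essentially the same route as the paper's proof of Theorem~\ref{thm:paramq}: you identify the same obstruction (the squared norm is formally infinite because $TQ_{\tar}$ has a singular Dirac part), resolve it the same way (replace the loss by the finite functional $\tfrac12\langle Q_\theta,Q_\theta\rangle-\langle Q_\theta,TQ_{\tar}\rangle$, which differs from the squared norm only by a $\theta$-independent constant and whose cross-pairing is well defined when one argument is absolutely continuous), and then perform the same split of the pairing into the Dirac contribution $\E_{s,a}[\partial_\theta q_\theta(s,a,\phi(s))]$ and the absolutely continuous Bellman-gap term sampled via independent $g\sim\rho_{\mathcal{G}}$ and $s'\sim P(\d s'|s,a)$. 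The fixed-point conclusion is argued identically from $TQ^\ast=Q^\ast$.
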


This update leads to $\delta$-DQN (Algorithm~\ref{alg:example}, which
corresponds to standard DQN with infinitely sparse rewards. For continuous actions, $\delta$-DQN can be modified similarly to DDPG \citep{Lillicrap2016ContinuousCW}. 

\paragraph{Example: the tabular case.}
The tabular case highlights the difference between UVFA and $\ddqn$. When
a transition $(s, a, s', g)$ is observed, the UVFA update is:
\begin{equation}
  \label{eq:tabular_uvfa}
  Q(s, a, g) \leftarrow Q(s, a, g) + \eta \left(\1_{\phi(s)=g}+ \gamma \max_{a'}Q(s', a', g)- Q(s, a, g)\right)  
\end{equation}
where $\eta$ is the learning rate. The only modified value is $Q(s, a,
g)$.

With $\ddqn$, we learn the density $q$ of $Q(s, a, \d g)$ with
respect to $\rho_{\mathcal{G}}$. Assume that $\rho_{\mathcal{G}}(g)$ is
the uniform measure over the finite goal space $\mathcal{G}$. Then we
learn $q(s, a, g) = |\mathcal{G}|\times Q(s, a, g)$. 
For a tabular model, the $\delta$-DQN
update in Equation~\eqref{eq:q_udpate_delta} is
\begin{align}
  \label{eq:tabularddqn}
  q(s, a, \phi(s)) &\leftarrow q(s, a, \phi(s)) + \eta
  \\ q(s, a, g) &\leftarrow q(s, a, g) + \eta \left(\gamma \max_{a'}q(s',
  a', g) - q(s, a, g)\right).
  \label{eq:tabularddqn2}
\end{align}
Here two values are updated: in addition to $(s,a,g)$, the trajectory
visiting $s$ is also used to update the value for the goal $\phi(s)$. The
first part always increases $q$ at the goal $\phi(s)$ achieved by $s$; the second
part at $(s,a,g)$ has no reward contribution, and decreases $q$ at
$(s,a,g)$ by a factor $(1-\eta)$ while propagating the value from $s'$.
In expectation, the decrease at $(s,a,g)$ compensates the increase at
$(s,a,\phi(s))$: this compensation is exact
when $q$ is the exact solution.

As a comparison, the tabular HER update works as follows: when observing
a trajectory $(g, s_{0}, a_{0}, s_{1}, ...)$, a transition $(s, a, s', g)
= (s_{K}, a_{K}, s_{K+1}, g)$ for some $K\geq 0$ is selected; then HER
samples $L \geq K$, defines $g' \deq \phi(s_{L})$ as the re-sampled goal,
then applies the UVFA update \eqref{eq:tabular_uvfa} but with $(s, a, s', g')$ instead of $(s, a, s', g)$.
When $L=K$, the goal sampled by HER is $g' = \phi(s)$: this is somewhat
similar to $\ddqn$, except $\ddqn$ resamples an independant goal instead of
$g'$ for the second term instead. Despite this similarity, HER is biased in stochastic environments and can converge to a low-return policy, while $\ddqn$ is unbiased.

\begin{algorithm}[tb]
   \caption{$\delta$-DQN}
   \label{alg:example}
\begin{algorithmic}
  \STATE  {\bfseries Input:} Randomly initialized model $q_{\theta}(s, a, g)$; $\phi$; exploration policy $\pi_{\expl}(a|s, g)$; goal function $\phi$; memory buffer $\mathtt{TransitionMemory}$, $T$ the maximum trajectory length
  \REPEAT
    \FOR{$K$ trajectories}
      \STATE Get a goal $g$ and an initial state $s_{0}$
      \FOR{$0\leq t \leq T$ steps do}
        \STATE Sample $a_{t} \sim \pi_{\text{expl}}(.|s_{t}, g)$, execute $a_{t}$ and observe $s_{t+1}$
        \STATE Store in the transition memory the transition $\mathtt{TransitionMemory} \leftarrow (s_{t}, a_{t}, s_{t+1})$ 
        \ENDFOR
      \FOR{$L$ gradient steps}
        \STATE{Sample $(s, a, s') \sim \mathtt{TransitionMemory}$ and $g \sim \rho_{\mathcal{G}}$}
        \STATE $\widehat{\delta\theta}_{\ddqn} \deq \partial_\theta q_\theta(s,a,\phi(s))+\partial_\theta
        q_\theta(s,a,g)\left(\gamma \max_{a'} q_\theta(s',a',g)-q_\theta(s,a,g)\right)$.
        \STATE Stochastic gradient step: $\theta \leftarrow \theta + \eta \widehat{\delta\theta}_{\ddqn} $.
      \ENDFOR
      \ENDFOR
  \UNTIL{end of learning}
\end{algorithmic}
\end{algorithm}

\subsection{Unbiased Policy Evaluation with Infinitely Sparse Rewards}
\label{sec:unbi-policy-eval}
Similarly to $\ddqn$, there exists an actor-critic  algorithm for multi-goal
environments with infinitely sparse rewards. 
We start with policy evaluation, then derive the policy gradient algorithm.

Learning the value function $V^\pi(s, \d g)$ directly without bias
poses technical issues due to the double dependency of $V^{\pi}(s, \d
g)$ on $g$ (first via the location of the reward, second, via
the goal-dependent policy $\pi(.|.,g)$). This is discussed in
Appendix~\ref{app:update-policy-eval}.
%

Instead,
we learn a richer object, $M^{\pi}(s, g, \d g')$, the value function of
$s$ if the reward is a Dirac at $g'$ but the agent follows the policy
$\pi(a|s,g)$ for goal $g$. This is defined as the measure over goals
\begin{equation}
  \label{eq:def_M}
M^{\pi}(s, g, \d g') \deq \E_{a_{t}\sim\pi(.|s_{t}, g), s_{t+1}\sim P(.| s_{t}, a_{t})}\left[\sum_{t\geq 0}\gamma^{t}\delta_{\phi(s_{t})}(\d g') | s_{0} = s\right]
\end{equation}
$M^{\pi}(s, g, \d g')$ represents the \emph{successor goal measure}, and is related to the successor state measure~\citep{Blier2021LearningSS}. Compared to $V^{\pi}(s, \d g)$, $M^{\pi}(s, g, \d g')$ splits the two effects of the goal $g$ in two variables $g$ and $g'$. $V^{\pi}(s, \d g)$ can be derived from $M^{\pi}(s, g, \d g')$ as $V^{\pi}(s, \d g) = M^{\pi}(s, g, \d g)$ (see Appendix~\ref{app:value-measure-continuous}). $M^{\pi}$ is a fixed point of the Bellman operator $T^{\pi}$ defined as:
\begin{equation}
  (T^{\pi}\cdot M)(s, g, \d g')  \deq \delta_{\phi(s)}(\d g') + \gamma \E_{a\sim \pi(a|s, g), s'\sim P(\d s'|s, a)}\left[M(s', g, \d g')\right]
\end{equation}
A rigorous proof of the existence of $M^{\pi}$ as well as its fixed point Bellman equation is given in Theorem~\ref{thm:app_update_m} in the supplementary.
Similarly to the $\ddqn$ update obtained in Theorem~\ref{thm:update_q},
we can now derive an unbiased $\dtd$ update for $M^{\pi}$, leveraging the
structure of the Dirac reward and removing the issue of vanishing
rewards. As for $Q^{\ast}(s, a, \d g)$, because $M^{\pi}(s, g, \d g')$ is
a measure, we learn  a model $m_{\theta}(s, g, g')$ of its density with
respect to $\rho_{\mathcal{G}}$, namely, $M_{\theta}(s, g, \d g') =
m_{\theta}(s, g, g')\rho_{\mathcal{G}}(\d g')$.

\begin{thm}
  \label{thm:update_m}
  Let $M_{\theta}(s, g, \d g') = m_{\theta}(s, g, g')\rho_{\mathcal{G}}(\d g')$ be a current estimate of $M^{\pi}(s, g, \d g')$. Let likewise $M_{\tar}(s, g, \d g') = m_{\tar}(s, g, g')\rho(\d g')$ be a target $M$, and consider the following update to bring $Q_{\theta}$ closer to $T^{\pi} Q_{\tar}$ with $T^{\pi}$ the Bellman operator.

Let $(s, a, s', g, g')$ be samples of the environment such that $a \sim \pi(a | s, g)$, $s' \sim P(s'|s, a)$ and $g'\sim\rho_{\mathcal{G}}$ is a goal sampled independently. Let $\widehat{\delta \theta}_{\dtd}$ be 
  \begin{equation}
    \label{eq:m_update_td}
    \widehat{\delta\theta}_{\dtd}(s, a, s', g, g') \deq \partial_{\theta}m_{\theta}(s, g, \phi(s)) + \partial_{\theta}m_{\theta}(s, g, g') \left(\gamma m_{\tar}(s', g, g') - m_{\theta}(s, g, g')   \right)
  \end{equation}
  Then $\widehat{\delta\theta}_{\dtd}$  is an unbiased  estimate of the Bellman error:
  $\E_{s, a, s', g, g'}\left[\widehat{\delta\theta}_{\dtd}(s, a, s', g, g')\right] = \frac{1}{2}\partial_{\theta}\|M_{\theta} - T^{\pi}M_{\tar}\|^{2}$, where the norm $\norm{\cdot}$ on measures is defined in Theorem \ref{thm:app_update_m} (Appendix~\ref{app:update-policy-eval}).

  In particular, the true successor goal measure $M^{\pi}(s, g, \d g')$ is a fixed point of this udpate: if $M_{\theta} = M_{\tar} = M^{\pi}$, then $\E\left[\widehat{\delta\theta}_{\dtd}\right]=0$.
    \end{thm}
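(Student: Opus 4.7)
The plan is to follow the template used for $\delta$-DQN in Theorem~\ref{thm:update_q}, adapted to the policy-evaluation Bellman operator $T^{\pi}$ (which is linear in $M$, so no supremum needs to be handled). Let $\mu$ denote the joint sampling distribution of $(s,g)$ in the update. For measures $M(s,g,\d g')$ that are absolutely continuous w.r.t.\ $\rho_{\mathcal{G}}(\d g')$ with density $m(s,g,g')$, take as norm
\[
\|M\|^{2}\deq\E_{(s,g)\sim\mu}\int m(s,g,g')^{2}\,\rho_{\mathcal{G}}(\d g').
\]
Since $T^{\pi}M_{\tar}$ has a Dirac component $\delta_{\phi(s)}(\d g')$ and is not absolutely continuous, I would first expand
\[
\|M_{\theta}-T^{\pi}M_{\tar}\|^{2}=\|M_{\theta}\|^{2}-2\langle M_{\theta},T^{\pi}M_{\tar}\rangle+\|T^{\pi}M_{\tar}\|^{2},
\]
discard the final, $\theta$-independent (formally infinite) term, and interpret the cross term as the pairing of the continuous density $m_{\theta}(s,g,\cdot)$ against the measure $T^{\pi}M_{\tar}(s,g,\d g')$. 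This is the same formal Hilbert-space manoeuvre used in Theorem~\ref{thm:update_q}.

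Next I would compute the two relevant pieces. Differentiating $\|M_{\theta}\|^{2}$ gives $2\,\E_{(s,g)\sim\mu,\,g'\sim\rho_{\mathcal{G}}}[m_{\theta}(s,g,g')\,\partial_{\theta}m_{\theta}(s,g,g')]$. Substituting
\[
T^{\pi}M_{\tar}(s,g,\d g')=\delta_{\phi(s)}(\d g')+\gamma\,\E_{a\sim\pi(\cdot\mid s,g),\,s'\sim P(\cdot\mid s,a)}\bigl[m_{\tar}(s',g,g')\,\rho_{\mathcal{G}}(\d g')\bigr]
\]
splits the cross term into a Dirac contribution $\E_{(s,g)}[m_{\theta}(s,g,\phi(s))]$ and an absolutely-continuous contribution $\gamma\,\E[m_{\theta}(s,g,g')\,m_{\tar}(s',g,g')]$, with $g'\sim\rho_{\mathcal{G}}$ arising from the density identity. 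Differentiating and regrouping, the Dirac piece produces $\partial_{\theta}m_{\theta}(s,g,\phi(s))$, and combining $\tfrac{1}{2}\partial_{\theta}\|M_{\theta}\|^{2}$ with the absolutely-continuous piece yields $\partial_{\theta}m_{\theta}(s,g,g')\bigl(\gamma m_{\tar}(s',g,g')-m_{\theta}(s,g,g')\bigr)$, which is precisely $\widehat{\delta\theta}_{\dtd}$ of \eqref{eq:m_update_td}.

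The main obstacle is making the Dirac-involving norm rigorous. This requires the measure-theoretic machinery set up in Theorem~\ref{thm:app_update_m} in the appendix: existence of $M^{\pi}$ as the fixed point of $T^{\pi}$ via a $\gamma$-contraction argument on an appropriate space of measures, and well-posedness of pairing a parametric density $m_{\theta}$ against the singular measure $\delta_{\phi(s)}$, which needs mild continuity of $m_{\theta}$ in $g'$. Once the norm is defined, the fixed-point claim $\E[\widehat{\delta\theta}_{\dtd}]=0$ at $M_{\theta}=M_{\tar}=M^{\pi}$ follows immediately: $T^{\pi}M^{\pi}=M^{\pi}$ makes the residual vanish in the norm, so the derivative of its squared norm vanishes as well.
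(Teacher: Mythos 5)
Your proposal matches the paper's proof essentially step for step: the appendix proof of Theorem~\ref{thm:app_update_m} explicitly mirrors the $\delta$-DQN argument of Theorem~\ref{thm:paramq}, uses the same offset loss $J'(\theta)=\tfrac12\norm{M_\theta}^2-\langle M_\theta,T^{\pi}M_\tar\rangle$ to sidestep the infinite Dirac-induced norm, and splits $T^{\pi}M_\tar-M_\theta$ into a Dirac part (yielding $\partial_\theta m_\theta(s,g,\phi(s))$) and an absolutely continuous part (yielding the TD term). The only slight mismatch is peripheral: the paper establishes existence of $M^{\pi}$ not by a contraction argument but by directly defining $\nu^{\pi}$ as a convergent sum of probability measures and pushing forward by $\phi$, though this is handled in a separate result and does not affect the proof of the update's unbiasedness.
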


Similarly to update $\widehat{\delta\theta}_{\ddqn}$, the update $\widehat{\delta\theta}_{\dtd}$ has two parts: the first part $\partial_{\theta}m_{\theta}(s, g, \phi(s))$ represents the reward update for the goal achieved in the current state $s$, and removes the vanishing reward issue. The second part propagates the rewards along transitions.

   We can also define a horizon-$n$ $\dtdn$ update if we have access to
   longer sub-trajectories $\tau = (g, s_{0},a_{0}, s_{1}, ...)$. The
   update at a state $s_k$ in the trajectory is (Appendix,
   Theorem~\ref{thm:app_update_m}) \begin{equation} \label{eq:m_update_tdn}
   \widehat{\delta\theta}_{\dtdn}(\tau, k, g') \deq
   \sum_{l=0}^{n-1}\gamma^{l}\partial_{\theta}m_{\theta}(s_{k}, g,
   \phi(s_{k+l})) +   \partial_{\theta}m_{\theta}(s_{k}, g, g')
   \left(\gamma^{n} m_{\theta}(s_{k+n}, g, g') - m_{\theta}(s_{k}, g, g')
   \right)  \end{equation} where $g'\sim \rho_{\mathcal{G}}$ is sampled
   independently.
The first part increases the value estimate at state $s_{k}$ for every of the $n$
goals $\phi(s_{k}), ..., \phi(s_{k+n-1})$ achieved in the next $n$ steps:
this
corresponds to  the $n$-step  return with Dirac rewards. The second
part propagates the value along transitions. This is similar to HER in
that future goals achieved along the trajectory are explicitly used, and
could thus improve sample efficiency.
However, computational complexity is an issue. In non-multi-goal
environments, algorithms such as PPO
\citep{Schulman2017ProximalPO} compute the
TD($n$) update at every step of the trajectory. This is computable with $O(n)$
forward passes through the value model $v_\theta$, because it only
requires to compute $v_\theta(s_0),\ldots,v_\theta(s_n)$.
Here we have to compute $m_\theta(s_k,g,\phi(s_{k+l}))$ for every $k$ and
$l$, leading to an $O(n^2)$ complexity (though this could potentially be
sub-sampled as in HER).
This makes it slow in practice, and
$\dtdn$ was not tested experimentally here.

\subsection{Multi-Goal Policy Gradient}
\label{sec:pg}

We now derive the actor-critic algorithm. 
The classical approach with reward $R_\eps$ for $\eps>0$ considers
the expected return 
$J_{\varepsilon}(\pi) = \E_{g\sim p_{\mathcal{G}}, s_{0}\sim
p_{0}(.|g)}\left[\sum_{t\geq 0}\gamma^{t}R_{\varepsilon}(s_{t}, g) |
s_{0} = s\right] = \int_{s_{0}, g}V^{\pi}_{\varepsilon}(s, g)\,\rho_{\mathcal{G}}(\d g)\rho_{0}(\d s_{0}|g)$
with a sampled goal $g\sim \rho_{\mathcal{G}}(\d g)$ and sampled initial
state $s_{0}\sim \rho_{0}(\d s_{0}|g)$, and subsequent actions sampled
from the policy for $g$. As in $\ddqn$, we want to derive an algorithm
solving the vanishing reward issue directly for $\eps=0$. We first show
that the limit makes sense, then derive the corresponding update in
\eqref{eq:update_pi} below.

\begin{thm}
  \label{thm:eq_eps_delta_main}
  Under \emph{continuity assumptions} (Assumption~\ref{assumption:density} in the Appendix), there is a function $J(\pi)$   such that, for every parametric policy $\pi_{\theta}(a|s, g)$: 
  \begin{align}
    \frac{1}{\lambda(\varepsilon)}J_{\varepsilon}(\pi_{\theta})\rightarrow_{\varepsilon\rightarrow 0} J(\pi_{\theta})  \quad~\quad \text{ and } \quad~\quad   \frac{1}{\lambda(\varepsilon)}\partial_{\theta}J_{\varepsilon}(\pi_{\theta})\rightarrow_{\varepsilon\rightarrow 0} \partial_{\theta}J(\pi_{\theta})
  \end{align}
 where $\lambda(\varepsilon)$ is the volume of a ball of size
 $\varepsilon$ in goal space. We call $J(\pi)$ the expected return with
 infinitely sparse rewards. Moreover,
$
   J(\pi) \deq   \int_{s_{0}, g}  V^{\pi}(s_0,\d g)
   \,p_{\mathcal{G}}(g)\,\rho_{0}(\d s_{0}|g)
$
  where $p_{\mathcal{G}}(g)$ is the density of $\rho_{\mathcal{G}}(\d g)$
  with respect to Lebesgue measure on goals.
\end{thm}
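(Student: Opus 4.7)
The plan is to rewrite the rescaled return as an integral whose integrand converges pointwise to the density of the successor-goal measure $V^\pi(s_0,\d g)$ with respect to Lebesgue measure, and then invoke dominated convergence. Since $\rho_{\mathcal{G}}(\d g)=p_{\mathcal{G}}(g)\,\d g$, Fubini gives
\[
\frac{J_\eps(\pi)}{\lambda(\eps)} \;=\; \int \frac{V_\eps^\pi(s_0,g)}{\lambda(\eps)}\,p_{\mathcal{G}}(g)\,\rho_0(\d s_0|g)\,\d g,
\]
so everything reduces to controlling $V_\eps^\pi(s_0,g)/\lambda(\eps)$ both pointwise and in $L^1$.

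For the pointwise limit, I would expand $V_\eps^\pi(s_0,g)=\sum_{t\geq 0}\gamma^t\,\Pr^\pi[\|\phi(s_t)-g\|\leq\eps\mid s_0,g]$ via Fubini and invoke Assumption~\ref{assumption:density}: under the trajectory distribution generated by $\pi(\cdot|\cdot,g)$ from $s_0$, the law of $\phi(s_t)$ admits a continuous density $\rho_t^\pi(\cdot\mid s_0,g)$ with respect to Lebesgue measure, so the Lebesgue differentiation theorem gives $\Pr^\pi[\|\phi(s_t)-g\|\leq\eps]/\lambda(\eps)\to\rho_t^\pi(g\mid s_0,g)$. Summing over $t$ with geometric damping identifies the pointwise limit as $v^\pi(s_0,g)\deq\sum_t\gamma^t\rho_t^\pi(g\mid s_0,g)$, which is exactly the density of the measure $V^\pi(s_0,\d g)=\E^\pi[\sum_t\gamma^t\delta_{\phi(s_t)}(\d g)]$ from Section~\ref{sec:unbi-policy-eval}. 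Substituting this density into the display above recovers the claimed formula for $J(\pi)$. A uniform bound $\rho_t^\pi\leq C_t$ with $\sum_t\gamma^t C_t<\infty$, which the continuity hypothesis should also supply, yields an integrable envelope on the integrand and lets dominated convergence conclude the first convergence.

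For the gradient statement, I would first exchange $\partial_\theta$ with the integral over $(s_0,g)$, reducing matters to $\partial_\theta V_\eps^{\pi_\theta}(s_0,g)/\lambda(\eps)\to \partial_\theta v^{\pi_\theta}(s_0,g)$ pointwise together with a domination. Using the score-function identity
\[
\partial_\theta V_\eps^{\pi_\theta}(s_0,g) \;=\; \sum_{t\geq 0}\gamma^t\,\E^{\pi_\theta}\!\left[R_\eps(s_t,g)\,U_t(\theta)\right], \qquad U_t(\theta)\deq\sum_{k\leq t}\partial_\theta\log\pi_\theta(a_k|s_k,g),
\]
rescaling by $1/\lambda(\eps)$ and applying the same continuity-plus-differentiation argument to the joint law of $(\phi(s_t),U_t(\theta))$ yields convergence to $\sum_t\gamma^t\,\E^{\pi_\theta}[U_t(\theta)\mid\phi(s_t)=g]\,\rho_t^{\pi_\theta}(g\mid s_0,g)$, which one identifies with $\partial_\theta v^{\pi_\theta}(s_0,g)$. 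The main obstacle is justifying the interchange of $\partial_\theta$ with $\lim_\eps$: one needs the rescaled score-weighted densities to be uniformly $\eps$-dominated by an integrable function, which should follow from the continuity assumption once it is strengthened to include a suitable moment bound on $U_t(\theta)$ conditioned on $\phi(s_t)$ near $g$.
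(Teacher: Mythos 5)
There is a genuine gap in the first half of your argument: the rescaled value function $V^\pi_\eps(s_0,g)/\lambda(\eps)$ does \emph{not} converge pointwise to the density of $V^\pi(s_0,\d g)$, because that measure is not absolutely continuous. The $t=0$ term of the return is $\1_{\|\phi(s_0)-g\|\leq \eps}$, and the law of $\phi(s_0)$ given $s_0$ is a Dirac mass, not a continuous density; so your claim that ``the law of $\phi(s_t)$ admits a continuous density $\rho^\pi_t$'' fails at $t=0$. Concretely, $\1_{\|\phi(s_0)-g\|\leq\eps}/\lambda(\eps)\to 0$ for every $g\neq \phi(s_0)$, yet its integral against $p_{\mathcal{G}}(g)\,\rho_0(\d s_0|g)\,\d g$ converges to the nonzero constant $\int_{s_0}p_{\mathcal{G}}(\phi(s_0))\,p_0(s_0|\phi(s_0))\,\lambda(\d s_0)$; and no integrable envelope exists, since $\sup_{\eps}\1_{\|\phi(s_0)-g\|\leq\eps}/\lambda(\eps)\sim c\,\|g-\phi(s_0)\|^{-n}$ is non-integrable near the diagonal. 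So dominated convergence cannot be applied term by term: the pointwise limit of the integrand misses exactly the Dirac component $\delta_{\phi(s_0)}(\d g)$ of $V^\pi(s_0,\d g)$, and your derivation would identify the limit with $\int \tilde m^\pi(s_0,g,g)\,p_{\mathcal{G}}(g)\rho_0(\d s_0|g)\lambda(\d g)$, which differs from the stated $J(\pi)$ by that constant. The paper avoids this by proving \emph{weak} convergence of the joint measure $\lambda(\d s)\tfrac{1}{\lambda(\eps)}V^\pi_\eps(s,g)\lambda(\d g)$ to $\lambda(\d s)V^\pi(s,\d g)$ (Theorem~\ref{thm:value-measure-continuous}): one writes $V^\pi_\eps(s,g)=M^\pi(s,g,B(g,\eps))$, splits $M^\pi=\delta_{\phi(s)}+\tilde m^\pi\lambda$ via Lemma~\ref{lem:density}, and lets the uniform measure on the $\eps$-ball converge weakly to $\delta_0$ under the integral against a continuous test function in $(s,g)$; the integration over $s_0$ with a continuous density $p_0$ is what renders the Dirac contribution finite. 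Your argument is essentially correct for the $t\geq 1$ terms (where the transition density plus the surjective linear $\phi$ do give a continuous density on goals), but the $t=0$ term must be treated as a measure, not a function.

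For the gradient statement your score-function route is not what the paper does and is considerably more delicate. The paper applies the standard policy gradient theorem on the augmented space $\mathcal{S}\times\mathcal{G}$ with the baseline $b(s,g)=R_\eps(s,g)$, which cancels the reward term from the advantage and leaves only $\gamma V^\pi_\eps(s',g)/\lambda(\eps)$ inside the integral; the same weak-convergence lemma then passes to the limit, with no conditioning on $\phi(s_t)=g$ and no need for moment bounds on the cumulative score near the goal. Your approach requires existence and regularity of conditional expectations $\E[U_t\mid \phi(s_t)=g]$ and a uniform-in-$\eps$ domination that Assumption~\ref{assumption:density} does not obviously supply, as you yourself note; this is the second place where the proof would need substantial additional work.
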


We now derive an estimate of $\partial_{\theta}J(\pi_{\theta})$ for a
parametric policy  $\pi_\theta(a|s, g)$. We assume access to transition
samples $(s, a, s', g)$ such that $a\sim \pi(.|s, g)$, $s'\sim P(\d s'|s,
a)$ and $s$ is sampled from the goal-dependant \emph{discounted
visitation frequencies} $\nu^{\pi}(\d s|g) = (1-\gamma)\sum_{t\geq
0}\gamma^{t}\rho_{0}(\d s_{0}|g)(P^{\pi})^{t}(\d s|s_{0}, g)$: namely,
states $s$
on a trajectory sampled from $\pi$ with goal $g$. 

We can define the actor critic update with infinitely sparse rewards 
by using the model $m(s,g,g)$ as an estimate of the values, and applying
the ordinary policy gradient theorem \citep{sutton2018reinforcement} on the extended space
$\mathcal{S}\times \mathcal{G}$ to include the goals (see
Appendix~\ref{app:policy-gradient}). This leads to
\begin{equation}
\label{eq:update_pi}
\widehat{\delta\theta}_{\dac}(s, a, s', g) \deq \partial_{\theta}\log \pi_{\theta}(a|s, g)\left(\gamma m_{\theta_{M}}(s', g, g) - m_{\theta_{M}}(s, g, g)\right)
\end{equation}
where $m_{\theta_{M}}(s, g, g')$ is the model of the value density
learned in Section~\ref{sec:unbi-policy-eval}.
%
This is justified by 
the following statement, which is an informal version of
Theorem~\ref{thm:pol_grad} in Appendix~\ref{app:policy-gradient}: namely,
if the value function model
$m_{\theta_M}$ is correct, then 
this actor-critic update is an unbiased estimate of
$\partial_{\theta}J(\pi_{\theta})$.

\begin{nfthm}
  \label{nfthm:pol_grad}
  If $m_{\theta_{M}}(s, g, g)\lambda(\d g)$ approximates $V^{\pi}(s, \d g)$ as a measure, then $\E_{s, a, s', g}\left[\widehat{\delta\theta}_{\dac}(s, a, s', g)\right]$ approximates $\partial_{\theta}J(\pi_{\theta})$.
\end{nfthm}

This update, together with the one for $m$ in
Theorem~\ref{thm:update_m}, make up the
$\delta$-Actor-Critic algorithm (Algorithm~\ref{alg:ac}).
We can similarly define a PPO algorithm
(Appendix~\ref{app:experiments-details}),
used 
in the experiments.

\begin{algorithm}[tb]
   \caption{One-step $\delta$-Actor-Critic}
   \label{alg:ac}
\begin{algorithmic}
  \STATE {\bfseries Input:} Model $m_{\theta_{M}}(s, g)$; policy $\pi_{\theta}$; goal function $\phi$; $T$ the maximum trajectory length
  \STATE Get a goal $g$ and an initial state $s_{0}$ from the environment
  
    \FOR{$0\leq t \leq T$ steps} 
      \STATE Sample $a_{t} \sim \pi(a|s_{t}, g)$
      \STATE Execute action $a_{t}$ and observe the next state $s_{t+1}$
      \STATE Sample an independent goal $g' \sim \rho_{\mathcal{G}}(\d g')$
      \STATE $\widehat{\delta\theta}_{\dtd} \deq \partial_{\theta}m_{\theta_{M}}(s_{t}, g, \phi(s_{t})) + \partial_{\theta}m_{\theta_{M}}(s_{t}, g, g') \left(\gamma m_{\theta_{M}}(s_{t+1}, g, g') - m_{\theta_{M}}(s_{t}, g, g')   \right)$
      \STATE  $\widehat{\delta\theta}_{\dac} = \gamma^{t} \times \partial_{\theta}\log\pi_{\theta_{\pi}}(a_{t}|s_{t}, g)\left(\gamma m(s_{t+1}, g, g) - m(s_{t}, g, g)   \right)$
      \STATE $\theta_{M} \leftarrow \theta_{M} + \eta_{M}\widehat{\delta\theta}_{\dtd}$
      \STATE $\theta_{\pi} \leftarrow \theta_{\pi} + \eta_{\pi}\widehat{\delta\theta}_{\dac}$
    \ENDFOR
\end{algorithmic}
\end{algorithm}

\section{Experiments}
\label{sec:experiments}


\paragraph{The \texttt{Torus} environment.}
We first define  the \texttt{Torus$(n)$} environment, which is a
continuous version of the \emph{flipping coin} environment introduced in
~\citep{HER}. The state space is the $n$-th dimensional torus,
represented as $\mathcal{S} = [0, 1)^{n}$, and can be obtained from the
$n$-dimensional hypercube by gluing the opposite faces together.
The action space is $\mathcal{A} = \{1, \ldots, n\}\times \{-\alpha, \alpha\}$ and action $a=(i, u)$ in state $s$ moves the position on the axis $i$ of a quantity  $u$, then the environment adds a Gaussian noise. Formally $s' \sim \left((s +u.e_{i} + \mathcal{N}(0, \sigma^{2})) \mod 1\right)$, where $(e_{j})_{1\leq j\leq n}$ is the canonical basis $(e_{i})_{k} = \1_{i=k}$. 
We consider the environment in dimensions $n=4$ and $n=6$. We also
consider the modified environment with the \emph{freeze} action described
in Section~\ref{sec:bias-her-stochastic}. For every environment, we
observe trajectories of length 200, and the reported metric is the
rescaled negative L$1$ distance to the goal at the end of trajectory
$-\frac{1}{n}\|s-g\|_{1}$.
The experimental details are in Appendix~\ref{app:experiments-details}.

We compare UVFA, HER, $\ddqn$, and $\delta$-PPO
(defined in Appendix~\ref{app:experiments-details} based on $\dac$). Each
algorithm fails in some environment: additional experiments in the
Appendix show that
$\delta$-DQN and $\delta$-PPO are both failing to
learn when the dimension of  the torus increases, while HER is still able
to learn.
This is discussed in Section~\ref{sec:discussion}.
While UVFA, HER and $\ddqn$ are similar algorithms and can be compared
as actor-critic methods handle the trajectory samples in a different way from
$Q$-learning methods.
Still, we observe that $\delta$-PPO learns successfully in the same environments as $\delta$-DQN, and also failing  when $\delta$-DQN does.

\paragraph{The FetchReach environment.}

The FetchReach environment \citep{plappert2018multi} is a robotic arm
environment in which the objective is for the extremity of the arm to
reach a given 3D position. The environment is deterministic, so HER is expected to perform well.
Here, all methods learn successfully.
We also experimented $\ddqn$ and $\delta$-PPO on more complex
environments of the same robotic suite, such as FetchPush, but both
methods fail in this setting, while HER was successful. 

\begin{figure}[t]
  \includegraphics[width=\textwidth]{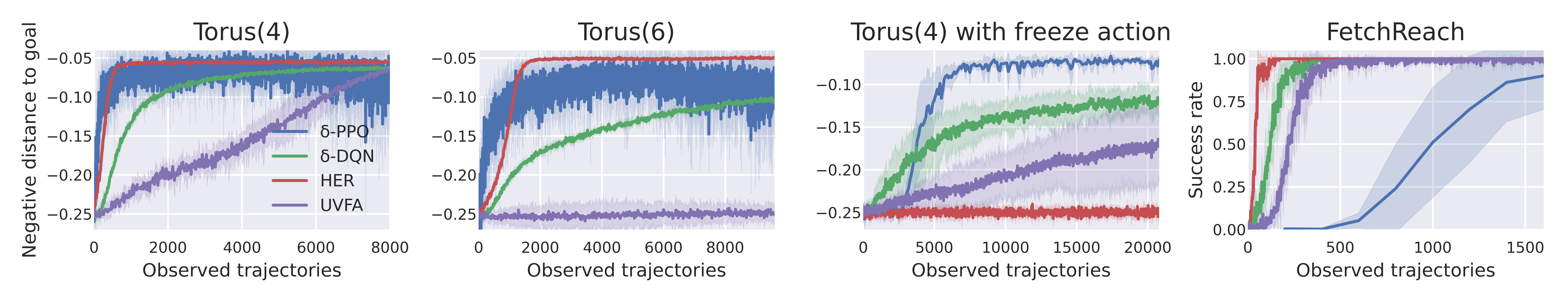}
  \caption{We compare UVFA, HER, $\delta$-DQN in toy environments. We observe different regimes: with a highly stochastic environment (\texttt{Torus} with freeze action), HER is unable to learn because of its bias, whereas UVFA and $\ddqn$ are. When the state dimension becomes too large (\texttt{Torus$(6)$}), UVFA is unable to learn because of the vanishing reward issue. In environments in which HER is able to learn, it is the most efficient method, and $\ddqn$ is always performing better than UVFA.}
  \label{fig:experiments}
\end{figure}

\section{Limitations and Future Work}
\label{sec:discussion}

The algorithms using infinitely sparse rewards always perform better than
UVFA, and perform better than HER in environments designed to exhibit the
HER bias issue. But they do not perform as well as HER in some standard
environments, and are unable to learn at all in more complex environments such as FetchPush. We discuss two technical limitations of $\delta$-DQN and
$\delta$-Actor-Critic.

The first issue is the function approximation. Learning the models
$Q_{\theta}(s, a, \d g) = q_{\theta}(s, a, g)\rho(\d g)$ of $Q^{\ast}$
and $M_{\theta}^{\pi}(s, g_{1}, \d g_{2}) = m_{\theta}(s, g_{1},
g_{2})\rho(\d g_{2})$ of $M^{\pi}$ requires approximating a Dirac
distribution (when $g_{2} = \phi(s)$) with a continuous density. The
theorems justify this, but in practice the functions $m_{\theta}$ and
$q_{\theta}$ have to reach multiple orders of magnitude (high values
close to the goal, low everywhere else), and the values need to be
accurate in these two regimes. Representing multiple orders of magnitude
in
neural networks may require a well-suited family of parametric functions.

A second issue is variance. The Dirac 
rewards remove the \emph{infinite} variance of vanishing rewards in
UVFA when $\varepsilon \rightarrow 0$. But the variance of the
remaining term can be high. Consider the tabular case
\eqref{eq:tabularddqn}--\eqref{eq:tabularddqn2}: $\delta$-DQN learns
significantly faster than UVFA on the \emph{diagonal} $Q(s, a, g)$ when
$g=s$, thanks to the Diracs. 
But this does not change the way the reward
is propagated to other states, due to the independent sampling of $g$ in
\eqref{eq:tabularddqn2}.
Selecting goals $g$ more
correlated to the state $s$ as in HER could also be helpful, but this is
not obvious to do without re-introducing HER-style bias.


\section{Conclusion}

We have proved that there exist unbiased goal-oriented RL algorithms which
do not vanish when rewards become sparse: it is possible to
deal with sparse rewards in RL directly via the infinitely sparse reward
limit, although this does not solve all variance issues. We have also proved that another multi-goal method, HER, is
unbiased and has the correct fixed point in all deterministic
environments.


\section*{Acknowledgments}

We would like to thank Ahmed Touati for his technical help, and Corentin Tallec, Alessandro Lazaric, Nicolas Usunier and Jonathan Laurent for their helpful comments and advice.
\bibliographystyle{icml2021}
\bibliography{biblio}

\vfill

\begin{enumerate}

\item For all authors...
\begin{enumerate}
  \item Do the main claims made in the abstract and introduction accurately reflect the paper's contributions and scope?
    {\answerYes{}}
  \item Did you describe the limitations of your work?
    {\answerYes{}}
  \item Did you discuss any potential negative societal impacts of your work?
    {\answerNA{}}
  \item Have you read the ethics review guidelines and ensured that your paper conforms to them?
    {\answerYes{}}
\end{enumerate}

\item If you are including theoretical results...
\begin{enumerate}
  \item Did you state the full set of assumptions of all theoretical results?
    {\answerYes{}}
	\item Did you include complete proofs of all theoretical results?
    {\answerYes{}}
\end{enumerate}

\item If you ran experiments...
\begin{enumerate}
  \item Did you include the code, data, and instructions needed to reproduce the main experimental results (either in the supplemental material or as a URL)?
    {\answerYes{}}
  \item Did you specify all the training details (e.g., data splits, hyperparameters, how they were chosen)?
    {\answerYes{}}
	\item Did you report error bars (e.g., with respect to the random seed after running experiments multiple times)?
    {\answerYes{}}
	\item Did you include the total amount of compute and the type of resources used (e.g., type of GPUs, internal cluster, or cloud provider)?
    {\answerYes{}}
\end{enumerate}

\item If you are using existing assets (e.g., code, data, models) or curating/releasing new assets...
\begin{enumerate}
  \item If your work uses existing assets, did you cite the creators?
    {\answerYes{}}
  \item Did you mention the license of the assets?
    {\answerYes{}}
  \item Did you include any new assets either in the supplemental material or as a URL?
    {\answerYes{}}
  \item Did you discuss whether and how consent was obtained from people whose data you're using/curating?
   \answerNA{}
  \item Did you discuss whether the data you are using/curating contains personally identifiable information or offensive content?
    \answerNA{}
\end{enumerate}

\item If you used crowdsourcing or conducted research with human subjects...
\begin{enumerate}
  \item Did you include the full text of instructions given to participants and screenshots, if applicable?
    \answerNA{}
  \item Did you describe any potential participant risks, with links to Institutional Review Board (IRB) approvals, if applicable?
    \answerNA{}
  \item Did you include the estimated hourly wage paid to participants and the total amount spent on participant compensation?
    \answerNA{}
\end{enumerate}

\end{enumerate}

\vfill

\pagebreak

\appendix

  \begin{longtable}[h]{p{.2\textwidth}p{.8\textwidth}} \toprule
    \textbf{Notation} & \textbf{Definition}
    \\    \midrule[.2em]
    \endfirsthead
    \toprule
    \emph{Continued from previous page} \\
    \midrule
    \textbf{Notation} & \textbf{Definition}
    \\    \midrule[.2em]
    \endhead
    \midrule
    \emph{Continuing on next page...}
    \\ \bottomrule
    \endfoot
    \bottomrule
    \endlastfoot
    $\mathcal{S}$ & State space \\
    $\mathcal{A}$ & Action space \\
    $\gamma$ & Discount factor, $ 0 \leq \gamma < 1$ \\
    $P(\d s' | s, a)$ & Transition probability measure, over $\mathcal{S}$, for every $s, a \in \mathcal{S}\times\mathcal{A}$. \\
    $\mathcal{G}$ & Goal space \\
    $n$ & Dimension of the goal space \\
    $\phi$ & Goal function $\phi : \mathcal{S} \rightarrow \mathcal{G}$. The goal $\phi(s)$ is the goal \emph{achieved} in $s$. \\
    $\rho_{\mathcal{G}}(\d g)$ & Goal sampling distribution. \\
    $p_{\mathcal{G}}(g)$ & If it exists, the density of $\rho_{\mathcal{G}}$ with respect to $\lambda$.\\
    $\rho_{0}(\d s_{0}|g)$ & Initial state sampling distribution \\
    $p_{0}(s_{0}|g)$ & If it exists, the density of $\rho_{0}$ with respect to $\lambda$.\\
    $\lambda(\d.)$ & Lebesgue measure \\
    $\varepsilon$ & Threshold for the sparse reward. $\varepsilon > 0$ \\
    $\1_{A}(x)$ & Function equal to $1$ if $x\in A$, and $0$ is $x \notin A$.\\
    $R_{\varepsilon}(s, g)$ & Sparse reward around goal $g$: $R_{\varepsilon}(s, g) = \1_{\|\phi(s)-g\|\leq\varepsilon}(s))$ \\
    $\lambda(\varepsilon)$ & Volume of a sphere of radius $\varepsilon$: $\lambda(\varepsilon) = \lambda(\{x \quad \text{s.t.} \quad \|x\|\leq\varepsilon\})$ \\
    $\pi$ & Goal dependent policy. If $\mathcal{A}$ is discrete, $\pi(a|s, g)$ is the probability of selecting action $a$. If $\mathcal{A}$ is continuous, it is the density of selecting $a$ with respect to Lebesgue measure. \\
    $P^{\pi}(\d s' | s, g)$ & Transition probability measure for policy $\pi$:
                              \\ & $P^{\pi}(\d s' | s, g) = \int_{a}\pi(a|s, g)P(\d s'|s, a)$. \\
    $\tau$ & Trajectory: $\tau = (g, s_{0}, a_{0}, s_{1}, ...)$, with $g\sim \rho_{\mathcal{G}}$, $s_0\sim \rho_{0}(.|g)$, \\ & $a_{t}\sim\pi(.|s_{t}, g)$, $s_{t+1}\sim P(.|s_{t}, a_{t})$.\\
    $V^{\pi}_{\varepsilon}(s, g)$ & Value function for reward $\varepsilon$: \\ & $V^{\pi}_{\varepsilon}(s, g) = \E_{s_{t+1}\sim P^{\pi}(.|s_{t}, g)}\left[\sum_{t\geq 0}\gamma^{t}R_{\varepsilon}(s_{t}, g) | s_{0}=s\right]$ \\
    $Q^{\pi}_{\varepsilon}(s, a, g)$ & Action-value function for reward $\varepsilon$: \\ & $Q^{\pi}_{\varepsilon}(s, a, g) = \E_{a_{t}\sim \pi(.|s_{t}, g), s_{t+1}\sim P(.|s_{t}, a_{t})}\left[\sum_{t\geq 0}\gamma^{t}R_{\varepsilon}(s_{t}, g) | s_{0}=s, a_{0}=a\right]$ \\
   $\pi^{\ast}$ & Optimal policy \\
    $Q^{\ast}_{\varepsilon}(s, a, g)$ & Optimal action-value function for reward $R_{\varepsilon}$: $Q^{\ast} = Q^{\pi^{\ast}}$. \\
    $\tilde{\mathcal{S}}, \tilde P$ & Augmented MDP: $\tilde{\mathcal{S}} = \mathcal{S}\times\mathcal{G}$ and for every $\tilde s = (s, g)$, action $a$, next state $\tilde s$ is sampled as $\tilde s = (s', g)$ where $s'\sim P(\d s'|s, a)$. \\
    $Q_{\theta}$, $V_\theta$ & Models of $Q^{\ast}$, $V^{\pi}$ parametrized by $\theta$. \\
    $Q_{\tar}$, $V_{\tar}$ & Target values \\
    $Q_{\infty}$ & Fixed point of an algorithm  \\
    $\pi_{\expl}$ & Exploration policy \\
    $T$ & Optimal Bellman operator, defined for function $Q(s, a, g)$ or measures $Q(s, a, \d g)$ \\
    ~ & $\bullet$ Functions: $(T\cdot Q)(s, a, g) = R_{\varepsilon}(s, g) + \gamma \E_{s'\sim P(.|s, a)}\left[\sup_{a'}Q(s',a', g)\right]$.\\
    ~ & $\bullet$ Measures: $(T\cdot Q)(s, a, \d g) = \delta_{\phi(s)}(\d g) + \gamma \E_{s'\sim P(.|s, a)}\left[\sup_{a'}Q(s',a', \d g)\right]$.\\
    $\widehat{\delta\theta}_{\uvfa}(s, a, s', g)$ & Stochastic \emph{Universal Value Function Approximators} update for $Q$-learning \\
    \\ \midrule
    \multicolumn{2}{c}{\textbf{Hindsight Experience Replay}} \\
    $K$ & Step $s_{K}$ of the update for a trajectory $\tau = (g, s_{0}, a_{0}, s_{1}, ...)$ \\
    $g'$ & Re-sampled goal by HER \\
    $L$ & Step of the resampling goal: $g' = \phi(s_{L})$. \\
    $\widehat{\delta\theta}_{\her}(\tau, K, L)$ & HER stochastic update \\
    $\|.\|_{\her}$ & Norm such that $\widehat{\delta\theta}_{\her}(\tau, K, L)$ is an unbiased estimate of Bellman error \\
    $a^{\ast}$ & freeze-after-random-jump additional action \\
    \\ \midrule
    \pagebreak
    \multicolumn{2}{c}{\textbf{Infinitely Sparse Rewards}} \\
    $\delta_{x}(\d x')$ & Dirac measure located in $g$ \\
    $R(s, \d g)$ & Infinitely spare Dirac reward: $R(s, \d g) = \delta_{\phi(s)}(\d g)$.\\
    $Q^{\ast}(s, a, \d g)$ & Optimal action-value \emph{measure} \\
    $M^{\pi}(s, g, \d g')$ & Successor goal \emph{measure}:  \\ & $M^{\pi}(s, g, \d g') = \E_{a_{t}\sim\pi(.|s_{t}, g), s_{t+1}\sim P(.| s_{t}, a_{t})}\left[\sum_{t\geq 0}\gamma^{t}\delta_{\phi(s_{t})}(\d g') | s_{0} = s\right]$ \\
    $V^{\pi}(s, \d g)$ & Value \emph{measure}: $V^{\pi}(s, \d g) = M^{\pi}(s, g, \d g)$ \\
    $q_{\theta}(s, a, g)$ & Model of the density of $Q^{\ast}(s, a, \d g)$ with respect to $\rho_{\mathcal{G}}$ parametrized by $\theta$ \\
    $Q_{\theta}(s, a, \d g)$ & Model of $Q^{\ast}(s, a, \d g)$ defined via its density: $Q_{\theta}(s, a, \d g) = q_{\theta}(s, a, g)\rho_{\mathcal{G}}(\d g)$\\
    $q_{\tar}$ & Target values \\
    $\widehat{\delta\theta}_{\ddqn}(s, a, s', g)$ & Stochastic update of $q_{\theta}(s, a, g)$ for $\ddqn$ \\
    $\eta$ & Learning rate\\
    $T^\pi$ & Bellman operator: \\ & $(T^{\pi}\cdot M)(s, g, \d g')= \delta_{\phi(s)}(\d g') + \gamma\E_{s'\sim P^{\pi}(.|s, g)}\left[M^{\pi}(s', g, \d g')\right]$. \\
    $m_{\theta}(s, g, g')$ & Model of the density of $M^{\pi}(s, g, \d g')$ with respect to $\rho_{\mathcal{G}}(\d g')$ parametrized by $\theta$ \\
    $M_{\theta}(s, g, \d g')$ & Model of $M^{\pi}(s, g, \d g')$ defined via its density: \\ & $M_{\theta}(s, g, \d g') = m_{\theta}(s, g, g')\rho_{\mathcal{G}}(\d g)$\\
    $\widehat{\delta\theta}_{\dtd}(s, a, s', g, g')$ & Stochastic update of $m_{\theta}(s, g, g')$ for $\dtd$\\
    $\widehat{\delta\theta}_{\dtdn}(\tau, k, g')$ & Stochastic update of $m_{\theta}(s, g, g')$ for $\dtdn$\\
    $J_{\varepsilon}(\pi)$ & Expected return $J_{\varepsilon}(\pi) = \E_{g\sim\rho_{\mathcal{G}}, s_{0}\sim \rho_{0}(.|g)}\left[\sum_{t\geq0}\gamma^{t}R_{\varepsilon}(s_{t}, g)|s_{0}=s\right]$\\
    $J(\pi)$ & Expected return with infinitely sparse rewards \\
    $\nu^{\pi}(\d s|g, s_{0})$ & Discounted visitation frequencies: \\ & $\nu^{\pi}(\d s| s_{0}, g) = (1-\gamma)\sum_{t\geq 0}\gamma^{t}(P^{\pi})^{t}(\d s|s_{0}, g)$ \\
    $\theta_{M}$ & In policy gradient, parameter of the critic $m_{\theta_{M}}(s, g, g')$ \\
    $\widehat{\delta\theta}_{\dac}(s, a, s', g)$ & Stochastic update for $\dac$ \\
    \bottomrule
    \caption{Notation in the main text}
    \label{tab:notation}
  \end{longtable}

\section{Experiments Details}
\label{app:experiments-details}

In this section, we present the experiment details of Section~\ref{sec:experiments}. Every experiment was performed on a single GPU.


\paragraph{The \texttt{Torus(n)} environment} The state space of the \texttt{Torus$(n)$} environment is the $n$-th dimensional torus, $\mathcal{S} = [0, 1)^{n}$, and can be obtained from the
$n$-dimensional hypercube by gluing the opposite faces together. If the current state is $s = (s_{1}, ..., s_{n})$, we define the observation of the agent as $(\cos(2\pi s_{1}), ..., \cos(2\pi s_{n}), \sin(2\pi s_{1}), ..., \sin(2\pi s_{n}))\in [-1, 1]^{2n}$. We use this representation in order to remove the discontinuity of the representation $[0, 1)^{n}$. This representation contains all the information of the state $s$ and the environment is still fully observable (and not partially observable). The action space is $\mathcal{A} = \{1, \ldots, n\}\times \{-\alpha, \alpha\}$ and action $a=(i, u)$ in state $s$ moves the position on the axis $i$ of a quantity  $u$, then the environment adds a Gaussian noise. Formally $s' \sim \left((s +u.e_{i} + \mathcal{N}(0, \sigma^{2})) \mod 1\right)$, where $(e_{j})_{1\leq j\leq n}$ is the canonical basis $(e_{i})_{k} = \1_{i=k}$. In practice, we take $\alpha = 0.1$, and $\sigma = \frac{0.1}{n}$. The reward is $R_{\varepsilon}(s, g) = \1_{\|s-g\|\leq \varepsilon}$ where $\|.\|$ is the rescaled  L$1$ distance in the Torus: 
$\|s-g\| = \frac{1}{n}\sum_{i=1}^{n}\min((s_{i}-g_{i}) \mod 1, |((s_{i}-g_{i}) \mod 1) - 1|)$. In practice, we use $\varepsilon = 0.05$. At the beginning of an episode, we sample a goal uniformly in the environment, then we observe trajectories of length $200$. We set $\gamma = .995$.

\paragraph{FetchReach} FetchReach is a standard environment from~\cite{plappert2018multi}. The objective is to reach a \emph{goal} position in 3 dimension with the end of the robotic arm. The observation space $\mathcal{S}$ is of dimension $10$ and contains positions and velocities, such that the environment is Markov, fully observable, and deterministic. The action space $\mathcal{A}$ is continuous and of dimension $4$. The goal space $\mathcal{G}$ is of dimension $3$, and the goal represent the position of the end of the robotic arm. Trajectories are of length $50$.

\paragraph{Q-learning experiments}

Here we describe experiments with UVFA, HER and $\ddqn$, which have similar structure. For every algorithm, we use the same neural network to learn $Q_{\theta}(s, a, g)$ or $q_{\theta}(s, a, g)$. Simlarly to DDPG~\citep{Lillicrap2016ContinuousCW}, if the action space $\mathcal{A}$ is continuous, we additionally learn a deterministic policy $\pi_{\theta}: \mathcal{S}\times\mathcal{G} \rightarrow \mathcal{A}$.  We use a dueling architecture \citep{Wang2016DuelingNA}: we learn a \emph{value} network $v_{\theta}(s, g)$ and an \emph{advantage} network $\adv_{\theta}(s, a, g)$. We then define $q_{\theta}(s, a, g) = v_{\theta}(s, g) + \widetilde \adv_{\theta}(s, a, g)$, where $\widetilde \adv_{\theta}(s, a, g)$ is the \emph{rescaled} advantage, and is defined as $\widetilde \adv_{\theta}(s, a, g) =  \adv_{\theta}(s, a, g) - \frac{1}{|\mathcal{A}|}\sum_{a'\in\mathcal{A}}\adv_{\theta}(s, a', g)$ if $\mathcal{A}$ is finite, and $\widetilde \adv_{\theta}(s, a, g) =  \adv_{\theta}(s, a, g) - \adv_{\theta}(s, \pi(s, g), g)$ if $\mathcal{A}$ is continuous. The networks for $v_{\theta}$, $a_{\theta}$ and $\pi_{\theta}$ are 3-hidden layers MLP of width 256 and ReLU activations. The inputs of $v_{\theta}$ and $\pi_{\theta}$ are the concatenation of $s$ and $g$. If $\mathcal{A}$ is continuous, the input of $\adv_{\theta}$ is the concatenation of $s, a, g$. If $\mathcal{A}$ is discrete, the input of $\adv_{\theta}$ is the concatenation of $s$ and $g$, and its output is of dimension $|\mathcal{A}|$, every dimension corresponding to an action.

Most hypereparameters are shared among the three methods: we observe batchs of trajectories of size $16$ for the Torus experiments, and of size $2$ for the FetchReach environment.  At every epoch, we observe a batch of trajectories and store it in a memory buffer of size $10^{6}$ transitions. We use an $\varepsilon$-greedy exploration strategy, with $\varepsilon = 0.2$. At every epoch, we sample $100$ batches from the replay buffer for the Torus experiments, and $50$ for the FetchReach environment. For HER, we use the \texttt{future} sampling strategy for goals: when sampling a transition $(s, a, s', g)$, with probability $0.2$ we define $g' = g$, and with probabiliyt $0.8$ we sample $g'$ uniformly in the future of $s$. For $\ddqn$ in the Torus environment, we sample independant goals with $\rho_{\mathcal{G}}$ uniform distribution in the Torus. In FetchReach, we do not assume we have access to the goal sampling distribution. Therefore, we re-sample independant goals from the memory buffer. For every method, observations and goals are normalized. We use a target network with parameter $\theta_{\tar}$ and update the target as $\theta_{\tar} \leftarrow (1-\alpha)\theta_{\tar} + \alpha\theta$ with $\alpha = 0.05$ after every epoch. Every model is trained with the Adam optimizer with $\beta_{1} = 0.9$ and $\beta_{2} = 0.999$.

For every method and environment, the most sensitive hyperparameters were selected with a grid-search. For HER, UVFA and $\ddqn$, we selected the learning rate of the optimizer from a range $\{1e-6, 3e-6, 1e-5, 3e-5, 1e-4, 3e-4, 1e-3\}$. For HER and UVFA, we additionally selected $R$ a reward scaling factor, in $\{1e-2, 1e-1, 1, 10, 100, 1000, 1e4\}$. For $\ddqn$, we also selected a parameter $c_{\delta}$ corresponding to the scaling of the reward: the scaled infinitely sparse reward is $R(s, \d g) = c_{\delta}\delta_{\phi(s)}(\d g)$. We experimented all the possible hyperparameters of this grid separately on every environment on a single run and selected the best hyperparameters. The values in Figure~\ref{fig:experiments} are the mean performance evaluated with $5$ different random seeds, and the confidence intervals represent the standard deviation of the reported metric accross the $5$ independent runs. In practice, the reward scaling factor for UVFA  is $10$ for all the Torus environments and $100$ for FetchReach. The reward factor is $1$ for HER for all the Torus environments and $10$ for FetchReach. The learning rate for UVFA is $1e-4$ for all the Torus environments and $1e-3$ for FetchReach. The learning rate for HER is $3e-4$ for all the Torus environments and $1e-3$ for HER. For $\ddqn$, the learning rate is $1e-5$ for all the Torus environments, and $1e-4$ for the FetchReach environment. The reward scaling coefficient $c_{\delta}$ is $1e-2$ for every environments.

\paragraph{$\delta$-PPO experiments}

The $\dppo$ is defined from $\dac$ similarly to PPO~\citep{Schulman2017ProximalPO} from actor critic methods. 
We learn the model $m_{\theta}(s, g, g')$ of the density of $M^{\pi}(s, g, \d g')$ with respect to $\rho_{\mathcal{G}}$, and $\pi_{\theta}(a|s, g)$ a parametric policy. We used a shared architecture: we define $h_{\theta}(s, g, g')$ a network computing a hidden representation of dimension $H$. Then, we define two linear layers $L^m_{\theta}$ and $L^{\pi}{\theta}$, and define $m_{\theta}(s, g, g') = L^{m}_{\theta}(h_{\theta}(s, g, g'))$ and $\pi_{\theta}(a|s, g) = L^{\pi}_{\theta}(h_{\theta}(s, g, g'))$. In practice, $h_{\theta}$ is a 2-hidden layers MLP with ReLU activations (except at the last layer), with width $H=256$ for the internal and output layers.

A step of $\dppo$ is defined as follow. We first gather a buffer of trajectories with the current policy $\pi_{\theta}$. Then, we define $\theta'\deq \theta$. For every transition  $(s, a, s', g)$ in the buffer and every epoch $e \leq E$, we sample an independant goal $g'$ and compute:
\begin{align}
  \widehat{\delta\theta}_{M} &\leftarrow \widehat{\delta\theta}_{\dtd}(s, a, s', g, g')
  \\ \adv &\leftarrow \gamma m_{\theta_{M}}(s', g, g) - m_{\theta_{M}}(s, g, g)
  \\ r(\theta') &\leftarrow \frac{\pi_{\theta'}(a|s, g)}{\pi_{\theta}(a|s, g)}
         \\ \tilde r(\theta') &\leftarrow \mathrm{clip}(r, 1-u, 1+u)
  \\ \widehat{\delta\theta}_{\pi} &\leftarrow \partial_{\theta'}\left(\min\left(\adv \times r(\theta'), \adv \times \tilde r(\theta')\right)\right)
                               \\ \widehat{\delta\theta} &\leftarrow \widehat{\delta\theta}_{\pi} + c_{M}\times \widehat{\delta\theta}_{M} 
\end{align}
where $c_{M}$ allow to scale the two updates. Then we use $\widehat{\delta\theta}$ and with Adam optimizer to obtain a new value for $\theta'$. We did not use an entropy regularizer aw we observed that the diversity of actions was not an issue in practice.

For the Torus environment, the independent goals $g'$ are sampled fron $\rho_{\mathcal{G}}$ the uniform distribution of goals in the environment. For FetchReach, we do not assume we know $\rho_{\mathcal{G}}$ and sample goals from the buffer.

In practice, at every step of the $\dppo$ algorithm we observe a batch of $2$ trajectories for Torus$(4)$ and Torus$(6)$, $100$ for the Torus$(4)$ with the freeze action $a^{\ast}$, and $200$ for FetchReach. Three hyperparameters were selected independently for every environment via a grid search: $E$ the number of epochs per $\dppo$ step, the learning rate of Adam optimizer, and the coefficient $c_{M}$. We performed a grid search with a single run per tuple of parameters. Then, the reported results in Figure~\ref{fig:experiments} are averaged over $5$ different random seeds with the selected hyperparameters. The number of epoch $E$ per step was selected as lowest number which achieved close-to-optimal performance accross the range $\{1, 2, 5, 10, 20, 50, 100\}$. In practice, $E=20$ in the Torus$(4)$ and Torus$(6)$ environments, $E=10$ in the  Torus$(4)$ with the freeze action $a^{\ast}$, and $E=50$ for FetchReach. The learning rate was selected in the set $\{1e-6, 3e-6, 1e-5, 3e-5, 1e-4, 3e-4, 1e-3\}$, and in practice is $1e-4$ for every environment. The coefficient $c_{M}$ was selected in $\{1e-4, 1e-3, 1e-2, 1e-1, 1e0, 1e1, 1e2, 1e3\}$ and in practice is $1e-3$ for every Torus environment and $1e-1$ for the FetchReach environment.

\paragraph{Additional experiments}
We experimented $\ddqn$ and $\dppo$ in more complex environments such Torus of higher dimension, or other environments of OpenAI Robotic suite~\citep{plappert2018multi}. In the Torus environment, both methods fail when the dimension increases above $15$ while HER is still able to learn. More importantly, $\dppo$ and $\ddqn$ did not learn at all in environments such as FetchPush (which is easy to solve with HER) or HandReach, which has similar structure but higher dimension than FetchReach. In the FetchPush environment, the objective is to push a cube with a robotic arm to a given goal. We observed that the issue of our methods was not an exploration issue, since the robotic arm oftens reaches and pushes the cube randomly. We tried to increase the generalization accross goals with the $\dtdn$ update, but it was to computationally expensive, as explained in Section~\ref{sec:unbi-policy-eval}. Limitations of $\ddqn$ and $\dppo$ which could explain these results are discussed in Section~\ref{sec:discussion}.

\section{Proofs of Theorems on HER}

\subsection{HER is Unbiased in Deterministic Environments}
\label{app:convergence-her}

We prove that HER is an unbiased method in deterministic environments. In order to define HER, we assume access to samples of trajectories $(g, s_{0}, a_{0}, s_{1}, a_{1}, ...) \sim \rho(g, s_{0}, a_{0}, s_{1}, a_{1}, ...)$ with $g \sim \rho_{\mathcal{G}}(\d g)$, $s_{0}\sim \rho_{0}(\d s_{0}|g)$, and for every $k \geq 0$, $a_{k} \sim \pi_{\expl}(a|s_{k}, g)$ where $\pi_{\expl}$ is an exploration policy, $s_{k+1} \sim P(\d s|s_{k}, a_{k})$. For simplicity, we will assume the trajectories are infinite. 



Here we consider HER with the \texttt{future} strategy descibed in the original paper: goals are re-sampled from a trajectory as goal reached later in the trajectory. We formalize  HER as follows: we sample a trajectory $\tau = (g, s_{0}, a_{0}, s_{1}, a_{1},  ...) \sim \rho(g, s_{0}, a_{0}, s_{1}, a_{1}, ...)$, a Bernoulli variable $U \sim \mathcal{B}(\alpha)$, and two  independent integer random variables $K, L$, from distributions $p_{K}$ and $p_{L}$, such that for every $k, l$, $p_{K}(k) >0$ and $p_{L}(l) > 0$. The bernoulli variable $U$ represents the random choice of using the standard Q-learning update, or the HER update with a resampled goal. The random variable $K$ represents the timestep of the transition we will use for the Q-learning update, and $L$ represents the timestep used to sample a new goal $g'$ for the \texttt{future} sampling strategy. Then, the update $\widehat{\delta\theta_{\her}}(\tau, U, K, L)$ is defined as:
\begin{itemize}
\item If $U=0$: $$\widehat{\delta\theta_{\her}}(\tau, U=0, K, L) \deq \partial_{\theta}\frac12 (Q_{\theta}(s_{K}, a_{K}, g) - R(s_{K}, g) - \gamma \sup_{a'}Q(s_{K+1}, a', g))^{2},$$ which corresponds to the usual Q-learning update as defined in UVFA~\citep{pmlr-v37-schaul15}.
\item If $U = 1$ we set $g'= \phi(s_{K+L+1})$ and: $$\widehat{\delta\theta_{\her}}(\tau, U=1, K, L) \deq \partial_{\theta}\frac12 (Q_{\theta}(s_{K}, a_{K}, g') - R(s_{K}, g') - \gamma \sup_{a'}Q(s_{K}, a', g'))^{2},$$which corresponds to a Q-learning update for a re-sampled goal $g' = \phi(s_{K+L})$, a goal achieved later in the trajectory.
\end{itemize}

We say that environment is a \emph{continuous deterministic environment} if there is a continuous function $\psi: \mathcal{S}\times\mathcal{A} \rightarrow \mathcal{S}$ such that for every $(s, a)\in \mathcal{S}\times\mathcal{A}$, $P(\d s' |s, a) = \delta_{\psi(s, a)}(\d s')$. In particular, any discrete deterministic environment is a continuous deterministic environment for the discrete topology. Therefore, the following theorem can be applied to discrete environments.

\label{sec:convergence-her}
\begin{thm}[ (Formal statement of Theorem~\ref{nfthm:her-determ})]
  \label{thm:her-determ}
  We assume the environment is a continuous deterministic environment. We also assume that for every pair of states $(s, s')$, $s'$ is reachable from $s$, which means there is a sequence of actions $(a_{1}, ..., a_{k})$ such that applying these actions from $s$ leads to $s'$. Finally, we assume that the support of the exploration policy $\pi_{\expl}(a|s, g)$ is the entire action space $\mathcal{A}$ for every $s, g$.

  Then, there is an euclidean norm $\|.\|$ such that, for every $\theta$, the HER update with the \texttt{future} sampling strategy at $\theta$, $\widehat{\delta\theta_{\text{HER}}}$ is an unbiased estimate of the gradient step between $Q_{\theta}$ and the target function $Q_{\text{target}}\deq T_{\max}Q_{\theta}$:
  \begin{equation}
    \E\left[\widehat{\delta\theta_{\text{HER}}}\right]    = \partial_{\theta}\frac{1}{2}\|Q_{\theta} - Q^{\tar}\|^{2}
  \end{equation}
  If the state space $\mathcal{S}$ is finite, HER has a single fixed point $Q_{\infty}$, which is equal to $Q^{\ast}$. 
\end{thm}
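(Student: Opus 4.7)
The plan is to compute $\E[\widehat{\delta\theta}_\her]$ directly by marginalizing over the distribution of the triple $(s_K, a_K, g'')$, where $g''$ denotes the goal actually used in the update: either the trajectory goal $g$ when $U=0$, or $\phi(s_{K+L+1})$ when $U=1$. The crucial observation is that in a continuous deterministic environment, $P(\d s'|s,a) = \delta_{\psi(s,a)}(\d s')$, so the successor state $s_{K+1}$ is a deterministic function of $(s_K, a_K)$ and, in particular, is conditionally independent of $g''$ given $(s_K, a_K)$. This is exactly the property that fails in stochastic environments and causes the bias identified in Theorem~\ref{thm:freeze}: there, conditioning on $g'' = \phi(s_{K+L+1})$ skews the conditional law of $s_{K+1}$ toward trajectories that actually end up near $g''$, whereas here no such skew is possible.

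Concretely, I would define a weight measure $w(\d s, \d a, \d g'')$ on $\mathcal{S}\times\mathcal{A}\times\mathcal{G}$ by pushing forward the joint law of $(\tau, U, K, L)$ through $(\tau, U, K, L) \mapsto (s_K, a_K, g'')$. Because $s_{K+1} = \psi(s_K, a_K)$ almost surely, moving the expectation inside $\partial_\theta$ gives
\[
  \E[\widehat{\delta\theta}_\her] \;=\; \tfrac12\,\partial_\theta \int \bigl(Q_\theta(s,a,g) - R(s,g) - \gamma\sup_{a'} Q_\tar(\psi(s,a), a', g)\bigr)^{2} \, w(\d s, \d a, \d g),
\]
which is precisely $\tfrac12\,\partial_\theta \|Q_\theta - T Q_\tar\|_\her^{2}$ for the weighted Euclidean norm $\|f\|_\her^{2} \deq \int f(s,a,g)^{2}\, w(\d s, \d a, \d g)$ and $T$ the optimal Bellman operator associated with the deterministic dynamics. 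This proves unbiasedness, and since $T Q^\ast = Q^\ast$ pointwise, $Q^\ast$ is automatically a fixed point in expectation.

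For the uniqueness of the fixed point in the finite tabular case, I would show that $w(s,a,g) > 0$ for every triple. The $U=0$ branch covers $g \in \supp(\rho_\mathcal{G})$: by reachability and the full-support exploration policy $\pi_\expl$, every $(s,a)$ is visited along some trajectory with positive probability. The $U=1$ branch covers arbitrary $g = \phi(s^\ast)$ in the image of $\phi$: since $s^\ast$ is reachable from $\psi(s,a)$, one can exhibit an explicit trajectory that visits $s$ at some step $K$, takes action $a$, and reaches $s^\ast$ at step $K+L+1$, with positive probability because $\pi_\expl$ has full support and $p_K, p_L$ assign positive mass to every integer. With $w > 0$ pointwise, $\|\cdot\|_\her$ is strictly positive definite on the finite-dimensional space of tabular $Q$-functions, so $\E[\widehat{\delta\theta}_\her] = 0$ at $Q_\theta = Q_\tar = Q_\infty$ forces $T Q_\infty = Q_\infty$ pointwise; by the standard contraction argument for $T$, this gives $Q_\infty = Q^\ast$.

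The main technical obstacle is the positivity argument for the weights, which is the place where all three assumptions — deterministic dynamics with reachability, full-support $\pi_\expl$, and strictly positive $p_K, p_L$ — are used simultaneously, and where the $U=1$ branch earns its keep (without it, goals outside $\supp(\rho_\mathcal{G})$ would receive zero weight and fixed-point uniqueness would fail). Everything else is a direct unwinding of the expectation, made possible by the fact that in a deterministic environment no residual randomness in $s_{K+1}$ survives conditioning on $(s_K, a_K, g'')$.
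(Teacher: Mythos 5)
Your proposal is correct and follows essentially the same route as the paper's proof: both exploit determinism to factor the joint law of $(s_K,a_K,s_{K+1},g'')$ as a weight on $(s,a,g)$ times $\delta_{\psi(s,a)}(\d s')$, identify the expected update with the gradient of a $w$-weighted squared Bellman error, and then establish full support of the weight (via reachability, full-support exploration, and positive $p_K,p_L$) to get a genuine norm and fixed-point uniqueness. The only cosmetic difference is that the paper derives positivity from the $U=0$ branch alone using $\supp\rho_{\mathcal{G}}=\mathcal{G}$, whereas you route it through the $U=1$ branch.
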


The euclidean norm $\|.\|$ in the theorem will depend on the exploration policy $\pi_{\expl}(a|s, g)$. Therefore, if the exploration policy is changing during learning, the norm will will be changing as well.

\begin{proof}
  The principle of the proof is the following. We study the sampling distribution of transitions $\mu_{\her}(s, a, s', g)$ with HER. The bias of HER comes from the fact that the sampling of goals $g$ with $\mu_{\her}(s, a, s', g)$ is not independant of $s'$ knowing $(s, a)$. On the contrary, in deterministic environments, the disribution of $g$ knowing $(s, a)$ is independant of $s'$ because $s'$ is uniquely determined by $(s, a)$. 

  We study the sampling distribution of transitions $(s, a, s', g)$ used in HER. Formally, we sample a transition $(s, a, s', g)$ by sampling $\tau, U, K, L$ and defining $(s, a, s', g') \deq \Phi(\tau, U, K, L)$ as:
\begin{itemize}
\item If $U = 0$, $\Phi(\tau, U=1, K, L) = (s_{k}, a_{k}, s_{k+1}, g)$
\item If $U = 1$, $\Phi(\tau, U=1, K, L) = (s_{k}, a_{k}, s_{k+1}, \phi(s_{K+L}))$
\end{itemize}
Then, $\her$ update can be equivalently defined as: sample $(\tau, U, K, L)$ as described above, define $(s, a, s', g) = \Phi(\tau, U, K, L)$, and:
\begin{equation}
    \widehat{\delta\theta_{\her}}(s, a, s', g) \deq \partial_{\theta}\frac12 (Q_{\theta}(s, a, s', g) - R(s, g) - \gamma \sup_{a'}Q(s', a', g))^{2}
  \end{equation}
  Therefore:
  \begin{equation}
         \E\left[\widehat{\delta\theta_{\text{HER}}}\right] = \partial_{\theta}\E_{(s, a, s', g) \sim \mu_{\her}}\frac12 (Q_{\theta}(s, a, g) - R(s, g) - \gamma \sup_{a'}Q(s', a', g))^{2}
  \end{equation}
where we define $\mu_{\her}$ to be the distribution of $(s, a, s', g)$ given by the distribution of $\Phi_{\ast}(\rho \otimes p_{U} \otimes p_{L} \otimes p_{K})$, where $\Phi_{\ast}$ is the \emph{push-forward} operator on measures. We now compute $\mu_{\her}$. Let $f: \mathcal{S}\times \mathcal{A}\times\mathcal{S}\times\mathcal{G} \rightarrow \R$ be a test function, we have:
\begin{align}
  \E_{s, a, s', g \sim \mu_{\her}}\left[f(s, a, s', g)\right] &= \E_{\tau, U, K, L}\left[f(\Phi(\tau, U, K, L))\right]
  \\
  \begin{split}
    = \quad &(1-\alpha) \E_{\tau, U, K, L}\left[f(\Phi(\tau, U, K, L)) | U= 0\right]
    \\ &+ \alpha \E_{\tau, U, K, L}\left[f(\Phi(\tau, U, K, L)) | U= 1\right]
  \end{split}
\end{align}
Moreover:
\begin{align}
  \E_{\tau, U, K, L}& \left[ f(\Phi(\tau, U, K, L)) | U= 0\right] = \sum_{K}p_{K}(k)\int_{g, s_{0}, a_{0}, ...}\rho(g, s_{0}, a_{0}, ...) f(s_{k}, a_{k}, s_{k+1}, g)
  \\ &= \sum_{k}p_{K}(k)\int_{g, s_{0}, a_{0}, ...}\rho_{\mathcal{G}}(g)\rho_{0}(s_{0}|g)(P^{\pi_{\exp}})^{k}(s|s_{0}, g)\pi_{\expl}(a|s, g)P(s'|s, a)f(s, a, s', g)
  \\ &= \int_{s, a, s', g}f(s, a, s', g) \left(\rho_{\mathcal{G}}(g) \int_{s_{0}}\sum_{k}p_{K}(k) \rho_{0}(s_{0}|g) (P^{\pi_{\exp}})^{k}(s|s_{0}, g)\pi_{\expl}(a|s, g)P(s'|s, a)\right)
       \\ &= \int_{s, a, s', g}f(s, a, s', g) \rho_{\mathcal{G}}(g)\nu(s| g)\pi_{\expl}(a|s, g)P(s'|s, a)
\end{align}
with
\begin{equation}
  \label{eq:nu_her}
\nu(s| g) \deq \rho_{\mathcal{G}}(g) \int_{s_{0}}\rho_{0}(s_{0}|g)\sum_{k}p_{K}(k)  (P^{\pi_{\exp}})^{k}(s|s_{0}, g)
\end{equation}
which is the future distribution of states $s$ when sampling a goal $g$ and following the exploration policy $\pi_{\expl}(.| ., g)$, with $p_{K}$ as the distribution of future timesteps. If $p_{K}(k) = (1-\gamma)\gamma^{k}$, this definition of $\nu$ coincides with the definition of $\nu^{\pi}$ in the following sections. This is the reason why we use the same notation, even though $\nu$ is here slightly more general.

We now compute:
\begin{align}
  \E_{\tau, U, K, L}&\left[f(\Phi(\tau, U, K, L)) | U= 1\right] = \sum_{k, l}p_{K}(k)p_{L}(l)\int_{g, s_{0}, a_{0}, ...}\rho(g, s_{0}, a_{0}, ...)f(s_{k}, a_{k}, s_{k+1}, \phi(s_{k+l}))
\end{align}
If $l=0$, the re-sampled goal is $g'=\phi(s)$. Else, the law of $g'$ knowing $s_{k}, a_{k}, s_{k+1}$ is the law of $\phi(s_{k+l})$, which by using the Markov property is the law  of $\phi(\tilde s)$  if $\tilde s$ is sampled as $(P^{\pi_{\expl}})^{l-1}(.|s_{k+1}, g)$. Therefore:
\begin{align}
  &\E_{\tau, U, K, L}\left[f(\Phi(\tau, U, K, L)) | U= 1\right] = \sum_{k, l \geq 0}p_{K}(k)p_{L}(l)\int_{g, s_{0}, a_{0}, ...}\rho(g, s_{0}, a_{0}, ...)f(s_{k}, a_{k}, s_{k+1}, \phi(s_{k+l}))
  \\ \begin{split}
      = &\sum_{k}p_{K}(k)\int_{g, s_{0}, ..., s_{k+1}}\rho(g, s_{0}, ..., s_{k+1})  \left(p_{L}(0)f(s_{k}, a_{k}, s_{k+1}, \phi(s_{k}))\right) +
        \\ &+ \sum_{k}p_{K}(k)\int_{g, s_{0}, ..., s_{k+1}}\rho(g, s_{0}, ..., s_{k+1})\left(\sum_{l\geq 1}p_{L}(l)\int_{\tilde s} (P^{\pi_{\expl}})^{l-1}(\tilde s | s_{k+1}, g)f(s_{k}, a_{k}, s_{k+1}, \phi(\tilde s))\right)
    \end{split}
\end{align}
We define $\mu_{\mathtt{future}}(\d g'|s, s', g) \deq p_{L}(0)\delta_{\phi(s)}(\d g') + \sum_{l\geq 1}p_{L}(l)\phi_{\ast}(\pi_{\exp}\ast P)^{l-1}(g'|s', g)$, where $\phi_{\ast}$ is the \emph{push-forward} on measures, and we have:
\begin{align}
  \E_{\tau, U, K, L}&\left[f(\Phi(\tau, U, K, L)) | U= 1\right] =
  \\ &= \sum_{k }p_{K}(k)\int_{g, s_{0}, a_{0}, ..., s_{K+1}, \tilde s}\rho(g, s_{0}, a_{0}, ..., s_{k+1}) \mu_{\mathtt{future}}(g'|s_{k}, s_{k+1}, g)f(s_{k}, a_{k}, s_{k+1}, g')
 \\ &= \int_{s, a, s', g'}\left(\int_{g}\rho_{\mathcal{G}}(g)\nu(s| g)\pi_{\expl}(a|s, g)\mu_{\mathtt{future}}(g'|s, s', g)\right)P(s'|s, a)f(s, a, s', g').
\end{align}

Therefore,
\begin{equation}
  \mu_{\her}(s, a, s', g) = (1-\alpha)\rho_{\mathcal{G}}(g)\nu(s| g)\pi_{\expl}(a|s, g)P(s'|s, a) + \alpha \left(\int_{\tilde g}\rho_{\mathcal{G}}(\tilde g)\nu(s| \tilde g)\pi_{\expl}(a|s, \tilde g)\mu_{\mathtt{future}}(g|s, s', \tilde g)\right)P(s'|s, a)
\end{equation}

We now use the deterministic hypothesis. We know that for every $s, a$, $P(\d s'|s, a) = \delta_{\psi(s, a)}(\d s')$. We have, for any $s, a$:
\begin{align}
  P(\d s'|s, a)\mu_{\mathtt{future}}(g|s, s', \tilde g) &= \delta_{\psi(s, a)}(\d s')\mu_{\mathtt{future}}(g|s, s', \tilde g)
  \\ &= \delta_{\psi(s, a)}(\d s')\mu_{\mathtt{future}}(g|s, \psi(s, a), \tilde g)
\end{align}

Therefore:
\begin{align}
  &\mu_{\her}(s, a, s', g) =
                            \\ &= (1-\alpha)\rho_{\mathcal{G}}(g)\nu(s| g)\pi_{\expl}(a|s, g)P(s'|s, a) + \alpha \left(\int_{\tilde g}\rho_{\mathcal{G}}(\tilde g)\nu(s| \tilde g)\pi_{\expl}(a|s, \tilde g)\mu_{\mathtt{future}}(g|s, \psi(s, a), \tilde g)\right)P(s'|s, a)
       \\ &= \tilde \mu (s, a, g)P(s'|s, a)
\end{align}
where $$\tilde \mu (s, a, g) \deq (1-\alpha)\rho_{\mathcal{G}}(g)\nu(s, g)\pi_{\expl}(a|s, g) + \alpha \left(\int_{\tilde g}\rho_{\mathcal{G}}(g)\nu(s| \tilde g)\pi_{\expl}(a|s, \tilde g)\mu_{\mathtt{future}}(g|s, \psi(s, a),\tilde g)\right) $$

Therefore:
\begin{align}
  \E\left[\widehat{\delta\theta_{\her}}\right] &= \partial_{\theta}\int_{s, a, s', g} \tilde \mu (s, a, g)P(s'|s, a)(Q(s, a, g) - R(s, g') - \gamma \sup_{a'}Q(s', a', g))^{2}
  \\ &= \partial_{\theta}\int_{s, a, s', g} \tilde \mu (s, a, g)\delta_{\psi(s, a)}(s')(Q(s, a, g) - R(s, g') - \gamma \sup_{a'}Q(s', a', g))^{2}
  \\ &= \partial_{\theta}\int_{s, a, s', g} \tilde \mu (s, a, g)(Q(s, a, g) - R(s, g) - \gamma \sup_{a'}Q(\psi(s, a), a', g))^{2}
  \\  &= \partial_{\theta}\int_{s, a, g} \tilde \mu (s, a, g) (Q(s, a, g) - R(s, g) - \gamma \E_{s'\sim P(\d s'|s, a)}\sup_{a'}Q(s', a', g))^{2}
        \\ &= \partial_{\theta}\int_{s, a, g} \tilde \mu (s, a, g)(Q(s, a, g) - T\cdot Q(s, a, g))^{2}
\end{align}

We define $\|Q\|_{\tilde \mu}$ as:
\begin{align}
  \|Q\|_{\tilde \mu}^{2} \deq \int_{s, a, g} \tilde\mu(s, a, g)Q(s, a, g)^{2}.
\end{align}

We now prove that $\|.\|_{\tilde \mu}$ is a norm for the space of continuous functions on $\mathcal{S}\times\mathcal{A}\times\mathcal{G}$. This is equivalent to showing that the support of the probability measure $\tilde \mu$, $\supp (\tilde \mu)$ is equal to $\mathcal{S}\times \mathcal{A} \times \mathcal{G}$. Because $\tilde \mu (s, a, g) \geq (1-\alpha)\rho_{\mathcal{G}}(g)\nu(s| g)\pi_{\expl}(a|s, g)$, we know that $\supp (\rho_{\mathcal{G}}(g)\nu(s| g)\pi_{\expl}(a|s, g))\subset \supp (\tilde\mu)$. Since for every $s, g$, $\supp (\pi_{\expl}(a|s, g)) = \mathcal{A}$, $\supp (\rho_{\mathcal{G}}(g)\nu(s| g)\pi_{\expl}(a|s, g)) = \supp (\rho_{\mathcal{G}}(g)\nu(s| g))\times\mathcal{A}$. Moreover, $\supp \rho_{\mathcal{G}} = \mathcal{G}$. Therefore, we only need to prove that for every $g$,  $\supp (\nu(.|g) )= \mathcal{S}$.

Let $g \in \mathcal{G}$. Because of the definition of $\nu$ and because $p_{K}(k) > 0$ for every $k$, we have $\supp (\nu(s | g) )= \bigcup_{k \geq 0, s_{0}\in \mathcal{S}} \supp \left( ( P^{\pi_{\expl}})^{k}(s|s_{0}, g)\right)$. 

We define the function $\Psi : \mathcal{S} \times \left(\cup_{k \geq 1}\mathcal{A}^{k} \right)\rightarrow \mathcal{S}$, corresponding to the action of sequences of action , as follows: for every $a$, $\Psi(s, a) = \psi(s, a)$, and for every  $k$, $(a_{1}, ..., a_{k}) \in \mathcal{A}^{k}$, $\Psi(s, (a_{1}, ..., a_{k+1})) \deq \psi(\Psi(s, (a_{1}, ..., a_{k})), a_{k+1})$. $\Psi$ is continuous. Moreover, we assumed that for any pair of states $(s, s')$, there is $k \geq 0$ and a sequence of actions $(a_{0}, ..., a_{k})$ such that applying this sequence of actions from $s$ leads to $s'$. This means that for every $s$, $\Psi(s, .)$ is a surjective continuous function.

Moreover, with 
\begin{align}
  \supp(P^{\pi_{\expl}})^{k+1}(s| s_{0}, g) &= \cup_{s\in \supp( P^{\pi_{\expl}})^{k}(s|s_{0}, g)}\supp\left(\psi(s, \cdot)_{\ast}\pi_{\expl}(.|s, g)\right)
  \\ &\supseteq \cup_{s\in \supp( P^{\pi_{\expl}})^{k}(s|s_{0}, g)}\left(\psi(s, \supp (\pi_{\expl}(.|s, g)) )\right)
\end{align}
by using the continuity of $\psi(s, .)$. Then:
\begin{align}
  \supp( P^{\pi_{\expl}})^{k+1}(s| s_{0}, g) &\supseteq \cup_{s\in \supp(P^{\pi_{\expl}})^{k}(s|s_{0}, g)}\left(\psi(s,  \mathcal{A})\right)
  \\ &= \psi(\supp(P^{\pi_{\expl}})^{k}(s|s_{0}, g)\times  \mathcal{A})
\end{align}
By induction, we have: $\supp(P^{\pi_{\expl}})^{k}(s| s_{0}, g) \supseteq \Psi(s, \mathcal{A}^{k})$. Therefore:
\begin{align}
  \supp (\nu(s | g) ) &= \bigcup_{k \geq 0, s_{0}\in \mathcal{S}} \supp (P^{\pi_{\expl}})^{k}(s|s_{0}, g)
  \\ &\supseteq \bigcup_{k \geq 0, s_{0}\in \mathcal{S}} \Psi(s_{0}, \mathcal{A}^{k})
  \\ &= \bigcup_{s_0 \in \mathcal{S}} \Psi(s_{0}, \bigcup_{k \geq 0}  \mathcal{A}^{k})
  \\ &= \mathcal{S}
\end{align}

This concludes the proof. The main property we use in the theorem is that $\mu_{\mathtt{future}}(g'|s, s', g)$ is independant of $s'$. Therefore, a simple way to remove HER bias is to define $p_{L}(l) = \1_{l=0}$. Still, this would not remove the issue of vanishing rewards, since the fixed point of HER are the same than those of UVFA.

\end{proof}

In the following, we will use again the results derived above. In particular, we know that:

\begin{equation}
  \E\left[\widehat{\delta \theta}_{\text{HER}}\right] = \E_{(s, a, s', g)\sim \mu_{\her}}\left[\partial_{\theta}\frac12 (Q_{\theta}(s, a, g) - R(s, g) - \gamma \sup_{a'}Q(s', a', g))^{2}\right]
\end{equation}
with
\begin{align}
  \mu_{\her}(s, a, s', g) &= (1-\alpha)\rho_{\mathcal{G}}(g)\nu(s| g)\pi_{\expl}(a|s, g)P(s'|s, a) + \alpha \left(\int_{\tilde g}\rho_{\mathcal{G}}(\tilde g)\nu(s| \tilde g)\pi_{\expl}(a|s, \tilde g)\mu_{\mathtt{future}}(g|s', \tilde g)\right)P(s'|s, a)
  \\ \mu_{\mathtt{future}}(g'|s', g) &= \sum_{l}p_{L}(l)\phi_{\ast}(\pi_{\exp}\ast P)^{l}(g'|s', g)
  \\ \nu(s| g) &= \rho_{\mathcal{G}}(g) \int_{s_{0}}\rho_{0}(s_{0}|g)\sum_{k}p_{K}(k)  (\pi_{\exp}\ast P)^{k}(s|s_{0}, g)
\end{align}

\subsection{Proof of HER bias}
\label{app:proof-her-bias}

Let $\mathcal{M} = \langle\mathcal{S}, \mathcal{G}, \mathcal{A}, P, R, \gamma\rangle$ be a multi-goal finite Markov Decision Process, with $\mathcal{G} = \mathcal{S}$ and $R(s, g) = \1_{s=g}$. We define $S = |\mathcal{S}|$ the number of states.

  Let $\tilde{\mathcal{M}}$ be the augmented MDP with a \emph{freeze} action $a^{\ast}$, defined as:
  \begin{itemize}
  \item The augmented state space $\tilde{\mathcal{S}} = \mathcal{S} \times \{0, 1\}$, where $\tilde s = (s, x)$ is said to be frozen if $x=1$. 
  \item The augmented action space $\tilde{\mathcal{A}} = \mathcal{A} \cup \{a^{\ast}\}$, where $a^{\ast}$ is the \emph{freeze} action.
  \item The goal space does not change ($\tilde{\mathcal{G}} = \mathcal{G} = \mathcal{S}$). For an augmented state $\tilde s = (s, x)$, the reward is $\tilde{R}(\tilde s, g) = \tilde{R}((s, x), g) = R(s, g)$
  \item If $\tilde s = (s, x)$ and $\tilde s' = (s', x')$ are two augmented states, the transition operator $\tilde P(\tilde s'|\tilde s, a)$:
    \begin{itemize}
    \item If the state is frozen ($x=1$), the agent can't move: $\tilde P((s', y)|(s, x), a) = \1_{s'=s}\1_{y=1}$
    \item If the state is not frozen ($x=0$) and $a= a^{\ast}$, the agent is sent to a uniformly random frozen state: $\tilde P((s', y)|(s, 0), a) = \1_{y=1}\frac{1}{\mathcal{S}}$
    \item Else, the dynamic is the same than for $\mathcal{M}$: if $x=0$ and $a\neq a^{\ast}$, then $P((s', y)|(s, 0), a) = \1_{y=0}P(s'|s, a)$.
    \end{itemize}

\end{itemize}

We can now prove the existence of MDPs such that HER will be biased in these environments.

\begin{thm}[ (Formal statement of Theorem~\ref{thm:freeze})]
  Let $\mathcal{M}$ be a finite MDP, and $\tilde{\mathcal{M}}$ the augmented MDP with the freeze action $a^{\ast}$ defined above. We assume that for every $s, a, g$ the exploration policy satisfies $\pi_{\expl}(a|s, g) > 0$, and that for every every $s, g$, $\nu(s| g) >0 $, where $\nu$ is defined in equation~(\ref{eq:nu_her}). This means that from the given distribution, every state $s$ has a non-zero probability of being reached when following the exploration policy conditioned by $g$: $\pi_{\expl}(a|s, g)$.
  
Let $Q_{\infty}$ be a fixed point of tabular HER, and $Q^{\ast}$ the true optimal Q-function. Then, for every unfrozen state $(s, 0)$ and goal $g$, HER overstimates the value of action $a^{\ast}$:
\begin{equation}
  Q_{\infty}((s, 0), a^{\ast}, g) > Q^{\ast}((s, 0), a^{\ast}, g)
\end{equation}
\end{thm}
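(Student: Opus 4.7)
The plan is to characterize the tabular HER fixed point at entries of the form $((s,0), a^*, g)$ as a Bellman-type equation in which the true transition kernel is replaced by a \emph{biased} HER-induced next-state distribution $w(\cdot\mid g)$. Concretely, setting the expected semi-gradient HER update at the single coordinate $Q((s,0), a^*, g)$ to zero yields
\begin{equation}
Q_\infty((s,0), a^*, g) \;=\; R((s,0), g) + \gamma\, \E_{\tilde s'\sim w(\cdot\mid g)}\!\left[\sup_{a'} Q_\infty(\tilde s', a', g)\right],
\end{equation}
where $w(\tilde s' \mid g) \propto \mu_\her((s,0), a^*, \tilde s', g)$ is the conditional law of $\tilde s'$ given $((s,0), a^*, g)$ under the HER sampling distribution. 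The whole argument then reduces to showing that $w$ strictly overweights $\tilde s' = (g, 1)$ relative to the true uniform kernel $P((\cdot, 1)\mid(s,0), a^*) = 1/|\mathcal{S}|$, and plugging in the (easily computed) values at frozen states.

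To compute $w$, I would invoke the formula for $\mu_\her$ derived in Appendix~\ref{app:convergence-her}, together with the key structural observation that once $a^*$ is played the trajectory is trapped at the sampled frozen state $(s', 1)$, so $\phi(s_{K+L+1}) = s'$ for every $L \geq 0$ and therefore $\mu_{\mathtt{future}}(g \mid (s', 1), \tilde g) = \1_{g = s'}$. Substituting into the formula yields
\begin{equation}
\mu_\her\bigl((s,0), a^*, (s', 1), g\bigr) \;=\; \frac{1}{|\mathcal{S}|}\bigl[A(g) + B\, \1_{g = s'}\bigr],
\end{equation}
with $A(g) = (1-\alpha)\rho_\mathcal{G}(g)\nu((s,0)\mid g)\pi_\expl(a^*\mid(s,0), g)$ and $B = \alpha\int \rho_\mathcal{G}(\tilde g)\nu((s,0)\mid\tilde g)\pi_\expl(a^*\mid(s,0),\tilde g)\,\d\tilde g$, both strictly positive by the assumed positivity of $\pi_\expl$ and $\nu$. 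Normalizing over $s'$ gives the mixture
\begin{equation}
w((s',1)\mid g) \;=\; (1-\beta(g))\cdot\frac{1}{|\mathcal{S}|} + \beta(g)\,\1_{s'=g}, \qquad \beta(g) = \frac{B}{|\mathcal{S}|\,A(g) + B} \;\in\; (0, 1),
\end{equation}
which is exactly the HER bias: the fictitious kernel strictly favours the frozen state $(g, 1)$ matching the goal.

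To finish, I would evaluate $V_\infty(\tilde s', g) \deq \sup_{a'} Q_\infty(\tilde s', a', g)$ at frozen states: since every action at $(s', 1)$ self-loops deterministically with reward $\1_{s' = g}$, each entry satisfies the decoupled scalar equation $x = \1_{s'=g} + \gamma x$, so $V_\infty((s', 1), g) = \1_{s' = g}/(1-\gamma) = V^*((s',1), g)$. Substituting back, only the term $s' = g$ survives the sum, giving $Q_\infty((s,0), a^*, g) = \1_{s=g} + \gamma\, w((g, 1)\mid g)/(1-\gamma)$, whereas $Q^*((s,0), a^*, g) = \1_{s=g} + \gamma/[|\mathcal{S}|(1-\gamma)]$; since $w((g, 1)\mid g) = 1/|\mathcal{S}| + \beta(g)(1 - 1/|\mathcal{S}|) > 1/|\mathcal{S}|$ (using $\beta(g) > 0$ and $|\mathcal{S}| \geq 2$), the strict overestimation follows. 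The main obstacle I anticipate is the careful computation of $\mu_\her$ in this partially stochastic environment and the verification that the integral $B$ is strictly positive, as this is what drives the bias; the rest is algebraic bookkeeping once the degeneracy of $\mu_{\mathtt{future}}$ after $a^*$ has been recognized.
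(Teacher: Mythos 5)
Your proposal is correct and follows essentially the same route as the paper's proof: characterize the fixed point as a Bellman equation under the HER-induced conditional next-state law, observe that $\mu_{\mathtt{future}}$ degenerates to $\1_{g=s'}$ once the freeze action traps the trajectory, compute the values at frozen states exactly, and conclude that the conditional probability of landing on $(g,1)$ strictly exceeds $1/|\mathcal{S}|$. The only cosmetic difference is that you normalize the biased kernel into an explicit mixture with weight $\beta(g)$ (and correctly flag the need for $|\mathcal{S}|\geq 2$), whereas the paper reaches the same conclusion by comparing the unnormalized masses $\mu_{\her}((s,0),a^{\ast},(s',1),g)$ for $s'\neq g$ against the mass at $(g,1)$ and summing.
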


\begin{proof}

  The principle of the proof is the following. First, we prove that for \emph{frozen} states $\tilde s = (s, 1)$, HER converge converge to the true value $Q_{\infty}(\tilde s, a, g) =Q^{\ast}(\tilde s, a, g)$. Then, we compute the action-value of action $a^{\ast}$ for every \emph{unfrozen} state for the true $Q^{\ast}$ and for the fixed point $Q_{\infty}$. HER samples transitions $((s, 0), a^{\ast}, (s', 1), g)$. Let us consider the law of $s'$ knowing $s, a^{\ast}, g$: with probability $(1-\alpha)$ the goal $g$ was re-sampled from the \texttt{future} sampling strategy, therefore, because after $a^{\ast}$ the position will be frozen, we know that  $s'=g$, the goal is reached and the final return is $O(\frac{1}{1-\gamma})$. With probability $\alpha$, the goal $g$ the original goal, the law of $s'$ is uniform, and the return is of order $O(\frac{1}{S(1-\gamma)})$. Therefore, when estimating the return after action $a^{\ast}$ with HER, the computed value will be of order $O(\frac{(1-\alpha)}{1-\gamma})$, while the true value is of order $O(\frac{1}{S(1-\gamma)})$.

  We now prove the theorem. We consider $Q_{\infty}$, a fixed point of the algorithm, which means that starting from $Q_{\infty}$, the stochastic update defined by HER has mean $0$: $\E\left[\widehat{\delta Q}_{\text{HER}}\right] = 0$. We know that  
  \begin{align}
    \E\left[\widehat{\delta Q}_{\text{HER}}\right] &= \E_{(s, a, s', g)\sim \mu_{\her}}\left[\partial_{\theta}\frac12 (Q_{\theta}(s, a, g) - R(s, g) - \gamma \sup_{a'}Q(s', a', g))^{2}\right]
    \\ &= \E_{(s, a, s', g)\sim \mu_{\her}}\left[E_{s, a, g} (Q_{\theta}(s, a, g) - R(s, g) - \gamma \sup_{a'}Q(s', a', g))\right],
  \end{align}
  where $(E_{s, a, g})$ is the canonical basis of the tabular model. Therefore, for every $(s, a, g)$ (because $\mu_{\her}(s, a, g) > 0$ for every $(s, a, g)$), we have:
  \begin{align}
    Q_{\infty}(s, a, g) = R(s, g) + \gamma \E_{s'\sim\mu_{\her}(s'|s, a, g)}\left[\sup_{a'}Q_{\infty}(s',a', g)\right]
  \end{align}

  First, we prove that the values of frozen states $Q_{\infty}((s, 1), a, g)$ is equal to the true optimal Q-values. In that case, $\tilde P(\tilde s'|(s, 1), a) = \delta_{(s, 1)}(\tilde s')$ we can check that $\mu_{\her}(\tilde s'|(s, 1), a, g) = \delta_{(s, 1)}(\tilde s')$. Therefore:
  \begin{align}
    Q_{\infty}((s, 1), a, g) = R(s, g) + \gamma \sup_{a'}Q_{\infty}((s, 1),a', g)
  \end{align}
  Therefore for every $s, a, g$, $Q_{\infty}((s, 1),a, g) = \frac{1}{1-\gamma}R(s, g)$.

  Then, we compute the values of $Q_{\infty}((s, 0), a^{\ast}, g)$, with the \emph{freeze} action for an unfrozen state. We have:
  \begin{align}
  Q_{\infty}((s, 0), a^{\ast}, g) &= R(s, g) + \gamma \E_{(s', y)\sim\mu_{\her}((s', y)|(s, 0), a^{\ast}, g)}\sup_{a'}Q_{\infty}((s', y), a', g)
    \\ &= R(s, g) + \frac{\gamma}{1-\gamma}  \E_{(s', y)\sim\mu_{\her}((s', 1)|s, a^{\ast}, g)}\left[\1_{s'=g}\right]
    \\ &= R(s, g) + \frac{\gamma}{1-\gamma} \mu_{\her}((s', y)=(g, 1)|s, a, g)
\end{align}
because $\mu_{\her}((s', y)|(s, 0), a^{\ast}, g)$ is non zero only if $y=1$, and $Q_{\infty}((s', 1), a', g) = \frac{1}{1-\gamma}R(s', g) = \frac{1}{1-\gamma}\1_{s'=g}$. We now compute $\mu_{\her}((s', y)=(g, 1)|s, a, g)$. We use that $P((s', y)|(s, 0), a^{\ast}) = \1_{y=1}/S$, and $\mu_{\mathtt{future}}(g|(s', y)) = \1_{s'}$ if $y=1$.
\begin{align}
    \mu_{\her}((s, 0), a^{\ast}, (s', 1), g) &= (1-\alpha)\mu_{0}((s, 0), g)\pi_{\expl}(a^{\ast}|(s, 0), g)\frac{1}{S} + \alpha \left(\int_{\tilde g}\mu_{0}((s, 0), \tilde g)\pi_{\expl}(a^{\ast}|(s, 0), \tilde g)\right)\frac{1}{S}\1_{g=s'}
\end{align}
Therefore, for every $s' \neq g$: $\mu_{\her}((s, 0), a^{\ast}, (s', 1), g) < \mu_{\her}((s, 0), a^{\ast}, (g, 1), g)$. So:
\begin{align}
  \sum_{s'}\mu_{\her}((s, 0), a^{\ast}, (s', 1), g) < S\mu_{\her}((s, 0), a^{\ast}, (g, 1), g)
\end{align}
and finally $\mu_{\her}((s', y)=(g, 1)|(s, 0), a^{\ast}, g) > \frac{1}{S}$. Then we have:
\begin{align}
  Q_{\infty}((s, 0), a^{\ast}, g) >  R(s, g) + \frac{\gamma}{S(1-\gamma)} 
\end{align}
On the contrary, we can easily check that for any policy $\pi$, $Q^{\pi}((s, 0), a^{\ast}, g) = R(s, g) + \frac{\gamma}{S(1-\gamma)}$. In particular, by taking $\pi = \pi^{\ast}$, we have:
\begin{align}
  Q_{\infty}((s, 0), a^{\ast}, g) >  Q^{\ast}((s, 0), a^{\ast}, g)
\end{align}
\end{proof}

\section{Goal-dependent $Q$-functions in continuous spaces}

\subsection{Optimal Bellman Operator for action-value measures}
\label{app:optQ}

With continuous states and goals, in a stochastic environment, the  goal-dependent optimal $Q$-function $Q^{\ast}_{\varepsilon}$ with reward $R_{\varepsilon}(s, g) = \1_{\|\phi(s)-g\|\leq\varepsilon}$ vanishes when $\varepsilon\rightarrow 0$: the
probability of exactly reaching a goal state is usually $0$. Likewise, a
direct application of TD would never learn anything because rewards would
likely never be observed.

Instead, the goal-dependent $Q$-function is a \emph{measure} over goals.
Intuitively, for every infinitesimally small set of goals $\d g$, the
quantity $Q^{\ast}(s,a,\d g)$ is the expected amount of time spent in $\d g$ by
the policy that tries to maximize time spent in $\d g$, starting at $(s,a)$.

Formally, for every state-action $(s,a)$, $Q^\ast(s,a,\cdot)$ is a measure
over goals, solution to the Bellman equation
\begin{equation}
\label{eq:goalQ}
Q^\ast(s,a,\d g)=\delta_{\phi(s)}(\d g)+\gamma \,\E_{s'\sim P(\d s'|s,a)}
\max_{a'} Q^\ast(s',a',\d g)
\end{equation}
where, as above, $\phi\from S\to G$ is the function defining the target features, and
where $\delta_{\phi(s)}$ is the Dirac measure at $\phi(s)$ in goal space.
This is an equality between measures, and the supremum is a supremum of
measures \citep[Section 4.7]{bogachev2007measure}.

Existence and uniqueness of solutions, and a formal derivation of a TD
algorithm, are nontrivial in this setting.
Uniqueness never holds without restrictions: the infinite measure always
solves \eqref{eq:goalQ}. But it is not possible to restrict ourselves to
finite-mass measures, because sometimes the solution we want has infinite
mass.  The need to deal with possibly infinite measures restricts the use
of uniqueness proofs by $\gamma$-contractivity arguments in some norm.

Intuitively, the total mass $Q^\ast(s,a,\mathcal{G})$ of the goal state $\mathcal{G}$ describes
how much different action sequences result in non-overlapping
distributions of states.  If the state space $\mathcal{A}$ is finite and $|\mathcal{A}| = A$, the total mass of the horizon-$t$ part of the $Q^\ast$-function can be as much as $\gamma^t A^t$: this
is realized when every possible sequence of $t$ actions leads to a
disjoint part of the state of goals. In Appendix~\ref{app:exqopt} we
provide a simple continuous MDP in which every action sequence leads to a
distinct state: as there are an infinite number of action sequences when
$t\to \infty$, the total mass $Q^\ast(s,a,\mathcal{G})$ is infinite.

We still prove the existence of a canonical solution, equal both to the
\emph{smallest} solution and to the limit of the horizon-$t$ solution
when $t\to \infty$.

\newcommand{\zerom}{\mathbf{0}}

\begin{thm}[ (Formal statement of Theorem~\ref{nfthm:optQ_maintext})]
\label{thm:optQ_appendix}
Let $\mathcal{Q}$ be the set of functions from $\mathcal{S}\times \mathcal{A}$ into positive
measures over $\mathcal{G}$. Assume the set of actions $\mathcal{A}$ is countable.
Let $T$ be the Bellman
operator mapping $Q\in \mathcal{Q}$ to $T\cdot Q$ with
\begin{equation}
  \label{eq:def_optQ}
T\cdot Q(s,a,\cdot)\deq \delta_{\phi(s)}(\cdot)+\gamma \,\E_{s'\sim P(\d
s'|s,a)} \sup_{a'} Q(s',a',\cdot)
\end{equation}
where the supremum is a supremum of measures and $\delta_{\phi(s)}$ is
the Dirac measure at $\phi(s)\in \mathcal{G}$.

Let $\zerom\in \mathcal{Q}$ be the measure $0$.

Let $Q_t\deq T^t\zerom$. (By expanding the definition of $T$, this is the
solution of the expectimax problem at time horizon $t$.)
Then when $t\to\infty$, for every state-action $(s,a)$ and for every
measurable set $G\subset \mathcal{G}$, $Q_t(s,a,G)$ converges to a finite or
infinite limit $Q^\ast(s,a,G)$. This limit $Q^\ast$ is an element of $\mathcal{Q}$ and
solves the
Bellman equation $TQ^\ast=Q^\ast$. It is the smallest such
solution. In
finite state spaces, it is the only solution with finite mass. Moreover,
for any goal-dependent policy $\pi$, its Bellman operator $T^\pi$ and $Q$-value
$Q^\pi\deq \lim_{t\to\infty} (T^\pi)^t\zerom$ can be defined similarly (see equation~\eqref{eq:def_tpi_q})  and
satisfy $Q^\pi\leq Q^\ast$ as measures.
\end{thm}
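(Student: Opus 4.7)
The plan is to exploit that $T$ is monotone in the pointwise order on $\mathcal{Q}$ (where $Q \leq Q'$ means $Q(s,a,G) \leq Q'(s,a,G)$ for every $s, a$ and every measurable $G$). Monotonicity of $T$ is immediate from the definition: the Dirac term does not depend on $Q$, the lattice supremum of nonneg measures is monotone in each argument, and $\mathbb{E}_{s'}$ preserves inequalities between nonneg measures. The starting point $\mathbf{0}$ satisfies $\mathbf{0} \leq T\mathbf{0}(s,a,\cdot) = \delta_{\phi(s)}$, and iterating monotonicity yields the nondecreasing sequence $Q_t \leq Q_{t+1}$ in $\mathcal{Q}$.

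From here I would construct $Q^\ast$ pointwise: for each $s, a$ and each measurable $G$, the sequence $Q_t(s,a,G) \in [0,\infty]$ is nondecreasing and so converges to a limit $Q^\ast(s,a,G)$. Countable additivity of $Q^\ast(s,a,\cdot)$ follows by applying Beppo--Levi to exchange the monotone limit with the infinite sum over a disjoint decomposition, so $Q^\ast \in \mathcal{Q}$. To verify $T Q^\ast = Q^\ast$, I pass to the limit in the identity $Q_{t+1} = T Q_t$, showing $T Q_t(s,a,G) \uparrow T Q^\ast(s,a,G)$. This demands two limit interchanges: first commuting $Q_t \uparrow Q^\ast$ with the countable lattice supremum $\sup_{a'}$ (this is the cleanest place to use that $\mathcal{A}$ is countable, via the partition formula for the lattice sup of nonneg measures together with monotone convergence), and second commuting it with the expectation $\mathbb{E}_{s'}$ by ordinary monotone convergence. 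Since $Q_{t+1}$ also tends to $Q^\ast$, equality of the two limits gives $T Q^\ast = Q^\ast$.

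Minimality is then cheap: if $Q'$ satisfies $T Q' = Q'$ with $Q' \geq 0$, then $Q_0 = \mathbf{0} \leq Q'$, monotonicity of $T$ propagates $Q_t \leq Q'$ for all $t$, and the limit gives $Q^\ast \leq Q'$. The comparison $Q^\pi \leq Q^\ast$ proceeds the same way, once I observe that for nonneg $Q$ the mixture $\mathbb{E}_{a' \sim \pi} Q(s',a',\cdot)$ is bounded above by the lattice sup $\sup_{a'} Q(s',a',\cdot)$ (since the sup is an upper bound for every element and the mixture is a convex combination of elements), so $T^\pi Q \leq T Q$ pointwise; iterating from $\mathbf{0}$ and passing to the limit gives $Q^\pi \leq Q^\ast$.

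Uniqueness in finite $\mathcal{S}$ requires a separate argument. Since each $Q_t$ is supported on the finite set $\phi(\mathcal{S})$ (Diracs sit there, and the lattice sup and expectation preserve this support), $Q^\ast$ is discrete, and on singletons the lattice supremum reduces to the pointwise max of scalar values; the Bellman equation then decouples per goal $g \in \phi(\mathcal{S})$ into the ordinary scalar Bellman equation, whose bounded solution is unique by the standard Banach contraction argument. For any other finite-mass fixed point $Q'$, one writes $Q' = Q^\ast + U$ where $U$ is supported on $\mathcal{G} \setminus \phi(\mathcal{S})$; the disjointness of supports lets the Bellman equation split and forces $U(s,a,\cdot) = \gamma \mathbb{E}_{s'} \sup_{a'} U(s',a',\cdot)$, a reward-free fixed-point equation from which iteration together with finiteness of total mass and monotonicity forces $U = 0$. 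The principal obstacle throughout is handling the lattice supremum of measures carefully: evaluating $\sup_{a'} Q(s',a',\cdot)$ at a set $G$ is in general strictly larger than the scalar $\sup_{a'} Q(s',a',G)$, so every manipulation of the sup must go through the partition-based formula rather than naive pointwise reasoning. This is most delicate in the limit interchange used to prove $T Q^\ast = Q^\ast$, and in the reward-free fixed-point step that rules out spurious continuous mass.
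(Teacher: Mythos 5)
Your proposal is correct and follows essentially the same route as the paper's proof: monotonicity of $T$, the increasing sequence $T^t\mathbf{0}$, the monotone convergence theorem both for $\sigma$-additivity of the limit and for passing to the limit in $Q_{t+1}=TQ_t$ (commuting the limit with the countable lattice supremum $\sup_{a'}$ and with $\E_{s'}$), minimality by induction from $\mathbf{0}\leq Q'$, and the comparison $Q^\pi\leq Q^\ast$ via bounding the policy mixture by the lattice supremum so that $T^\pi Q\leq TQ$. The one place you go beyond the paper is the finite-state uniqueness claim, which the paper dispatches in a single line by decoupling per goal (implicitly treating the goal space as discrete); your additional ``reward-free'' step for mass on $\mathcal{G}\setminus\phi(\mathcal{S})$ is a reasonable extension, but as sketched it is fragile, because the total mass of a lattice supremum of $|\mathcal{A}|$ measures can be up to $|\mathcal{A}|$ times the largest individual mass, so iterating total masses only contracts when $\gamma|\mathcal{A}|<1$ and one should instead pass to densities with respect to a common dominating measure and apply the scalar reward-free contraction pointwise in $g$.
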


\begin{proof} Assume the action space $\mathcal{A}$
is countable. Let $\Qspace$ be the set of
measurable functions from $\mathcal{S}\times \mathcal{A}$ to the set of measures on $\mathcal{G}$. 

For $Q_1$ and $Q_2$ in $\Qspace$, we write $Q_1\leq Q_2$ if
$Q_1(s,a,X)\leq Q_2(s,a,X)$ for any state-action $(s,a)$ and measurable
set $X\subset \mathcal{G}$.
The
Bellman operator of Definition~\ref{eq:def_optQ} acts on $\Qspace$ and is
obviously monotonous: if $Q_1\leq Q_2$ then $TQ_1\leq TQ_2$. 

Since the zero measure $\zerom\in \Qspace$ is the smallest measure, we have
$T\zerom\geq \zerom$. Since $T$ is monotonous, by induction we have
$T^{t+1}\zerom\geq T^t\zerom$ for any $t\geq 0$. Thus, the
$(T^t\zerom)_{t\geq 0}$ form an increasing sequence of measures.
Therefore, for every state-action $(s,a)$ and measurable set $X$, the
sequence $(T^t\zerom)(s,a,X)$ is increasing, and thus converges to a
limit. We denote this limit by $Q^\ast(s,a,X)$. We have to prove that
$Q^\ast\in \mathcal{Q}$, namely, that for each $(s,a)$, $Q^\ast(s,a,\cdot)$ is
a measure. The only non-trivial point is $\sigma$-additivity.

Denote
$Q_t\deq T^t\zerom$.
If $(X_i)$
is a countable collection of disjoint measurable sets, we have
\begin{multline}
Q^\ast(s,a,\cup_i X_i)=\lim_{t\to\infty} Q_t(s,a,\cup_i X_i)=\lim_{t\to\infty} \sum_i
Q_t(s,a,X_i)\\=\sum_i \lim_{t\to\infty} Q_t(s,a,X_i)=\sum_i Q^\ast(s,a,X_i) 
\end{multline}
where the limit commutes with the sum thanks to the monotone convergence
theorem, using that $Q_t$ is non-decreasing. Therefore, $Q^\ast$ is a
measure.

Let us prove that $TQ^\ast=Q^\ast$. We have
\begin{equation}
TQ^\ast(s,a,\cdot)=\delta_{\phi(s)}+\gamma \E_{s'\sim P(s'|s,a)}\sup_{a'}
Q^\ast(s',a',\cdot)
\end{equation}
by definition. For any $s'$, denote $\tilde
Q_t(s',\cdot)\deq\sup_{a'} Q_t(s',a',\cdot)$ where the supremum is as
measures over $\mathcal{G}$. Since $Q_t$ is non-decreasing, so is $\tilde Q_t$.

For any state $s'$, we have
\begin{equation}
\sup_{a'}
Q^\ast(s',a',\cdot)=\sup_{a'} \sup_t Q_t(s',a',\cdot)=
\sup_t \sup_{a'} Q_t(s',a',\cdot)=\sup_t \tilde Q_t(s',\cdot)
\end{equation}
since supremums commute. Now, since $\tilde Q_t$ is non-decreasing,
thanks to the monotone convergence theorem, the supremum commutes with
integration over $s'\sim P(s'|s,a)$ (which does not depend on $t$), namely,
\begin{multline}
\E_{s'\sim P(s'|s,a)}\sup_{a'}
Q^\ast(s',a',\cdot)=
\E_{s'\sim P(s'|s,a)} \sup_t \tilde Q_t(s',\cdot)
\\=
\sup_t \E_{s'\sim P(s'|s,a)}\tilde Q_t(s',\cdot)
=
\sup_t \E_{s'\sim P(s'|s,a)} \sup_{a'} Q_t(s',a',\cdot)
\end{multline}
and so $TQ^\ast=\sup_t TQ_t$. Now, since $Q^t=T^t\zerom$, we have
$TQ^t=T^{t+1}\zerom$, so that $\sup_{t\geq 0} TQ^t=\sup_{t\geq
1}T^t\zerom=Q^\ast$. So $Q^\ast$ is a fixed point of $T$.

Let us prove that $Q^\ast$ is the smallest such fixed point. Let $Q'$
such that $TQ'=Q'$. Since $\zerom\leq Q'$ and $T$ is monotonous, we have
$T\zerom \leq TQ'=Q'$. By induction, $T^t\zerom\leq Q'$ for any $t\geq
0$. Therefore, $\sup_t T^t\zerom \leq Q'$, i.e., $Q^\ast\leq Q'$.

The statement for finite state spaces reduces to the 
classical uniqueness property of
the usual $Q$ function, separately for each goal state.

Optimality of the policy is proved by following classical arguments. Let
$\pi(a|s,g)$ be any goal-dependent policy and let $Q\in \Qspace$. 
Define the
Bellman operator associated to $\pi$ by
\begin{equation}
  \label{eq:def_tpi_q}
(T^\pi Q)(s,a,\cdot)\deq \delta_s + \gamma \E_{s'\sim P(s'|s,a)} \sum_{a'}
(\pi\ast Q)(s',a',\cdot)
\end{equation}
where for each action $a$, the measure $(\pi \ast Q)\in \Qspace$ is
defined via $(\pi \ast Q)(s',a',X)\deq \int_{g\in X} \pi(a'|s',g) Q(s',a',\d
g)$, so that the sum of $(\pi \ast Q)$ over all actions $a'$ represents the
expected value of $Q(s',a',\cdot)$ under the goal-dependent policy $\pi$;
this formulation allows the policy to depend on the goal.

Since $\pi$ is a probability distribution, we have
\begin{equation}
\sum_{a'} (\pi \ast Q)(s',a',X)\leq \max_{a'} Q(s',a',X)
\end{equation}
where the right-hand-side is a maximum of measures (thus selecting the best
$a'$ for each goal): this is clear from decomposing $X$ into the
components where each action $a'$ is optimal.

Therefore, for any $Q\in \Qspace$, we have the inequality of measures
\begin{equation}
T^\pi Q\leq TQ
\end{equation}
where $T$ is the optimal Bellman operator from above. Since the latter is monotonous over
$Q\in \Qspace$, for any $Q,Q'\in \Qspace$ with $Q\leq Q'$, we have $T^\pi
Q\leq T Q'$.

Consequently, by induction, $(T^\pi)^t \zerom \leq T^t\zerom$ for any
horizon $t\geq 0$. The monotonous limit $Q^\pi\deq \lim_{t\to\infty}(T^\pi)^t\zerom$
exists for the same reasons as $T^t\zerom$, representing the $Q$-function
(measure) of policy $\pi$. Therefore, $Q^\pi=\lim_{t\to\infty} (T^\pi)^t
\zerom\leq \lim_{t\to\infty} T^t\zerom=Q^\ast$. This proves that the
policy $\pi$ has returns no greater than $Q^\ast$.


\end{proof}

\subsection{Parametric goal-dependent $Q$-learning.}
\label{app:paramq}

In this section, we formally derive the $\ddqn$ update introduced in Section~\ref{sec:unbiased-dqn}. Let us consider parametric models for $Q$:
\begin{equation}
\label{eq:Qmodel}
Q_\theta(s,a,\d g)\deq q_\theta(s,a,g)\rho_{\mathcal{G}}(\d g)
\end{equation}
and we will learn $q_\theta$. 
\footnote{The factor $\rho_{\mathcal{G}}$, or some other measure, is needed to get a well-defined object in
continuous state spaces. In discrete spaces, it results in an
$g$-dependent scaling of the $Q$ function, which still has the same
optimal policy for each $g$.}

The resulting parametric update is  off-policy:
we assume access to a sampling distribution $(s,a,s') \sim \rhoSA(\d s, \d a)P(\d s'|s, a)$ in a Markov decision. Typically, this can correspond to transitions $(s_{k}, a_{k}, s_{k+1})$ from exploration trajectory with $g\sim\rho_{\mathcal{G}}$,  $s_{0}\sim \rho_{0}(.|g)$, then $a_{t}\sim \pi_{\expl}(.|s_{t}, g)$ and $s_{t+1}\sim P(.|s_{t}, a_{t})$. Here, our statement with a distribution $\rhoSA$ is more general. Given
a measure-valued function of $(s,a)$, such as $Q(s,a,\d g)$, we define its norm
as
\begin{equation}
\label{eq:normq}
\norm{Q}^2_{\rhoSA,\rho_{\mathcal{G}}}\deq \E_{(s,a)\sim\rhoSA,\,g\sim \rho_{\mathcal{G}}}[
q(s,a,g)^2]
\end{equation}
where $q(s,a,g)\deq Q(s,a,\d g)/\rho_{\mathcal{G}}(\d g)$ is the density of $Q$
with respect to $\rho_{\mathcal{G}}$, if it exists (otherwise the norm is infinite).

Let $Q_{\theta} = q_{\theta}(s, a, g)\rho(\d g)$ be our current estimate of $Q$, and $Q_{\tar}(s, a, \d g) = q_{\tar}(s, a, g)\rho_{\mathcal{G}}(\d g)$ a target measure, we define the loss:
\begin{equation}
  \label{eq:def_JQ}
  J_{Q}(\theta) \deq \norm{Q_{\theta} - T\cdot Q_{\tar}}^2_{\rhoSA,\rho_{\mathcal{G}}}
\end{equation}
where $T$ is the optimal Bellman operator, and our goal is to obtain an unbiased estimate of $\partial J_{Q}(\theta)$.
In the statement of Theorem~\ref{thm:update_q}, there is a hidden mathematical subtlety with continuous states regarding the norm $\norm{Q_{\theta} - T\cdot Q_{\tar}}^2_{\rhoSA,\rho_{\mathcal{G}}}$.
Indeed, $Q_{\theta}(s, a, \d g) = q_{\theta}(s, a, g)\rho_{\mathcal{G}}(\d g)$ is absolutely continuous with
respect to $\rho_{\mathcal{G}}$, while $T\cdot Q_\tar$ is not, due to the Dirac term $\delta_{\phi(s)}(\d g)$. This makes the
norm $\norm{Q_{\theta}-T\cdot Q_\tar}^2_{\rhoSA,\rho_{\mathcal{G}}}$ infinite (see its
definition in \eqref{eq:normq}). However, the \emph{gradient} of this norm
is actually still well-defined. There are at least two ways to handle this
rigorously, which lead to the same result. It is possible to do the computation in the finite state space case and observe
that the resulting gradient still makes sense in the continuous case
(which can be obtained by a limiting argument). The other way we will use here, is to
observe that the loss
$J_{Q}(\theta)$ is equal to 
\begin{equation}
J_{Q}(\theta) = \frac12\norm{Q_{\theta}}^2_{\rhoSA,\rho_{\mathcal{G}}}
-\langle Q_{\theta},T Q_\tar\rangle_{\rhoSA,\rho_{\mathcal{G}}}
+\frac12\norm{T\cdot Q_\tar}^2_{\rhoSA,\rho_{\mathcal{G}}}
\end{equation}

where
\begin{equation}
  \langle Q_{1},Q_{2}\rangle_{\rhoSA,\rho_{\mathcal{G}}} \deq \int_{s, a}Q_{1}(s, a, \d g)Q_{2}(s, a, \d g)\frac{1}{\rho_{\mathcal{G}}(\d g)}.
\end{equation}
Even though $\|Q_{1}-Q_{2}\|_{\rhoSA,\rho_{\mathcal{G}}}$ is finite only if $Q_{1}$ and $Q_{2}$ are both absolutely continuous with respect to $\rho_{\mathcal{G}}(\d g)$, the dot product $\langle Q_{1},Q_{2}\rangle_{\rhoSA,\rho_{\mathcal{G}}}$ is still defined if only one of $Q_{1}$ or $Q_{2}$ is absolutely continuous. Therefore, we can define:
\begin{equation}
  \label{eq:def_offset_Jq}
  J'_{Q}(\theta)=\frac12\norm{Q_{\theta}}^2_{\rhoSA,\rho_{\mathcal{G}}}-\langle
Q_{\theta},T\cdot Q_\tar\rangle_{\rhoSA,\rho_{\mathcal{G}}}
\end{equation}
For a given $Q_\tar$, $J'_{Q}(\theta)$ and $J_{Q}(\theta)$ have the same minima and gradients, but $J'_{Q}(\theta')$ is always well defined and finite. Namely, $J_{Q}$ and $J'_{Q}$ differ by a constant in the finite case, and by an
``infinite constant'' in the continuous case. We will 
work with the loss $J'_{Q}$, which is finite even in the continuous case.

\begin{thm}[ (Formal statement of Theorem~\ref{thm:update_q})]
  \label{thm:paramq}
   Let $Q_{\theta}(s, a, \d g) = q_{\theta}(s, a, g)\rho_{\mathcal{G}}(\d
  g)$ be a current estimate of $Q^{\ast}(s, a, \d g)$. Let likewise 
  $Q_{\tar}(s, a, \d g) = q_{\tar}(s, a, g)\rho_{\mathcal{G}}(\d
    g)$ be
    a target $Q$-function. We consider the loss function $
  J'_{Q}(\theta)$ defined in equation~\eqref{eq:def_offset_Jq}.

    We consider the following update to bring
  $Q_\theta$ closer to $TQ_{\tar}$ with $T$ the optimal Bellman operator: Let $(s, a, s')\sim \rho_{SA}(\d s, \d a)P(s'|s, a)$ be samples of the environment and $g\sim \rho_{\mathcal{G}}$ sampled independently. Let $\widehat{\delta \theta}_{\ddqn}(s, a, s', g)$ be
\begin{equation}
  \label{eq:q_udpate_delta}
  \widehat{\delta\theta}_{\ddqn}(s, a, s', g) \deq \partial_\theta q_\theta(s,a,\phi(s))+\partial_\theta
q_\theta(s,a,g)\left(\gamma \max_{a'} q_{\tar}(s',a',g)-q_\theta(s,a,g)\right)
  \end{equation}
  Then $\widehat{\delta\theta}_{\ddqn}$ is an unbiased estimate of $\partial_{\theta}J'_{Q}(\theta)$:
        $\E\left[\widehat{\delta\theta}_{\ddqn}\right] = - \partial_{\theta}J'_{Q}(\theta)$.

        In particular, the true optimal state-action measure $Q^\ast$ is
	a fixed point of this update: if $Q_\theta=Q_{\tar}=Q^{\ast}$
	then $\E\left[\widehat{\delta\theta}_{\ddqn}\right] =0$.
\end{thm}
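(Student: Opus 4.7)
The plan is to differentiate the finite offset loss $J'_{Q}(\theta)$ defined in \eqref{eq:def_offset_Jq} term by term, using that $Q_\theta$ has density $q_\theta$ with respect to $\rho_{\mathcal{G}}$, and then match the result to $-\E\left[\widehat{\delta\theta}_{\ddqn}\right]$.

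First I would compute the gradient of $\tfrac12\|Q_\theta\|^2_{\rhoSA,\rho_{\mathcal{G}}}$. Since $Q_\theta(s,a,\d g)=q_\theta(s,a,g)\rho_{\mathcal{G}}(\d g)$, the definition \eqref{eq:normq} gives
\begin{equation*}
\tfrac12\|Q_\theta\|^2_{\rhoSA,\rho_{\mathcal{G}}}=\tfrac12\,\E_{(s,a)\sim\rhoSA,\,g\sim\rho_{\mathcal{G}}}\left[q_\theta(s,a,g)^2\right],
\end{equation*}
whose gradient is $\E\left[q_\theta(s,a,g)\,\partial_\theta q_\theta(s,a,g)\right]$ under the same sampling. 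This produces the term $-\partial_\theta q_\theta(s,a,g)\,q_\theta(s,a,g)$ appearing in $\widehat{\delta\theta}_{\ddqn}$.

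Second, I would expand the cross term $\langle Q_\theta,T\cdot Q_\tar\rangle_{\rhoSA,\rho_{\mathcal{G}}}$. By Definition-Theorem~\ref{nfthm:optQ_maintext},
\begin{equation*}
T\cdot Q_\tar(s,a,\d g)=\delta_{\phi(s)}(\d g)+\gamma\,\E_{s'\sim P(\d s'|s,a)}\sup_{a'}Q_\tar(s',a',\d g).
\end{equation*}
The key observation I would invoke is that all measures $Q_\tar(s',a',\cdot)$ share the common reference measure $\rho_{\mathcal{G}}$ (they are $q_\tar(s',a',g)\rho_{\mathcal{G}}(\d g)$), so the measure supremum over the countable action set equals the density supremum against that reference: $\sup_{a'}Q_\tar(s',a',\d g)=(\sup_{a'}q_\tar(s',a',g))\,\rho_{\mathcal{G}}(\d g)$. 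Plugging into the dot product, the Dirac piece integrates against the density $q_\theta$ to give $q_\theta(s,a,\phi(s))$, while the continuous piece becomes an expectation under $g\sim\rho_{\mathcal{G}}$, $s'\sim P(\cdot|s,a)$ of $q_\theta(s,a,g)\,\gamma\sup_{a'}q_\tar(s',a',g)$. Differentiating in $\theta$ and exchanging $\partial_\theta$ with the integrals (which is legitimate since $Q_\tar$ does not depend on $\theta$), I obtain
\begin{equation*}
\partial_\theta\langle Q_\theta,T\cdot Q_\tar\rangle=\E\left[\partial_\theta q_\theta(s,a,\phi(s))\right]+\gamma\,\E\left[\partial_\theta q_\theta(s,a,g)\sup_{a'}q_\tar(s',a',g)\right].
\end{equation*}
Assembling the two contributions yields $-\partial_\theta J'_Q(\theta)=\E\left[\widehat{\delta\theta}_{\ddqn}(s,a,s',g)\right]$, which is the main identity. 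The fixed-point claim then follows because $TQ^\ast=Q^\ast$ by Definition-Theorem~\ref{nfthm:optQ_maintext}, so substituting $Q_\theta=Q_\tar=Q^\ast$ makes $J'_Q(\theta)$ coincide (up to a constant) with $\tfrac12\|Q_\theta-Q^\ast\|^2_{\rhoSA,\rho_{\mathcal{G}}}$, whose gradient vanishes at the minimiser $q_\theta=q^\ast$.

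The main obstacle I anticipate is purely measure-theoretic: justifying that $\sup_{a'}Q_\tar(s',a',\d g)$ has density $\sup_{a'}q_\tar(s',a',g)$ against $\rho_{\mathcal{G}}$, and that Fubini/dominated convergence apply when pushing $\partial_\theta$ and $\E_{s'}$ through the Dirac-plus-density decomposition of $T\cdot Q_\tar$. The countability of $\mathcal{A}$ assumed in Theorem~\ref{thm:optQ_appendix} handles the supremum of measures, and the whole manipulation becomes rigorous precisely because we work with $J'_Q$ rather than the formally infinite $J_Q$: every individual term $\langle Q_\theta,\delta_{\phi(s)}\rangle=q_\theta(s,a,\phi(s))$ and $\langle Q_\theta,q_\tar\rho_{\mathcal{G}}\rangle$ is finite under mild regularity of $q_\theta,q_\tar$, so no cancellation of infinities is needed.
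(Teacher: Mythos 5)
Your proposal follows essentially the same route as the paper's proof: split $J'_Q$ into the quadratic term $\tfrac12\|Q_\theta\|^2_{\rhoSA,\rho_{\mathcal{G}}}$ and the cross term $\langle Q_\theta, T\cdot Q_\tar\rangle_{\rhoSA,\rho_{\mathcal{G}}}$, differentiate each, use that the Dirac part of $T\cdot Q_\tar$ pairs with $q_\theta$ to give $q_\theta(s,a,\phi(s))$ while the absolutely continuous part (with density $\gamma\,\E_{s'}\sup_{a'}q_\tar(s',a',g)$) gives the Bellman-gap term, and conclude the fixed-point claim from $TQ^\ast=Q^\ast$. The measure-theoretic points you flag (the supremum of measures sharing the reference $\rho_{\mathcal{G}}$, and the finiteness of $J'_Q$ versus $J_Q$) are precisely the ones the paper addresses, so the argument is correct and matches.
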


Here we have presented the update using a fixed ``target network'' with parameter
$\theta_0$ (typically a previous value of $\theta$), a common practice
for parametric $Q$-learning.

For this theorem, we sample goals $g$ independently of $(s, a, s')$. In practice, this could be a source of variance, as sampling goals far from the current state should  produce close-to-$0$ Q-values. If we instead sample goals from a distribution $\mu(g|s, a)$, this introduces an implicit \emph{scaling} factor $\alpha(s, g)$ to the reward. This is discussed in details and the end of Appendix~\ref{app:value-measure-continuous} in the case of the $V$-function.

\begin{proof}
By definition of the optimal Bellman operator $T$ and the target $Q_{\tar}$, we have:
\begin{align}
TQ_{\tar}(s,a, \d g)&=\delta_{\phi(s)}(\d g)+\gamma \,\E_{s'\sim P(s'|s, a)}\left[ \sup_{a'}q_{\tar}(s',a', g)\right]\rho_{\mathcal{G}}(\d g)
\end{align}

By definition of $J'_{Q}(\theta)$ and of the norm
$\norm{\cdot}_{\rhoSA, \rho_{\mathcal{G}}}$, we have 
\begin{align}
\label{eq:j}
  J'_{Q}(\theta) &= \frac12\norm{Q_{\theta}}^2_{\rhoSA,\rho}-\langle Q_{\theta},T Q_\tar\rangle_{\rhoSA,\rho}
  \\ &= \frac12\int_{s, a, g}q_{\theta}^{2}(s, a, g)\rhoSA(\d s, \d a)\rho(\d g) - \int_{s, a, g}q_{\theta}(s, a, g)(T\cdot Q_{\tar})(s, a, \d g)\rhoSA(\d s, \d a) 
\end{align}
Consequently,
\begin{align}
\partial_\theta J'(\theta) &= \int_{s, a, g}\partial_{\theta}q_{\theta}(s, a, g) q_{\theta}(s, a, g)\rhoSA(\d s, \d a)\rho_{\mathcal{G}}(\d g) - \int_{s, a, g}\partial_{\theta}q_{\theta}(s, a, g)TQ_{\tar}(s, a, \d g)\rhoSA(\d s, \d a) 
  \\ &=\int_{s, a, g}  \rhoSA(\d s, \d a) \partial_{\theta}q_{\theta}(s, a, g)\left(Q_{\theta}(s, a, \d g) -  TQ_{\tar}(s, a, \d g)    \right)  
       \label{eq:grad_derivation}
\end{align}
assuming $q_\theta$ is smooth enough so that the derivative makes sense and
commutes with the integral.

Moreover, we have:
\begin{multline}
TQ_{\tar}(s,a, \d g)-Q_\theta(s,a, \d g)=\delta_{\phi(s)}(\d g)+\gamma \,\E_{s'\sim P(
  s'|s, a)}[ \sup_{a'}q_{\tar}(s',a', g) - q_{\theta}(s, a, g)]\rho_{\mathcal{G}}(\d g)
\label{eq:Qgap}
\end{multline}
Therefore, 
\begin{align}
  \label{eq:dJ_final}
  \begin{split}
    -\partial_\theta J'(\theta) = &\int_{s, a} \rhoSA(\d s, \d a)\partial_\theta
    q_\theta(s,a, g)\delta_{\phi(s)}(\d g)
    \\  &+ \int_{s, a, g} \rhoSA(\d s, \d a)\rho_{\mathcal{G}}(\d g)\left(\gamma \,\E_{s'\sim P(
      s'|s, a)}[ \sup_{a'}q_{\theta_0}(s',a', g) - q_{\theta}(s, a, g)]\right)
\end{split}
          \\
  \begin{split}
            = &\int_{s, a} \rhoSA(\d s, \d a)\partial_\theta
            q_\theta(s,a, \phi(s))
            \\ &+ \int_{s, a, g} \rhoSA(\d s, \d a)\rho_{\mathcal{G}}(\d g)\left(\gamma \,\E_{s'\sim P(
      s'|s, a)}[ \sup_{a'}q_{\theta_0}(s',a', g) - q_{\theta}(s, a, g)]\right)
  \end{split}
\end{align}

\medskip
By definition of $\widehat{\delta\theta_{Q}}$, we have:
\begin{align}
  \begin{split}
    \E_{s, a \sim \rhoSA}\left[\widehat{\delta\theta_{Q}}\right] =  &\E_{s, a \sim \rhoSA, g\sim\rho_{\mathcal{G}}(\d g)}\left[\partial_\theta q_\theta(s,a,\phi(s))\right]
    \\ &+ \E_{s, a \sim \rhoSA, g\sim\rho_{\mathcal{G}}(\d g)}\left[ \partial_{\theta}q_\theta(s,a,g)\left(
  \gamma \sup_{a'}q_{\theta_0}(s',a',g)-q_{\theta}(s,a,g)\right) \right]
  \end{split}
  \\ = -\partial_\theta J'(\theta)
\end{align}

Finally, if $Q_{\tar} = Q_{\theta} = Q^{\ast}$, then $TQ_{\tar} = Q^{\ast}$ and:
\begin{align}
  \partial J'_{Q}(\theta) = \int_{s, a, g}  \rhoSA(\d s, \d a) \partial_{\theta}q_{\theta}(s, a, g)\left(Q_{\theta}(s, a, \d g) -  TQ_{\tar}(s, a, \d g)    \right)
  \\ &= 0
\end{align}

\end{proof}

\subsection{Examples of MDPs with Infinite Mass for $Q^\ast$}
\label{app:exqopt}

Here are two simple examples of MDPs with finite action space, for which
the mass of the goal-dependent $Q$-measure $Q^\ast(s,a,\d g)$ is infinite.
The first has discrete states, the second, continuous ones.

Take for $\mathcal{S}$ an infinite rooted dyadic tree, namely, $\mathcal{S}=\{\emptyset,
0,1,00,01,\ldots\}$ the set of binary strings of finite length $k\geq 0$, and $\mathcal{G} = \mathcal{S}$.
Consider the two actions ``add a $0$ at the end'' and ``add a $1$ at the
end''. Then, for every state $s$, $Q^\ast(s,a,\cdot)$ is a measure that
gives mass $\gamma^k$ to all states $g$ that are extensions of $s$ by a
length-$k$ string that starts with $a$. Thus, its mass is $1+\sum_{k\geq
1} \gamma^k 2^{k-1}$. This is infinite as soon as $\gamma\geq 1/2$. This
extends to any number of actions by considering higher-degree trees.

A similar example with continuous states is obtained as follows. Let
$\mathcal{S}=[0;1)\times [0;1)$.
Let
$C=\{\emptyset,
0,1,00,01,\ldots\}$ the dyadic tree above. For each string $w\in X$,
consider the set $B_w\subset \mathcal{S}$ defined as follows: $B_w$ is made of
those points $(x,y)\in \mathcal{S}$ such that the binary expansion of $x$ starts
with $w$, and $y\in [1-1/2^k;1-1/2^{k+1})$ where $k$ is the length of
$w$. Graphically, this creates a tree-like partition of the square $\mathcal{S}$,
where the empty string corresponds to the bottom half, the strings $w=0$
and $w=1$ correspond to two sets on the left and right above the bottom
hald, etc. Define the following MDP with two actions $0$ and $1$: with
action $0$, every
state $s\in B_w$ goes to a uniform random state in $B_{w0}$, and with
action $1$, every state $s\in B_w$ goes to a uniform random state in
$B_{w1}$. The goal-dependent $Q$-function $Q^\ast$ is similar to the
dyadic tree above, but is continuous. Its mass is infinite for the same
reasons.

\section{The successor goal measure $M(s, g, \d g')$}

\subsection{Definition and existence of the successor goal measure}
\label{app:m-operator}

\begin{thm}
  The \emph{successor} measure
  \begin{align}
    \nu^{\pi}(\d s|s_{0}, g) = (1-\gamma)\sum_{k\geq 0}\gamma^{k}(P^{\pi})^{k}(\d s | s_{0}, g)
  \end{align}
  is a well defined probability measure over $\mathcal{S}$ for every $s_{0}, g$. It satisfies the fixed-point equation:
  \begin{align}
    \nu^{\pi}(\d s|s_{0}, g) = (1-\gamma)\delta_{s_{0}}(\d s) + \E_{a\sim \pi(\d a|s_{0}, g), s_{1}\sim P(\d s_{1}|s_{0}, a) }\left[\nu^{\pi}(\d s|s_{0}, g)\right]
  \end{align}
  
  The \emph{successor-goal measure} is defined as:
  \begin{align}
    M^{\pi}(s, g, .) \deq \frac{1}{1-\gamma}\phi_{\ast}\nu^{\pi}(.|s, g)
  \end{align}
  where $\phi_{\ast}$ is the \emph{push-forward} operator on measures. We define the Bellman
operator mapping $M(s, g_{1}, \d g2)$ to $T_{\pi}M$ with
\begin{equation}
(T_{\pi}\cdot M)(s, g_{1}, \d g_{2}) = \delta_{\phi}(s)(\d g_{2}) + \gamma \E_{a\sim\pi(a|s, g_{1}), s'\sim P(\d s'|s, a)}\left[M(s', g_{1}, \d g_{2})\right],
\end{equation}

Then, $M^{\pi}$ is a fixed point of $T^{\pi}$.
\end{thm}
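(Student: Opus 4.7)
\medskip

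The plan is to reduce the fixed-point claim for $M^\pi$ to an analogous fixed-point claim for the discounted visitation measure $\nu^\pi$, which is essentially the classical Bellman identity for state occupancies.

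First I would check well-definedness. For each fixed $(s_0, g)$ and any measurable $A \subset \mathcal{S}$, the partial sums $(1-\gamma)\sum_{k=0}^N \gamma^k (P^\pi)^k(A \mid s_0, g)$ are nondecreasing in $N$ and bounded above by $(1-\gamma)\sum_{k \geq 0} \gamma^k = 1$, so they converge to a value $\nu^\pi(A \mid s_0, g) \in [0,1]$. Monotone convergence gives $\sigma$-additivity: for a countable disjoint union $A = \bigsqcup_i A_i$, each $(P^\pi)^k(\cdot \mid s_0, g)$ is a probability measure, and we can freely interchange the double sum over $k$ and $i$. Setting $A = \mathcal{S}$ gives total mass $1$.

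Next, for the fixed-point equation on $\nu^\pi$, I would split off the $k=0$ term, which yields $(1-\gamma)\delta_{s_0}(\d s)$, and reindex the remaining sum:
\begin{equation*}
\nu^\pi(\d s \mid s_0, g) = (1-\gamma)\delta_{s_0}(\d s) + \gamma(1-\gamma)\sum_{k \geq 0} \gamma^k (P^\pi)^{k+1}(\d s \mid s_0, g).
\end{equation*}
The Chapman--Kolmogorov identity, together with the fact that $P^\pi(\d s_1 \mid s_0, g) = \int_a \pi(a \mid s_0, g) P(\d s_1 \mid s_0, a)$ does not depend on $k$, gives $(P^\pi)^{k+1}(\d s \mid s_0, g) = \int_{s_1} P^\pi(\d s_1 \mid s_0, g)\,(P^\pi)^k(\d s \mid s_1, g)$. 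Applying Tonelli's theorem to swap the sum over $k$ with the integral over $s_1$ (all integrands are nonnegative) and identifying $(1-\gamma)\sum_k \gamma^k (P^\pi)^k(\d s \mid s_1, g) = \nu^\pi(\d s \mid s_1, g)$ produces exactly the stated fixed-point equation for $\nu^\pi$.

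Finally, for $M^\pi$, I would apply the push-forward $\phi_\ast$ to both sides of the $\nu^\pi$ fixed-point equation and divide by $(1-\gamma)$. Push-forward is linear and commutes with integration against the measure $\pi(\d a \mid s_0, g) P(\d s_1 \mid s_0, a)$ on the $(a, s_1)$ variables (this is just a change of variable under $\phi$, justified again by Tonelli on nonnegative integrands). Since $\phi_\ast \delta_{s_0} = \delta_{\phi(s_0)}$ and $\frac{1}{1-\gamma}\phi_\ast \nu^\pi(\cdot \mid s_1, g) = M^\pi(s_1, g, \cdot)$ by definition, we get $M^\pi(s_0, g, \d g') = \delta_{\phi(s_0)}(\d g') + \gamma\,\E_{a \sim \pi(\cdot \mid s_0, g),\, s_1 \sim P(\cdot \mid s_0, a)}\left[M^\pi(s_1, g, \d g')\right]$, which is the Bellman fixed-point equation $T^\pi M^\pi = M^\pi$.

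The only genuinely delicate step is the measure-theoretic bookkeeping: verifying that all the sum/integral swaps are legal. Since every quantity in sight is a nonnegative measure, Tonelli and monotone convergence suffice, but one has to be careful that $\pi$ and $P^\pi$ are jointly measurable in $(s, g)$ for Fubini-type arguments on $g$-dependent transitions to apply — a mild regularity assumption on $\pi$ (measurability in $(s, g)$) is implicit. Everything else is an algebraic manipulation of the defining geometric series.
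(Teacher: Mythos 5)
Your proposal is correct and follows essentially the same route as the paper's proof: convergence of the geometric series of probability measures for well-definedness, peeling off the $k=0$ term and reindexing (via Chapman--Kolmogorov) to get the fixed-point equation for $\nu^\pi$, and then applying the push-forward $\phi_\ast$ to transfer that equation to $M^\pi$. Your added care with Tonelli and monotone convergence only makes explicit what the paper leaves implicit, and the equation you derive correctly carries the factor $\gamma$ and the conditioning on $s_1$ inside the expectation, which the theorem statement itself garbles slightly.
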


\begin{proof}
  For every $k$, $(P^{\pi})^{k}(\d s |s_{0}, g)$ is a probability measure over $\mathcal{S}$. Therefore, for any measurable set $S\subset \mathcal{S}$, the sum $(1-\gamma)\sum_{k\geq 0}\gamma^{k}(P^{\pi})^{k}(S|s_{0}, g) \leq 1$, and the sum converges. $\nu^{\pi}(\d s|s_{0}, g)$ is a positive measure as a convergent sum of positive measure. Its total mass is  $(1-\gamma)\sum_{k\geq 0}\gamma^{k}(P^{\pi})^{k}(\mathcal{S}|s_{0}, g) = (1-\gamma)\sum_{k\geq 0}\gamma^{k} = 1$. Therefore, $\nu^{\pi}(\d s|s_{0}, g)$ is a well-defined probability measure.

  We now prove the fixed point equation. We have:
  \begin{align}
    \nu^{\pi}(\d s|s_{0}, g) &= (1-\gamma)(P^{\pi})^{0}(\d s | s_{0}, g) + (1-\gamma)\sum_{k\geq 1}\gamma^{k}(P^{\pi})^{k}(\d s | s_{0}, g)
    \\ &= (1-\gamma)\delta_{\phi(s_{0})}(\d s) + \left(P^{\pi} \left((1-\gamma)\sum_{k\geq 1}\gamma^{k}(P^{\pi})^{k-1}\right)\right)(\d s | s_{0}, g)
    \\ &= (1-\gamma)\delta_{\phi(s_{0})}(\d s) + \gamma\left(P^{\pi} \ast \nu^{\pi}\right)(\d s|s_{0}, g)
         \\ &= (1-\gamma)\delta_{s_{0}}(\d s) + \E_{a\sim \pi(\d a|s_{0}, g), s_{1}\sim P(\d s_{1}|s_{0}, a) }\left[\nu^{\pi}(\d s|s_{0}, g)\right]
  \end{align}
  where $(P^{\pi}\ast\nu^{\pi})(\d s|s_{0}, g) \deq \int_{s_{1}}P^{\pi}(\d s_{1}| s_{0}, g)\nu^{\pi}(\d s|s_{1}, g)$.
  We now show the Bellman fixed point equation of $M^{\pi}$. We now that 
  \begin{align}
    \nu^{\pi}(\d s|s_{0}, g) = (1-\gamma)\delta_{s_{0}}(\d s) + \E_{a\sim \pi(\d a|s_{0}, g), s_{1}\sim P(\d s_{1}|s_{0}, a) }\left[\nu^{\pi}(\d s|s_{0}, g)\right]
  \end{align}
  By applying the push-forward operator $\phi_{\ast}$ we have:
  \begin{align}
    (1-\gamma)M^{\pi}(s_{0}, g, \d g') &= (1-\gamma)\delta_{s_{0}}(\d s) + \phi_{\ast}\left(\int_{a, s_{1}}\pi(a|s_{0}, g)P(\d s_{1}|s_{0}, a)\nu^{\pi}(\d s|s_{1}, g)\right)
    \\ &= (1-\gamma)\delta_{s_{0}}(\d s) + \left(\int_{a, s_{1}}\pi(a|s_{0}, g)P(\d s_{1}|s_{0}, a)\phi_{\ast}\nu^{\pi}(\d s|s_{1}, g)\right)
         \\ &= (1-\gamma)\delta_{s_{0}}(\d s) +\int_{a, s_{1}}\pi(a|s_{0}, g)P(\d s_{1}|s_{0}, a)M^{\pi}(s_{1}, g, \d g')
  \end{align}
  
\end{proof}


\subsection{The Policy Evaluation Update}
\label{app:update-policy-eval}
In this section, we prove Theorem~\ref{thm:update_m} for learning $M^{\pi}$ via temporal differences algorithm, TD and $TD^{(n)}$. This theorem very similar to Theorem~\ref{thm:paramq}. We directly prove the result for $\tdn$, as the standard $\td$ update stated in Theorem~\ref{thm:update_m} corresponds to the $\tdn$ update for $n=1$.

The resulting parametric update is  on-policy: Let $\pi$ be a policy, 
we assume access to a sampling distribution $(g, s_{0}, ..., ,s_{n}) \sim \rhoSG(\d g, \d s_{0})P^{\pi}(\d s_{1}|s_{0}, g)...P^{\pi}(\d s_{n}|s_{n-1}, g)$, where $\rhoSG$ is any distribution on $\mathcal{S}\times\mathcal{G}$. Typically, this can correpond to couples $(g, s_{k})$ from trajectory with $g\sim\rho_{\mathcal{G}}$,  $s_{0}\sim \rho_{0}(.|g)$,  $s_{t+1}\sim P^{\pi}(.|s_{t},g)$. Here, our statement with a distribution $\rhoSG$ is more general.

Given a measure-valued function of $(s,s)$, such as $M(s,g,\d g')$, we define its norm
as
\begin{equation}
\label{eq:normM}
\norm{M}^2_{\rhoSG,\rho_{\mathcal{G}}}\deq \E_{(s,g)\sim\rhoSG,\,g'\sim \rho_{\mathcal{G}}}[
m(s,g,g')^2]
\end{equation}
where $m(s,g,g')\deq M(s,g,\d g')/\rho_{\mathcal{G}}(\d g')$ is the density of $M^{\pi}(s, g, .)$
with respect to $\rho_{\mathcal{G}}$, if it exists (otherwise the norm is infinite).

\begin{thm}[ (Formal statement of Theorem~\ref{thm:update_m})]
  \label{thm:app_update_m}
  Let $M_{\theta}(s, g, \d g') = m_{\theta}(s, g,
  g')\rho_{\mathcal{G}}(\d g')$ be a current estimate of $M^{\pi}(s, g,
  \d g')$. Let likewise $M_{\tar}(s, g, \d g') = m_{\tar}(s, g,
  g')\rho_{\mathcal{G}}(\d g')$ be a target $M$, and consider the following update to bring $M_{\theta}$ closer to $(T^{\pi})^{n} M_{\tar}$ with $T^{\pi}$ the Bellman operator.

  Let $\tau = (g, s_{0}, ..., s_{n}) \sim \rhoSG(\d g, \d s_{0})P^{\pi}(\d s_{1}|s_{0}, g)...P^{\pi}(\d s_{n}|s_{n-1}, g)$ be a sample of the environment and $g'\sim\rho_{\mathcal{G}}$ is a goal sampled independently. Let $\widehat{\delta \theta}_{\dtdn}$ be
  \begin{equation} \label{eq:app_m_update_tdn}
   \widehat{\delta\theta}_{\dtdn}(\tau, g') \deq
   \sum_{l=0}^{n-1}\gamma^{l}\partial_{\theta}m_{\theta}(s_{0}, g,
   \phi(s_{l})) +   \partial_{\theta}m_{\theta}(s_{0}, g, g')
   \left(\gamma^{n} m_{\theta}(s_{n}, g, g') - m_{\theta}(s_{n}, g, g')
   \right)
 \end{equation}
  Then $\widehat{\delta\theta}_{\dtdn}$  is an unbiased  estimate of the Bellman error:
  $\E_{\tau, g'}\left[\widehat{\delta\theta}_{\dtd}(\tau, g')\right] = \frac{1}{2}\partial_{\theta}\|M_{\theta} - (T^{\pi})^{n}M_{\tar}\|_{\rhoSG, \rho_{\mathcal{G}}}^{2}$.

  In particular, the true  $M^{\pi}$ is a fixed point of this udpate: if $M_{\theta} = M_{\tar} = M^{\pi}$, then 
  \begin{equation}
\E\left[\widehat{\delta\theta}_{\dtdn}\right]=0
\end{equation}

    \end{thm}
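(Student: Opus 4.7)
The plan is to mirror the argument used for Theorem~\ref{thm:paramq} (the $\ddqn$ update), adapted to policy evaluation with the $n$-step Bellman operator $(T^\pi)^n$ and to on-policy sampling from $\rhoSG$. Because $(T^\pi)^n M_{\tar}$ contains Dirac components $\delta_{\phi(s_l)}(\d g')$ that are singular with respect to $\rho_{\mathcal{G}}(\d g')$, the norm $\|M_\theta - (T^\pi)^n M_{\tar}\|^2_{\rhoSG,\rho_{\mathcal{G}}}$ is formally infinite, exactly as in Appendix~\ref{app:paramq}. I would therefore work with the offset loss
$$J'_M(\theta) \deq \tfrac12 \|M_\theta\|^2_{\rhoSG,\rho_{\mathcal{G}}} - \langle M_\theta,\,(T^\pi)^n M_{\tar}\rangle_{\rhoSG,\rho_{\mathcal{G}}},$$
which differs from $\tfrac12\|M_\theta-(T^\pi)^nM_\tar\|^2$ by a $\theta$-independent (possibly infinite) constant but has the same gradient in $\theta$, and show that $\E[\widehat{\delta\theta}_{\dtdn}] = -\partial_\theta J'_M(\theta)$.

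The first step is to iterate the one-step Bellman operator. By induction on $n$, expanding $(T^\pi M)(s,g,\d g') = \delta_{\phi(s)}(\d g') + \gamma\E_{a\sim\pi,\,s'\sim P}[M(s',g,\d g')]$, one obtains
$$(T^\pi)^n M_{\tar}(s_0,g,\d g') = \sum_{l=0}^{n-1}\gamma^l\,\E\bigl[\delta_{\phi(s_l)}(\d g')\bigr] + \gamma^n\,\E\bigl[M_{\tar}(s_n,g,\d g')\bigr],$$
where the expectations are over the trajectory $(s_1,\ldots,s_n)$ generated from $s_0$ under $P^\pi(\cdot|\cdot,g)$. This is the $n$-step return, now as a measure.

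Differentiating $J'_M$ and exchanging derivative and integral under mild smoothness of $m_\theta$ yields (as in equation~\eqref{eq:grad_derivation})
$$-\partial_\theta J'_M(\theta) = \int \rhoSG(\d s_0,\d g)\,\partial_\theta m_\theta(s_0,g,g')\bigl[(T^\pi)^n M_{\tar}(s_0,g,\d g') - M_\theta(s_0,g,\d g')\bigr].$$
Substituting the iterated Bellman expression, each Dirac integrates against $\partial_\theta m_\theta(s_0,g,\cdot)$ to produce $\partial_\theta m_\theta(s_0,g,\phi(s_l))$, collapsing the first sum into the reward-like term $\sum_{l=0}^{n-1}\gamma^l\partial_\theta m_\theta(s_0,g,\phi(s_l))$. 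The remaining terms are absolutely continuous with respect to $\rho_{\mathcal{G}}(\d g')$ and, after writing $g'\sim\rho_{\mathcal{G}}$ sampled independently, give exactly $\partial_\theta m_\theta(s_0,g,g')\bigl(\gamma^n m_\tar(s_n,g,g') - m_\theta(s_0,g,g')\bigr)$ in expectation over the trajectory. Identifying each piece as the expectation of the corresponding term in $\widehat{\delta\theta}_{\dtdn}(\tau,g')$ finishes the unbiasedness claim. The fixed-point statement follows at once: if $M_\theta = M_\tar = M^\pi$, then $(T^\pi)^n M^\pi = M^\pi$, so $(T^\pi)^n M_{\tar} - M_\theta = 0$ as measures and the gradient vanishes.

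The main technical obstacle, as in Theorem~\ref{thm:paramq}, is the handling of the singular Dirac components in the target: I address this by using the offset loss $J'_M$ rather than the literal squared norm, and by verifying that the integrations $\int \partial_\theta m_\theta(s_0,g,g')\,\delta_{\phi(s_l)}(\d g') = \partial_\theta m_\theta(s_0,g,\phi(s_l))$ are justified, which requires $m_\theta(s_0,g,\cdot)$ to be continuous in its last argument (a standard assumption for neural-network parametrisations, used implicitly in the $\ddqn$ theorem). The only additional ingredient beyond the $Q$-learning case is the Bellman induction for $(T^\pi)^n$, which is straightforward because $T^\pi$ is linear (no $\sup_{a'}$).
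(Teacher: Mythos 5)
Your proposal is correct and follows essentially the same route as the paper's proof: differentiate the (offset) loss to get $\int \rhoSG(\d s_0,\d g)\,\partial_\theta m_\theta(s_0,g,g')\bigl[(T^\pi)^n M_{\tar} - M_\theta\bigr]$, expand $(T^\pi)^n M_{\tar}$ into the discounted sum of Diracs plus the $\gamma^n$ tail, and integrate the Diracs against $\partial_\theta m_\theta$ to collapse them onto $\phi(s_l)$. The only difference is that you spell out the offset-loss device $J'_M$ explicitly, whereas the paper states that this is handled exactly as in Appendix~\ref{app:paramq} and omits the re-derivation; your version is, if anything, slightly more self-contained.
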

For this theorem, we sample goals $g'$ independently of $\tau$. In practice, this could be a source of variance, as sampling goals far from the current state should  produce close-to-$0$ V-values. If we instead sample goals from a distribution $\mu(g|s, a)$, this introduces an implicit \emph{scaling} factor $\alpha(s, g)$ to the reward. This is discussed in details and the end of Appendix~\ref{app:value-measure-continuous} in the case of the $V$-function.

    As in Appendix~\ref{app:paramq}, there is a hidden mathematical subtlety with continuous states regarding the norm $\|M_{\theta} - (T^{\pi})^{n}M_{\tar}\|$, which is infinite because $(T^{\pi})^{n}M_{\tar}$ is not absolutely continuous with respect to $\rho_{\mathcal{G}}$. However, as in  Appendix~\ref{app:paramq}, the \emph{gradient} of $\|M_{\theta} - (T^{\pi})^{n}M_{\tar}\|$ is finite. Because the rigorous way to handle it is exactly the same technique as in Appendix~\ref{app:paramq}, we will not derive it in this section.

\begin{proof}
      The proof is very similar to the proof of Theorem~~\ref{thm:paramq}. Similarly to the derivation of (\ref{eq:grad_derivation}), we have:
      \begin{equation}
        -\partial_{\theta}\frac{1}{2}\partial_{\theta}\|M_{\theta} - (T^{\pi})^{n}M_{\tar}\|_{\rhoSG, \rho_{\mathcal{G}}}^{2} = \int_{s_{0}, g, g'}\rhoSG(\d s_{0}, \d g)\partial_{\theta}m(s_{0}, g, g')((T^{\pi})^{n}M_{\tar}(s, g, \d g') - M_{\theta}(s, g, \d g'))        
      \end{equation}

      Moreover:
      \begin{align}
        \begin{split}
          (T^{\pi})^{n}M^{\tar}(s, g, \d g') - M_{\theta}(s, g, \d g') = &\sum_{k=0}^{n-1}\gamma^{k}
          \E_{s_{1}, ..., s_{k}|s_{0}, g}\left[\delta_{\phi(s_{k})}(\d g)\right]
          \\ &+ \left(\gamma^{n}\E_{s_{n}|s_{0}, g}\left[m_{\tar}(s_{n}, g, g')\right]- m_{\theta}(s_{0}, g, g')\right)\rho_{\mathcal{G}}(\d g')
      \end{split}
      \end{align}

      Therefore:
      \begin{align}
        \begin{split}
          -\partial_{\theta}J(\theta) = &\int_{s_{0}, g, g'}\rhoSG(\d s_{0}, \d g)\partial_{\theta}m(s_{0}, g, g')\sum_{k=0}^{n-1}\gamma^{k}
          \E_{s_{1}, ..., s_{k}|s_{0}, g}\left[\delta_{\phi(s_{k})}(\d g)\right]
          \\ &+ \int_{s_{0}, g, g'}\rhoSG(\d s_{0}, \d g)\partial_{\theta}m(s_{0}, g, g')\left(\gamma^{n}\E_{s_{n}|s_{0}, g}\left[m_{\theta}(s_{n}, g, g')\right]- m_{\theta}(s_{0}, g, g')\right)\rho_{\mathcal{G}}(\d g')
        \end{split}
        \\ \begin{split}
          -\partial_{\theta}J(\theta) = &\int_{s_{0}, g, g'}\rhoSG(\d s_{0}, \d g) \E_{s_{1}, ..., s_{n}|s_{0}, g}\left[\sum_{k=0}^{n-1}\gamma^{k}
         \partial_{\theta}m(s_{0}, g, \phi(s_{k}))\right]
          \\ &+ \int_{s_{0}, g, g'}\rhoSG(\d s_{0}, \d g)\partial_{\theta}m(s_{0}, g, g')\left(\gamma^{n}\E_{s_{n}|s_{0}, g}\left[m_{\theta}(s_{n}, g, g')\right]- m_{\theta}(s_{0}, g, g')\right)\rho_{\mathcal{G}}(\d g')
        \end{split}
      \end{align}
      Therefore, $\E_{\tau, g'}\left[\widehat{\delta\theta}_{\dtd}(\tau, g')\right] = \frac{1}{2}\partial_{\theta}\|M_{\theta} - (T^{\pi})^{n}M_{\tar}\|_{\rhoSG, \rho_{\mathcal{G}}}^{2}$.

      Finally, if $M_{\theta} = M_{\tar} = M^{\pi}$, we have:
      \begin{align}
        -\partial_{\theta}\frac{1}{2}\partial_{\theta}\|M_{\theta} - (T^{\pi})^{n}M_{\tar}\|_{\rhoSG, \rho_{\mathcal{G}}}^{2} &= \int_{s_{0}, g, g'}\rhoSG(\d s_{0}, \d g)\partial_{\theta}m(s_{0}, g, g')((T^{\pi})^{n}M_{\tar}(s, g, \d g') - M_{\theta}(s, g, \d g'))
        &= 0
      \end{align}
      This concludes the proof.
    \end{proof}

\section{The continuous density setting}
\label{app:equiv-betw-vareps}

\subsection{The continuous density assumption}
\label{app:cont-dens-assumpt}

Here, we introduce the continuity assumption, which will be used in this section, to formalize the relation between the multi-goal formulation with infinitely sparse Dirac rewards with the standard formulation with reward located in a neighborhood of size $\varepsilon$ around the goal, and to derive a policy gradient theorem.


\begin{assumption}
  \label{assumption:density}
  We assume that $\mathcal{S}$ and $\mathcal{G}$ are finite dimensional vector spaces, and that $\mathcal{A}$ is a compact subset of a finite dimensional vector space. 
  Moreover, $\rho_{\mathcal{G}}(\d g)$ is absolutely continuous with respect to the Lebesgue measure on $\mathcal{G}$, and we write $p_{\mathcal{G}}$ its density: $p_{\mathcal{G}}(g)\lambda(\d g)$, where $p_{\mathcal{G}}$ is a continuous function. 
  Similarly, $\rho(\d s_{0}|g)$ the distribution of initial states given a goal is supposed to be absolutely continuous with respect the Lebesgue measure: $\rho(\d s_{0}|g) = p_{0}(s_{0}|g)\lambda(\d g)$, with $p_{0}$ continuous. The transition probability measure $P(\d s'|s, a)$ is absolutely continuous with respect to the Lebesgue measure on $\mathcal{S}$, and we write $p(s'|s, a)$ its density, which is continuous.

   We assume that $\supp \rho_{\mathcal{G}}$ is compact and that there is a compact subset $\ks \subset \mathcal{S}$ such that for every $s, a\in\mathcal{S},\mathcal{G}$, $\supp P(\d s'|s, a)\subset \ks$. 

  We consider only policies in $\Pi$, the set of policies $\pi$ such that $\pi(a|s, g)$ is  a continuous function of $a, s, g$. 

  We assume $\dim \mathcal{G} \leq \mathcal{S}$ and $\phi$ is a surjective linear function, and $\phi(\mathcal{S}) = \mathcal{G}$.
\end{assumption}

Let us comment Assumption~\ref{assumption:density}. First, we require $P$, $\rho_{\mathcal{G}}$, and $\rho_{0}$ to be absolutely continuous with respect to Lebesgue measure. This is typically true in environments such that, at every step, the environment adds a noise absolutely continuous with respect to Lebesgue measure (for instance Gaussian) to the position. On the contrary, in environments such that the agent lies in a submanifold of dimension lower than $\dim \mathcal{S}$, the assumption is not satisfied. In the \texttt{Torus(n)} environment, with the state representation $s \in [0,1)^{n}$, the environment satisfies this assumption. But with the representation used in the experiments $\tilde s = (\cos(2\pi s_{1}), \sin(2\pi s_{1}), ..., \cos(2\pi s_{n}), \sin(2\pi s_{n}))\in [-1,1]^{2n}$, this assumption is not satisfied. The assumption that $\phi$ is linear is often satisfied in practice, when the achieved goal of a state corresponds to a some coordinates of $s$. For instance, in FetchReach, the state $s$ contains information on the position and velocity of the robotic arm, while the achieved goal is the position of the extremity of the robotic arm. This assumption could be generalized to $\phi$ a submersion (a differentiable function such that its $\d \phi_{s}$ is surjective for every $s$), but we used the linear assumption for the simplicity of the proof.

\bigskip

Under this assumption, we have the following lemma on the probability distribution $\nu^{\pi}$ introduced in Appendix~\ref{app:m-operator} and $M^{\pi}$:

\begin{lem}
  \label{lem:density}
  Under Assumption~\ref{assumption:density}, there is a function $q^{\pi}(s|s_{0}, g)$ such that for any $(s_{0}, g)$:
  \begin{equation}
    \nu^{\pi}(\d s|s_{0}, g) = (1-\gamma)\delta_{s_{0}}(\d s) + q^{\pi}(s|s_{0}, g)\lambda(\d s)
  \end{equation}
  and  $q^{\pi}(s|s_{0}, g)$ is a continuous function of $s, s_{0}, g$.

  Moreover, $M^{\pi}(s, g, \d g') = \phi_{\ast}\big(\nu^{\pi}(\d s'|s, g)\big)(\d g')$ (where $\phi_{\ast}$ is the push-forward operator) and there is a function $\tilde m^{\pi}(s, g, g')$ such that for any $s, g$: 
  \begin{equation}
    M^{\pi}(s, g, \d g') = \delta_{\phi(s)} + \tilde m^{\pi} (s, g, g')\lambda(\d g').
  \end{equation}
  and $\tilde m^{\pi}(s, g, g')$ is a continuous function of $(s, g, g')$.

  The function $\tilde m^{\pi}$ satisfies for every $(s, g, g') \in \ks\times\mathcal{G}\times\mathcal{G}$ the fixed point equation:
  \begin{equation}
    \label{eq:bellman_tildem}
    \tilde m(s, g, g') = \gamma\int_{a}\lambda(\d a)\pi(a|s, g)\left(\tilde p(g'|s, a) + \int_{s'}\lambda(\d s')p(s'|s, a)\tilde m^{\pi}(s', g, g')\right)
  \end{equation}
\end{lem}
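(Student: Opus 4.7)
The plan is to decompose $\nu^{\pi}(\d s|s_{0},g)$ into its Dirac part at $s_{0}$ and an absolutely continuous part using the series expansion $\nu^{\pi} = (1-\gamma)\sum_{k\geq 0}\gamma^{k}(P^{\pi})^{k}$; the $k=0$ term supplies $(1-\gamma)\delta_{s_{0}}(\d s)$, and the remaining sum will be shown to have a jointly continuous, bounded density. The claims for $M^{\pi}$ then follow by applying $\frac{1}{1-\gamma}\phi_{\ast}$ to this decomposition, using the identity $M^{\pi}(s,g,\cdot) = \frac{1}{1-\gamma}\phi_{\ast}\nu^{\pi}(\cdot|s,g)$ from Appendix~\ref{app:m-operator}, and the Bellman equation for $\tilde m^{\pi}$ is obtained by substituting the decomposition into the Bellman equation for $M^{\pi}$ already established.

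For the density of $(P^{\pi})^{k}$ with $k\geq 1$, I would first set $p^{\pi}(s'|s,g) \deq \int_{\mathcal{A}}\pi(a|s,g)\,p(s'|s,a)\,\d a$. This is jointly continuous in $(s',s,g)$ and bounded by $\|p\|_{\infty} < \infty$ thanks to Assumption~\ref{assumption:density} and compactness of $\mathcal{A}$ and of the support of $p$. By induction, $(P^{\pi})^{k}$ admits a density $p^{\pi}_{k}(s|s_{0},g) = \int p^{\pi}_{k-1}(s'|s_{0},g)\,p^{\pi}(s|s',g)\,\d s'$, jointly continuous by dominated convergence on the compact support $\ks$ and still bounded by $\|p^{\pi}\|_{\infty}$ since it is a convolution of a probability measure against a bounded kernel. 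Therefore $q^{\pi}(s|s_{0},g) \deq (1-\gamma)\sum_{k\geq 1}\gamma^{k}p^{\pi}_{k}(s|s_{0},g)$ converges uniformly in $(s,s_{0},g)$, is continuous and bounded, and yields the first decomposition.

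For the second part, applying $\frac{1}{1-\gamma}\phi_{\ast}$ to $\nu^{\pi}=(1-\gamma)\delta_{s_{0}}+q^{\pi}\lambda$ gives $M^{\pi}(s,g,\d g')=\delta_{\phi(s)}(\d g')+\frac{1}{1-\gamma}\phi_{\ast}[q^{\pi}(\cdot|s,g)\lambda](\d g')$. Since $\phi$ is a linear surjection between finite-dimensional spaces, fixing a linear complement of $\ker\phi$ in $\mathcal{S}$ and applying the linear change-of-variables formula represents the push-forward of any continuous density as a fiber integral over $\ker\phi$ times a fixed Jacobian factor $J_{\phi}$. Because $q^{\pi}$ is continuous and $s$ lies in the compact set $\ks$, this fiber integral defines a continuous function $\tilde m^{\pi}(s,g,g')$ with $M^{\pi}(s,g,\d g')=\delta_{\phi(s)}(\d g')+\tilde m^{\pi}(s,g,g')\lambda(\d g')$. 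Finally, substituting this decomposition into the Bellman equation for $M^{\pi}$ and expanding, the Dirac term $\delta_{\phi(s')}(\d g')$ pushed through $p(s'|s,a)\,\d s'$ produces $\tilde p(g'|s,a)\lambda(\d g')$ via the same fiber integral, while the continuous part $\tilde m^{\pi}(s',g,g')$ propagates through $p(s'|s,a)$; after cancelling $\delta_{\phi(s)}(\d g')$ on both sides and matching Lebesgue coefficients, one reads off exactly the stated fixed-point equation~\eqref{eq:bellman_tildem}.

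The main obstacle is the fiber-integral representation of the push-forward through the non-bijective map $\phi$: one must check that the resulting density is well-defined, continuous, and, crucially, that the Jacobian factor entering the definition of $\tilde p(g'|s,a)$ is the \emph{same} as the one entering $\tilde m^{\pi}$, so that the Bellman equation has coefficient exactly $\gamma$ rather than an extra scaling. A secondary but standard technical point is the joint continuity of the iterated densities $p^{\pi}_{k}$ in all three variables, which reduces to compactness of the relevant supports combined with dominated convergence.
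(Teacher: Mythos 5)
Your proposal is correct and follows essentially the same route as the paper's proof: the series expansion of $\nu^{\pi}$ with the $k=0$ term isolated as the Dirac part, induction plus uniform convergence for the continuous bounded density $q^{\pi}$, the push-forward through the linear surjection $\phi$ computed as a fiber integral over $\Ker\phi$ (the paper uses the orthogonal complement and the compact projections of $\ks$), and the fixed-point equation obtained by substituting the decomposition into the Bellman equation for $M^{\pi}$ and identifying the Lebesgue-density parts. You have also correctly flagged the two technical points the paper actually handles, namely the joint continuity of the iterated densities on compact supports and the fact that $\tilde p$ and $\tilde m^{\pi}$ carry the same Jacobian factor.
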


\begin{proof}
  We have:
\begin{align}
  \nu^{\pi}(\d s|s_{0}, g) &= (1-\gamma)\sum_{k\geq 0 }\gamma^{k}(P^{\pi})^{k}(\d s|s_{0}, g)
\end{align}
We know that
$$(P^{\pi})(\d s'|s, g) = \lambda(\d s')\int_{a}\lambda(\d a)\pi(a|s, g)p(s'|s, a),$$
and by induction, for $k\geq 1$,
$$(P^{\pi})^{k}(\d s|s_{0}, g) = \lambda(\d s)\int_{a_{0}, ..., s_{k-1}, a_{k-1}} \pi(a_{0}|s_{0}, g)\left(\prod_{i=1}^{k-1}p(s_{i}|s_{i-1}, a_{i-1})\pi(a_{i}|s_{i}, g)   \right)p(s|s_{k-1}, a_{k-1}).$$
We define: 
\begin{equation}
q^{\pi}(s| g, s_{0}) \deq (1-\gamma) \sum_{k\geq 1}\gamma^{k}\int_{a_{0}, ..., s_{k-1}, a_{k-1}} \pi(a_{0}|s_{0}, g)\left(\prod_{i=1}^{k-1}p(s_{i}|s_{i-1}, a_{i-1})\pi(a_{i}|s_{i}, g)   \right)p(s|s_{k-1}, a_{k-1})
\end{equation}
We now check that $q^{\pi}$ is well-defined and continuous. For every $k \geq 1$, the function
$$(g, s_{0}, a_{0}, ..., s_{k-1}, a_{k-1}, s) \mapsto  \pi(a_{0}|s_{0}, g)\left(\prod_{i=1}^{k-1}p(s_{i}|s_{i-1}, a_{i-1})\pi(a_{i}|s_{i}, g)   \right)p(s|s_{k-1}, a_{k-1})$$
is continuous and the supports of $\pi$ are $p$ compact sets. Therefore, for every $k \geq 0$, the function $$(g, s_{0}, s) \mapsto \int_{a_{0}, s_{1}, ..., s_{k-1}, a_{k-1}}\pi(a_{0}|s_{0}, g)\left(\prod_{i=1}^{k-1}p(s_{i}|s_{i-1}, a_{i-1})\pi(a_{i}|s_{i}, g)   \right)p(s|s_{k-1}, a_{k-1})$$
is well defined and continuous.

Moreover, for every $k \geq 0$, and $(s, g)$: 
\begin{align}
  &\left|\gamma^{k}\int_{a_{0}, ..., s_{k-1}, a_{k-1}} \pi(a_{0}|s_{0}, g)\left(\prod_{i=1}^{k-1}p(s_{i}|s_{i-1}, a_{i-1})\pi(a_{i}|s_{i}, g)   \right)p(s|s_{k-1}, a_{k-1})\right| \leq
  \\ &\leq \gamma^{k} \int_{a_{0}, ..., s_{k-1}, a_{k-1}} \pi(a_{0}|s_{0}, g)\left(\prod_{i=1}^{k-1}p(s_{i}|s_{i-1}, a_{i-1})\pi(a_{i}|s_{i}, g)   \right)\|p\|_{\infty}
  \\ &= \gamma^{k}\|p\|_{\infty}   
\end{align}
and $\sum_{k\geq 0} \gamma^{k} \|p\|_{\infty} \leq \infty$.
Therefore, $q^{\pi}(s|g, s_{0})$ is a continuous function and we have:
\begin{align}
  \nu^{\pi}(\d s|s_{0}, g) = (1-\gamma)\delta_{s_{0}}(\d s) + q^{\pi}(s|s_{0}, g)\lambda(\d s).
\end{align}
Moreover, the support of $\nu^{\pi}$ is compact and for every $s_{0}\in\ks$, we have $\supp \left(\nu^{\pi}(.|s_{0}, g)\right)\subset \ks$.

We now show the existence of $\tilde m^{\pi}$. We have: 
\begin{equation*}
M^{\pi}(s, g, \d g') = \frac{1}{1-\gamma}\big(\phi_{\ast}\nu^{\pi}(\d s'|s, g)\big)(\d g') = \phi_{\ast}\left((\delta_{s}(\d s')\right)(\d g') + \frac{1}{1-\gamma}\phi_{\ast}\left(q^{\pi}(s'|s, g)\lambda(\d s')\right)(\d g')
\end{equation*}
First, $\phi_{\ast}(\delta_{s}) = \delta_{\phi(s)}$. Then, we study the second part $\phi_{\ast}\left(q^{\pi}(s'|s, g)\lambda(\d s')\right)(\d g')$, and show that there is a continuous function $\tilde m(s, g, g')$ such that
\begin{equation}
  \label{eq:existence_tildem}
\frac{1}{1-\gamma}\phi_{\ast}\left(q^{\pi}(s'|s, g)\lambda(\d s')\right)(\d g') = \tilde m(s, g, g')\lambda(\d g')
\end{equation}

Let $f(g)$ be a continuous test function. We have:
\begin{align}
  \int_{g'\in \mathcal{G}}f(g')\phi_{\ast}\left(q^{\pi}(s'|s, g)\lambda(\d s)\right)(\d g')  = \int_{s'}f(\phi(s'))q^{\pi}(s'|s, g)\lambda(\d s')
\end{align}
We use the change of variable $s' = e + k$ with $k \in \Ker \phi$ and $e \in  \Ker \phi^\perp$ and use that $\phi(s')=\phi(e)$, and $\phi_{\Ker\phi^{\perp}}$ the restriction of $\phi$ to $\Ker\phi^{\perp}$ is invertible. In order to use continuity theorems on integrals, we want to restrict the integral domains to compact sets. We define the orthogonal projections of $\ks$ on $\Ker\phi$ and $\Ker\phi^{\perp}$: $K = \mathrm{proj_{\Ker\phi}(\ks)}$ and $E = \mathrm{proj_{\Ker\phi^{\perp}}(\ks)}$. $K$ and $E$ are compact sets and $\supp \left(q^{\pi}(s'|s, g)\right) \subset  \{e+k\; , \; (k, e) \in K\times E\}$ for every $s\in\ks$. We have:
\begin{align}
  \int_{g'\in\mathcal{G}}f(g')\phi_{\ast}\left(q^{\pi}(s'|s, g)\lambda(\d s')\right)(\d g')  & = \int_{e\in \Ker\phi^{\perp}, k\in \Ker\phi}f(\phi(e+k))q^{\pi}(e+k|s, g)\lambda(\d e, \d k)
  \\ &= \int_{e\in E, k\in K}f(\phi(e+k))q^{\pi}(e+k|s, g)\lambda(\d e, \d k)
  \\ &= \int_{e\in E}f(\phi(e))\lambda(\d e)\int_{k\in K}q^{\pi}(e+k|s, g)\lambda(\d k)
\end{align}
where we can switch integrals because the sets are compact and the functions continuous. We use the change of variable: $g' = (\phi_{|\Ker\phi^{\perp}})^{-1}(e)$. For simplicity, we use the notation $\phi^{-1} = (\phi_{|\Ker\phi^{\perp}})^{-1}$. 
\begin{align}
\int_{g'\in\mathcal{G}}f(g')\phi_{\ast}\left(q^{\pi}(s'|s, g)\lambda(\d s')\right)(\d g')   &=\int_{g'\in\mathcal{G}}f(g')\lambda(\d g')\left(\det(\phi^{-1})\int_{k\in K} \1_{E}(\phi^{-1}(g') q^{\pi}(\phi^{-1}(g')+k|s, g)\lambda(\d k)\right)
       \\ &=\int_{g'\in\mathcal{G}}f(g')\lambda(\d g')\left(\det(\phi^{-1})\int_{k\in K}  q^{\pi}(\phi^{-1}(g')+k|s, g)\lambda(\d k)\right)
\end{align}
where the last line is obtained by using that $\1_{E}(s') q^{\pi}(s'|., .)= q^{\pi}(s'|., .)$ because $s'\notin E \Rightarrow q^{\pi}(s'|., .)=0$. We define $\tilde m^{\pi}(s, g, g') = \frac{1}{1-\gamma}\det(\phi)^{-1} \int_{k\in K}q^{\pi}(\phi^{-1}(g')+k|s, g)\lambda(\d k)$. The function $(s, g, k, g') \rightarrow q^{\pi}(\phi^{-1}(g')+k|s, g)$ is continuous and $K$ is compact. Therefore, $\tilde m^{\pi}$ is continuous and bounded, and:
\begin{equation*}
\frac{1}{1-\gamma}\phi_{\ast}\left(q^{\pi}(s'|s, g)\lambda(\d s')\right)(\d g') = \tilde m^{\pi}(s, g, g')\lambda(\d g')
\end{equation*}
Moreover, the support of $\tilde m(s, g, g')\lambda(\d g')$ is compact and $\supp\left(\tilde m^{\pi}(s, g, g')\lambda(\d g')\right)\subset \phi(\ks)$.

We now prove the fixed point equation on $\tilde m^{\pi}$. We consider the Bellman equation on $M^{\pi}(s, g, \d g')$. We have:
  \begin{align}
    M^{\pi}(s, g, \d g') = \delta_{\phi(s)}(\d g') + \gamma \int_{s', a}\lambda(\d s', \d a)\pi(a|s, g)p(s'|s, a)M^{\pi}(s', g, \d g')
  \end{align}
  By using $M^{\pi}(s, g, \d g')=\delta_{\phi(s)}(\d g') + \tilde m^{\pi}(s, g, g')\lambda(\d g')$, we have:
  \begin{align}
   \tilde m^{\pi}(s, g, g')\lambda(\d g') =  \gamma \int_{s', a}\lambda(\d s', \d a)\pi(a|s, g)p(s'|s, a)\left(\delta_{\phi(s')}(\d g') + \tilde m(s, g, g')\lambda(\d g')\right)
  \end{align}  
  Let $f(g')$ be a continuous test function, we have:
  \begin{align}
    \label{eq:tildep_der_1}
    \int_{g'}&f(g')\tilde m^{\pi}(s, g, g')\lambda(\d g') =
               \\ &= \gamma \int_{s', a, g'}\lambda(\d s', \d a)f(g')\pi(a|s, g)p(s'|s, a)\left(\delta_{\phi(s')}(\d g') + \tilde m(s, g, g')\lambda(\d g')\right)
    \\ &= \gamma \int_{s', a}\lambda(\d s', \d a)\pi(a|s, g)p(s'|s, a)\left( f(\phi(s')) + \int_{g'}\lambda(\d g')f(g')\pi(a|s, g)p(s'|s, a)\tilde m(s', g, g')\right)
    \\ &= \gamma \int_{a, g'}f(g')\pi(a|s, g)\tilde p(g'|s, a) + \gamma\int_{a, s', g'}\lambda(\d a, \d s', \d g')f(g')\pi(a|s, g)p(s'|s, a)\tilde m(s', g, g')f(g')
         \label{eq:tildep_der_2}
  \end{align}
  
  where $\tilde p(g|s, a)$ is the density with respect to Lebesgue measure $\lambda(\d g)$ of $\phi_{\ast}P(\d s'|s, a)$. Formally, the existence proof of $\tilde p$ is the same than for $\tilde m$ in equation~\eqref{eq:existence_tildem}, and is using the fact that $P$ is continuous with respect to $\lambda(\d s)$ and $\phi$ is a surjective linear operator. Therefore, we have, for $\lambda$-almost $s, g, g'$:
  \begin{align}
    \tilde m(s, g, g') = \gamma\int_{a}\lambda(\d a)\pi(a|s, g)\left(\tilde p(g'|s, a) + \gamma\int_{s'}\lambda(\d s')p(s'|s, a)\tilde m^{\pi}(s', g, g')\right)
  \end{align}
  Because $\tilde m^{\pi}$ is continuous, this relation is true for every $s, g, g'$, in particular if $g=g'$.
\end{proof}


\subsection{The Value Measure Under the Continuous Density Assumption}
\label{app:value-measure-continuous}

Under Assumption~\ref{assumption:density}, we can rigorously define the value measure $V^{\pi}(s, \d g)$ as follows. Then, we briefly show why learning directly $V^{\pi}(s, \d g)$ without bias poses technical issues as states in Section~\ref{sec:unbi-policy-eval}, which is the reason why we learn $M^{\pi}$.

\begin{thm}
  \label{thm:value-measure-continuous}
  Under Assumption~\ref{assumption:density}, we can define the value-\emph{measure} $V^{\pi}(s, \d g)$ as the measure on $\mathcal{G}\times\mathcal{G}$: 
  \begin{equation}
    V^{\pi}(s, \d g) = \delta_{\phi(s)}(\d g) + \tilde m(s, g, g)\lambda(\d g)    
  \end{equation}

  The value measure $V^{\pi}$ satisfies the fixed point equation:
  \begin{equation}
       V^{\pi}(s, \d g) = \delta_{\phi(s)}(\d g) + \gamma \E_{s'\sim P(\d s'|s, g)}\left[V^{\pi}(s', \d g)\right]
     \end{equation}

     Finally, the value-measure is consistent with the value function $V^{\varepsilon}(s, g)$ when $\varepsilon\rightarrow 0$. Formally, the measure on $\ks \times\mathcal{G}$:  $\lambda(\d s)\frac{1}{\lambda(\varepsilon)}V_{\varepsilon}^{\pi}(s, g)\lambda(\d g)$ converges weakly to $\lambda(\d s)V^{\pi}(s,\d g)$ when $\varepsilon\rightarrow 0$.

\end{thm}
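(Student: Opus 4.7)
The plan is to handle the three claims in turn, each reducing to the corresponding property of $M^{\pi}$ via the diagonal substitution $g' = g$ together with the explicit representation of $\tilde m^{\pi}$ provided by Lemma~\ref{lem:density}.

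\textbf{Well-definedness.} Since $\tilde m^{\pi}(s, g, g')$ is jointly continuous by Lemma~\ref{lem:density}, its diagonal $\tilde m^{\pi}(s, g, g)$ is a continuous function of $(s, g)$ with compact support in $g$ (inherited from $\phi(\ks)$), so $\tilde m^{\pi}(s, g, g)\lambda(\d g) + \delta_{\phi(s)}(\d g)$ is a well-defined finite positive measure on $\mathcal{G}$ for each $s$.

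\textbf{Bellman equation.} I will substitute $V^{\pi}(s', \d g) = \delta_{\phi(s')}(\d g) + \tilde m^{\pi}(s', g, g)\lambda(\d g)$ into the right-hand side and push the $s'$-integration against $p(s'|s, a)\lambda(\d s')$ into the Dirac term. The proof of Lemma~\ref{lem:density} shows that $\phi_{\ast}(p(\cdot|s, a)\lambda)$ admits a continuous density $\tilde p(g|s, a)$ against Lebesgue measure, so $\int p(s'|s, a)\delta_{\phi(s')}(\d g)\lambda(\d s') = \tilde p(g|s, a)\lambda(\d g)$. Collecting the $\lambda(\d g)$ terms gives
\begin{equation*}
\delta_{\phi(s)}(\d g) + \gamma\left(\int_{a}\pi(a|s, g)\tilde p(g|s, a)\lambda(\d a) + \int_{a, s'}\pi(a|s, g)p(s'|s, a)\tilde m^{\pi}(s', g, g)\lambda(\d a)\lambda(\d s')\right)\lambda(\d g),
\end{equation*}
and the bracketed factor equals $\tilde m^{\pi}(s, g, g)$ by equation~\eqref{eq:bellman_tildem} specialized at $g' = g$. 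This recovers $V^{\pi}(s, \d g)$.

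\textbf{Weak convergence.} I will test against a bounded continuous $f(s, g)$ on $\ks \times \mathcal{G}$. Expanding $V^{\pi}_{\varepsilon}(s, g) = \frac{1}{1-\gamma}\int \nu^{\pi}(\d s'|s, g)\1_{\|\phi(s')-g\|\leq\varepsilon}$ with $\nu^{\pi} = (1-\gamma)\delta_{s} + q^{\pi}\lambda$ yields
\begin{equation*}
V^{\pi}_{\varepsilon}(s, g) = \1_{\|\phi(s)-g\|\leq\varepsilon} + \tfrac{1}{1-\gamma}\int q^{\pi}(s'|s, g)\1_{\|\phi(s')-g\|\leq\varepsilon}\lambda(\d s'),
\end{equation*}
so $\lambda(\varepsilon)^{-1}\int f\, V^{\pi}_{\varepsilon}\,\lambda(\d s)\lambda(\d g)$ splits as $A_{\varepsilon} + B_{\varepsilon}$. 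Since $\lambda(\varepsilon)^{-1}\1_{\|\phi(s)-g\|\leq\varepsilon}\lambda(\d g)$ is a probability measure converging weakly to $\delta_{\phi(s)}$, I obtain $A_{\varepsilon}\to \int f(s, \phi(s))\lambda(\d s)$. For $B_{\varepsilon}$, applying the same concentration to $f(s, g)q^{\pi}(s'|s, g)$ in the $g$-variable and dominated convergence in $(s, s')$ (justified by joint continuity and compact support of $q^{\pi}$) gives
\begin{equation*}
B_{\varepsilon}\longrightarrow \tfrac{1}{1-\gamma}\iint f(s, \phi(s'))\,q^{\pi}(s'|s, \phi(s'))\,\lambda(\d s)\lambda(\d s').
\end{equation*}
The fibered change of variables $s' = \phi^{-1}(g) + k$ with $k \in \Ker\phi$ used in the proof of Lemma~\ref{lem:density} transforms this $s'$-integral into $\int f(s, g)(1-\gamma)\tilde m^{\pi}(s, g, g)\lambda(\d g)$, via the explicit formula $\tilde m^{\pi}(s, g, g) = (1-\gamma)^{-1}\det(\phi^{-1})\int_{k}q^{\pi}(\phi^{-1}(g)+k|s, g)\lambda(\d k)$. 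Summing $A_{\varepsilon}$ and $B_{\varepsilon}$ reproduces $\int f(s, g)V^{\pi}(s, \d g)\lambda(\d s)$.

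\textbf{Main obstacle.} The hardest step is identifying the limit of $B_{\varepsilon}$: the rescaled indicator concentrates along the codimension-$\dim \mathcal{G}$ submanifold $\{g = \phi(s')\}$, and recognizing the resulting coarea-style integral over this submanifold as the diagonal density $\tilde m^{\pi}(s, g, g)$ requires the fibered integral formula from Lemma~\ref{lem:density}. Carrying out dominated convergence uniformly in $\varepsilon$ also relies on the joint continuity and compactness of support established there; the Bellman step, by contrast, is essentially algebraic once $\tilde p$ is identified as a density.
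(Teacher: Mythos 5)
Your proof is correct, and for the first two claims it coincides with the paper's argument: the well-definedness and the Bellman equation both reduce to the fixed-point equation \eqref{eq:bellman_tildem} for $\tilde m^{\pi}$ specialized at $g'=g$, with the Dirac term absorbed into the pushforward density $\tilde p(g|s,a)$, exactly as in Appendix~\ref{app:value-measure-continuous}.

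For the weak-convergence claim your route differs in the order of operations, though it rests on the same key lemma. The paper first rewrites $V^{\pi}_{\varepsilon}(s,g)=M^{\pi}(s,g,B(g,\varepsilon))=\1_{\|\phi(s)-g\|\leq\varepsilon}+\int_{g'}\1_{\|g-g'\|\leq\varepsilon}\,\tilde m^{\pi}(s,g,g')\lambda(\d g')$, i.e.\ it invokes the pushforward to goal space \emph{once}, through the already-established continuous density $\tilde m^{\pi}$ of Lemma~\ref{lem:density}, and then the limit is just the convergence of the uniform measure $U_{\varepsilon}(\d u)$ on the $\varepsilon$-ball to $\delta_{0}$ applied to the continuous function $u\mapsto\tilde m^{\pi}(s,g,g+u)$. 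You instead stay at the state level with $\nu^{\pi}=(1-\gamma)\delta_{s}+q^{\pi}\lambda$, concentrate the indicator in the $g$-variable at $g=\phi(s')$ first, and only afterwards perform the fibered change of variables $s'=\phi^{-1}(g)+k$, $k\in\Ker\phi$, to recognize the resulting integral as $(1-\gamma)\tilde m^{\pi}(s,g,g)$ via the explicit formula from the lemma's proof. The two computations are equivalent; the paper's ordering avoids re-executing the coarea-style integral (it is packaged once in Lemma~\ref{lem:density}), while yours makes the mechanism by which the diagonal density arises more explicit, at the cost of repeating that change of variables. Your domination argument for exchanging the limit with the $(s,s')$-integration is sound, since the rescaled indicator integrates to $1$ in $g$ and $q^{\pi}$ is continuous with compact support.
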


\begin{proof}
  Let $f(g)$ be a continuous test function. We have:
  \begin{align}
    \int_{g}V^{\pi}(s, \d g)f(g) = f(\phi(s)) + \int_{g}\tilde m^{\pi}(s, g, g)f(g)\lambda(\d g)
  \end{align}
  Moreover, we know from Lemma~\ref{lem:density} that
  \begin{equation}
    \tilde m^{\pi}(s, g, g) = \gamma\int_{a}\lambda(\d a)\pi(a|s, g)\left(\tilde p(g|s, a) + \int_{s'}\lambda(\d s')p(s'|s, a)\tilde m^{\pi}(s', g, g)\right)
  \end{equation}
  Therefore:
  \begin{align}
    \int_{g}&V^{\pi}(s, \d g)f(g)  = f(\phi(s)) + \gamma\int_{g, a}\lambda(\d a, \d g)f(g)\pi(a|s, g)\left(\tilde p(g|s, a) + \int_{s'}\lambda(\d s')p(s'|s, a)\tilde m^{\pi}(s', g, g)\right)
  \end{align}

  On the other side, we have:
  \begin{align}
    \int_{g}f(g)&  \E_{a\sim\pi(.|s, g), s'\sim P(\d s'|s, a)}\left[V^{\pi}(s', \d g)\right] =
    \\ &=  \int_{g, a, s'}\lambda(\d a, \d s')f(g)\pi(a|s, g)p(s'|s, a)\left(\delta_{\phi(s')}(\d g) + \tilde m^{\pi}(s', g, g)\lambda(\d g)\right)
    \\ &= \int_{a, s'}\lambda(\d a, \d s')\pi(a|s, g)p(s'|s, a)f(\phi(s')) + \int_{a, s', g}\lambda(\d a, \d s', \d g)f(g)\pi(a|s, g)p(s'|s, a)\tilde m^{\pi}(s', g, g)
  \end{align}
  For the first part, we use the change of variable $g = \phi(s')$, and we have:
  \begin{align}
    \int_{g}f(g)&  \E_{a\sim\pi(.|s, g), s'\sim P(\d s'|s, a)}\left[V^{\pi}(s', \d g)\right] =
    \\ &= \int_{a, g}\lambda(\d a, \d g)\pi(a|s, g)\tilde p(g|s, a)f(g) + \int_{a, s', g}\lambda(\d a, \d s', \d g)f(g)\pi(a|s, g)p(s'|s, a)\tilde m^{\pi}(s', g, g)
  \end{align}
  where $\tilde p(.|s, a)$ is the density of $\phi_{\ast}P(.|s, a)$ (where $\phi_{\ast}$ is the push-forward operator) with respect to Lebesgue measure. Therefore, we have:
  \begin{equation}
    \int_{g}V(s, \d g)f(g) = \int_{g}f(g)\left(\delta_{\phi(s)}(\d g) + \gamma \E_{s'\sim P^{\pi}(\d s'|s, g)}\left[V^{\pi}(s', \d g)\right]\right)
  \end{equation}
  and we can conclude:
  \begin{align}
    V^{\pi}(s, \d g)f(g) = \delta_{\phi(s)}(\d g) + \gamma \E_{s'\sim P^{\pi}(\d s'|s, g)}\left[V^{\pi}(s', \d g)\right]
  \end{align}

  We now show that know that the measure on $\ks \times\mathcal{G}$:  $\lambda(\d s)\frac{1}{\lambda(\varepsilon)}V_{\varepsilon}^{\pi}(s, g)\lambda(\d g)$ converges weakly to $\lambda(\d s)V^{\pi}(s,\d g)$ when $\varepsilon\rightarrow 0$. We know that:
  \begin{align}
    V^{\pi}_{\varepsilon}(s, g) &= \E\left[\sum_{k\geq 0}\gamma^{k}R_{\varepsilon}(s_{k}, g)|s_{0}=s\right]
    \\ &= \frac{1}{1-\gamma}\int_{s'\in \mathcal{S}}\nu^{\pi}(\d s'|s, g)R_{\varepsilon}(s', g)
    \\ &= \int_{s'\in \mathcal{S}}\nu^{\pi}(\d s'|s, g)\1_{\|\phi(s')-g\|\leq \varepsilon}
  \end{align}
  We use the change of variable $g' = \phi(s')$, and we have (with $\phi_{\ast}$ the push-forward operator):
  \begin{align}
    V^{\pi}_{\varepsilon}(s, g) &=  \int_{g'\in \mathcal{S}}(\phi_{\ast}\nu^{\pi})(\d g'|s, g)\1_{\|g'-g\|\leq \varepsilon}
    \\ &= \int_{g'\in \mathcal{S}}M^{\pi}(s, g, \d g')\1_{\|g'-g\|\leq \varepsilon}
    \\ &= M^{\pi}(s, g, B(g, \varepsilon))
  \end{align}

  Let $F \deq \{g\in \mathcal{G}, \inf_{s\in\ks} \|g-\phi(s)\| < 1\}$.  Therefore, for every $\varepsilon < 1$, the support of $\lambda(\d s)\frac{1}{\lambda(\varepsilon)}V_{\varepsilon}^{\pi}(s, g)\lambda(\d g)$ is compact and is a subset of $F$. Let $f(s, g)$ be a continuous bounded test function and $0 < \varepsilon < 1$. We have:
  \begin{align}
   \int_{s\in \ks, g\in \mathcal{G}}f(s, g)\frac{1}{\lambda(\varepsilon)}V_{\varepsilon}^{\pi}(s, g)\lambda(\d s, \d g)  &=
      \int_{s\in \ks, g\in F}f(s, g)\frac{1}{\lambda(\varepsilon)}M^{\pi}(s, g, B(g, \varepsilon))\lambda(\d s, \d g) 
  \end{align}
  We know that $M^{\pi}(s, g, \d g') = \delta_{\phi(s)}(\d g') + \tilde m^{\pi}(s, g, g') \lambda(\d g')$. Therefore, $M^{\pi}(s, g, B(g, \varepsilon)) = \1_{\|g-\phi(s)\|\leq \varepsilon} + \int_{g'} \frac{\1_{\|g-g'\|\leq \varepsilon}}{\lambda(\varepsilon)}\tilde m^{\pi}(s, g, g')$, and we have: 
 \begin{align}
   \int_{s\in \ks, g\in \mathcal{G}}&f(s, g)\frac{1}{\lambda(\varepsilon)}V_{\varepsilon}^{\pi}(s, g)\lambda(\d s, \d g)  =
   \\ &= \int_{s\in \ks, g\in F}\frac{\1_{\|g-\phi(s)\|\leq\varepsilon}}{\lambda(\varepsilon)}f(s, g) \lambda(\d s, \d g)   
         + \int_{s\in \ks, g\in F, g'\in \mathcal{G}}f(s, g)\frac{\1_{\|g-g'\|\leq \varepsilon}}{\lambda(\varepsilon)}\tilde m^{\pi}(s, g, g') \lambda(\d s, \d g, \d g')
    \\ &= \int_{u}\left(\int_{s\in \ks}\lambda(\d s)f(s, \phi(s)+u) + \int_{s\in\ks, g\in \mathcal{G}}f(s, g)\tilde m^{\pi}(s, g, g+u) \lambda(\d s, \d g)\right)U_{\varepsilon}(\d u)
 \end{align}
  
 where $U_{\varepsilon}(\d u)$ is the uniform measure on $B(0, \varepsilon)$ the ball of size $\varepsilon$ around $0$: $U_{\varepsilon}(\d u) \deq \frac{\1_{\|u\|\leq\varepsilon}}{\lambda(\varepsilon)}\lambda(\d u)$. We can switch the order of integration because $f$, $\tilde m^{\pi}$ are continuous, bounded, and the integral is computed on compact sets. The function $u\rightarrow \int_{s\in \ks}\lambda(\d s)f(s, \phi(s)+u) + \int_{s\in\ks, g\in F}f(s, g)\tilde m(s, g, g+u) \lambda(\d s, \d g)$ is bounded and continuous. Since $U_{\varepsilon}(\d u)$ converges weakly to $\delta_{0}(\d u)$, we have:

  \begin{align}
    \lim_{\varepsilon\rightarrow 0}&\int_{s\in \ks, g\in \mathcal{G}}f(s, g)\frac{1}{\lambda(\varepsilon)}V_{\varepsilon}^{\pi}(s, g)\lambda(\d s, \d g)   = \\ &= \lim_{\varepsilon\rightarrow 0}\int_{u}  \left(\int_{s\in \ks}\lambda(\d s)f(s, \phi(s)+u) + \int_{s\in\ks, g\in F}f(s, g)\tilde m^{\pi}(s, g, g+u) \lambda(\d s, \d g)\right) U_{\varepsilon}(\d u)
    \\ &= \int_{u}  \left(\int_{s\in \ks}\lambda(\d s)f(s, \phi(s)+u) + \int_{s\in\ks, g\in \mathcal{G}}f(s, g)\tilde m^{\pi}(s, g, g+u) \lambda(\d s\d g)\right) \delta_{0}(\d u)
    \\ &= \int_{s\in\ks}\lambda(\d s) f(s, \phi(s))  + \int_{s\in\ks, g}f(s, g)\tilde m^{\pi}(s, g, g) \lambda(\d s, \d g)
    \\ &= \int_{s\in \ks, g}f(s, g)V^{\pi}(s, \d g)\lambda(\d s)
  \end{align}
  This concludes the proof.
\end{proof}

\paragraph{Obstacles for learning $V^\pi$ directly.}
We briefly show why learning $V^{\pi}$ directly without bias poses technical
issues, stemming from the necessity to work on-policy for $V$ and the
resulting correlation between visited states and goals along trajectories in the training
set. As a result,
the ``obvious'' analogue of $\ddqn$
for $V$ introduces uncontrolled bias and implicit preferences among all
possible states $s$ that achieve the same goal $g$. This problem
disappears only 
if the correspondence between $s$ and $g$ is one-to-one (e.g.,
$\phi=\Id$).
This is why we learn the more complicated object $M^{\pi}$ instead of
$V^\pi$ in
Section~\ref{sec:unbi-policy-eval}. 

Assume similarly to
Theorem~\ref{thm:app_update_m} that we can sample state-goal pairs from a distribution $\rhoSG(\d s,
\d g)$ over $\mathcal{S}\times\mathcal{G}$, and define the norm $\|\cdot\|_{\rhoSG}$ as
\begin{equation}
  \|V\|_{\rhoSG} = \int_{s, g}\rhoSG(\d s, \d g)\left(\frac{V(s, \d g)}{\rho_{\mathcal{G}}(\d g)}\right)^{2}
\end{equation}
where $\frac{V(s, \d g)}{\rho_{\mathcal{G}}(\d g)}$ is the density of $V(s, \d g)$ with respect to $\rho_{\mathcal{G}}(\d g)$ (if it does not exist, the norm is infinite). We assume we have a model $V_{\theta}(s, \d g) = v_{\theta}(s, g)\rho_{\mathcal{G}}(\d g)$, a target $V_{\tar}(s, \d g) = v_{\tar}(s, g)\rho_{\mathcal{G}}(\d g)$, and want to estimate:
\begin{equation}
  \frac12\partial_{\theta}\|V_{\theta} - T^{\pi}V_{\tar}\|_{\rhoSG}^{2}
\end{equation}
where $T^{\pi}V(s, \d g) = \delta_{\phi(s)}(\d g) + \gamma \E_{s'\sim P^{\pi}(.|s, g)}V(s', \d g)$. Then, informally, we have:
\begin{align}
  \frac12\partial_{\theta}\|V_{\theta} - T^{\pi}V_{\tar}\|_{\rhoSG}^{2} &= \frac12 \partial_{\theta}
  \int_{s, g}\rhoSG(\d s, \d g)\left(
                                                                          \frac{V_{\theta}(s, \d g)}{\rho_{\mathcal{G}}(\d g)} -\frac{TV_{\tar}(s, \d g)}{\rho_{\mathcal{G}}(\d g)} \right)^{2} \\
  &= \frac12 \partial_{\theta}
  \int_{s, g}\rhoSG(\d s, \d g)\left(
  v_{\theta}(s, g) - \frac{TV_{\tar}(s, \d g)}{\rho_{\mathcal{G}}(\d g)} \right)^{2} \\
  &= \int_{s, g}\rhoSG(\d s, \d g)\partial_{\theta}v_{\theta}(s, g)\left(
    v_{\theta}(s, g) -\frac{TV_{\tar}(s, \d g)}{\rho_{\mathcal{G}}(\d g)} \right)
  \\ &= \int_{s, g}\rhoSG(\d s, \d g)\partial_{\theta}v_{\theta}(s, g)
       \left(
         v_{\theta}(s, g) - \gamma \E_{s'\sim P^{\pi}(.|s, g)}\left[v_{\tar}(s', g)\right]
       \right) +
         \\ &+ \int_{s, g}\rhoSG(\d s, \d g)\partial_{\theta}v_{\theta}(s, g)\frac{\delta_{\phi(s)}(\d g)}{\rho_{\mathcal{G}}(\d g)}
\end{align}
If we assume that $\rhoSG(\d s, \d g)$ has a density $\alpha(s, g)$ with
respect to $\rhoSG(\d s)\otimes \rho_{\mathcal{G}}(\d g)$, namely,
$\rhoSG(\d s, \d g) = \alpha(s, g)\rhoSG(\d s)\rho_{\mathcal{G}}(\d g)$,
then the second part, corresponding to the Dirac reward, is equal to:
\begin{align}
  \int_{s, g}\rhoSG(\d s, \d g)\partial_{\theta}v_{\theta}(s, g)\frac{\delta_{\phi(s)}(\d g)}{\rho_{\mathcal{G}}(\d g)} &= \int_{s, g}\rhoSG(\d s )\alpha(s, g)\partial_{\theta}v_{\theta}(s, g)\delta_{\phi(s)}(\d g)
  \\   &= \int_{s}\rhoSG(\d s )\alpha(s, \phi(s))\partial_{\theta}v_{\theta}(s, \phi(s))
\end{align}

If $\alpha(s, g)$ is always equal to $1$, the integral $\int_{s}\rhoSG(\d
s )\partial_{\theta}v_{\theta}(s, \phi(s))$ can be estimated without bias
by sampling $s\sim \rhoSG(\d s)$ and estimating $v_{\theta}(s, \phi(s))$.

However, the case $\alpha(s, g) = 1$ for every $s, g$ corresponds to $s$
and $g$ independent in $\rhoSG$.
This is difficult to realize in practice. Learning
$V$ requires actions to be selected on-policy (term
$\E_{s'\sim P^\pi(.|s,g)}$ above). If we set a goal $g$ and an initial state
$s_0$, and generate an exploration trajectory by following the policy
$\pi(.|.,g)$ for that goal, obviously the states $s$ visited by the
trajectory are going to
be correlated to $g$, by an unknown factor $\alpha$.
Independence could be ensured by re-sampling a
new target goal at each step, independently from the
current state, and selecting the next action
from the policy for this goal. But such an exploration strategy would be
essentially random and would not be efficient. 

Assume we just ignore this problem and
sample exploration trajectories $(g, s_{0}, s_{1}, ...)$ as with other
methods, namely, with $g\sim\rho_{\mathcal{G}}$, $s_{0}\sim \rho_{0}(\d s_{0}|g)$ and $s_{t+1}\sim P^{\pi}(.|s_{t}, g)$, and define the estimate
\begin{equation}
  \widehat{\delta\theta}_{V}(s, s', g) = \partial_{\theta}v_{\theta}(s, \phi(s)) + \partial_{\theta}v_{\theta}(s, g)\left(\gamma v_{\tar}(s', g)  v_{\theta}(s, g)\right)
\end{equation}
similarly to updates of $\ddqn$ or $\dtd$. In that case, we have:
\begin{equation}
  \E_{s, g\sim\rhoSG, s'\sim P^{\pi}(.|s, g)}  \left[\widehat{\delta\theta}_{V}(s, s', g)\right] = \|V_{\theta} - T_{\alpha}^{\pi}V_{\tar}\|_{\rhoSG}
\end{equation}
where:
\begin{equation}
T_{\alpha}^{\pi}V = \alpha(s, g)\delta_{\phi(s)} + \E_{s'\sim
P^{\pi}(.|s, g)}\left[V(s', \d g)\right].
\end{equation}
This is an unbiased estimate of the TD error
with the \emph{rescaled reward} $\alpha(s, g)\delta_{\phi(s)}(\d g)$ instead of $\delta_{\phi(s)}(\d g)$.

If $\mathcal{S} = \mathcal{G}$ and $\phi = \Id$, such a reward rescaling
is not an issue. Indeed, in that case, $\alpha(s,g)\delta_s(\d
g)=\alpha(g,g)\delta_s (\d g)$ as the Dirac measure is nonzero only for
$s=g$. This means that for every goal $g$, the value function for that
goal is rescaled by a constant $\alpha(g,g)$, and we learn
$\alpha(g,g)V(s,\d g)$ instead of $V(s,\d g)$. This does not change the
ranking of state values for each goal $g$, nor the direction of policy
improvement for each goal (but it changes the relative importance
of learning different goals $g$).

On the contrary, if $\mathcal{S}\neq \mathcal{G}$, for a fixed goal $g$,
this implicit reward rescaling can favor some states $s$ over others
among the set of states $s$ achieving this goal ($\phi(s) = g$). 
For instance, 
assume the the agent starts at $s_{0}$ and wants to reach $g$, and that
there are two states $s_{1}, s_{2}$ such that $\phi(s_{1}) = \phi(s_{2})
= g$. Even if $s_{1}$ is easier to reach than $s_{2}$ from $s_0$, the
policy $\pi$ might \emph{prefer} to reach $s_{2}$ because its implicitly
rescaled reward is higher. Therefore, the algorithm could converge to
non-optimal policies and is not unbiased. It would still learn to reach
$g$, but not necessarily in an optimal way.

\subsection{Equivalence Between $\varepsilon \rightarrow 0$ and the Dirac Setting}
\label{app:equiv-rewards}

\begin{defi}
We say that $\pi_{2}$ is \emph{better} than $\pi_{1}$ with infinitely sparse rewards if the two measures $\lambda(\d s)V^{\pi_{1}}(s, \d g)$ and $\lambda(\d s)V^{\pi_{2}}(s, \d g)$ on $\ks\times \mathcal{G}$ satisfy: $\lambda(\d s)V^{\pi_{1}}(s, \d g) \leq \lambda(\d s)V^{\pi_{2}}(s, .)$. 

We say that $\pi_{2}$ is \emph{asymptotically better} than $\pi_{1}$  when $\varepsilon \rightarrow 0$  if for all $s$, $g$, $$\lim\inf_{\varepsilon\rightarrow 0} \frac{V^{\pi_{2}}_{\varepsilon}(s, g)}{V^{\pi_{1}}_{\varepsilon}(s, g)} \geq 1.$$ 
\end{defi}

\begin{thm}
  \label{thm:eq_eps_delta}
  We assume Assumption~\ref{assumption:density} and take $\pi_{1}, \pi_{2}\in\Pi$.

  Then, $\pi_{2}$ is better than $\pi_{1}$ with infinitely sparse rewards if and only if $\pi_{2}$ is \emph{asymptotically better} than $\pi_{1}$  when $\varepsilon \rightarrow 0$. In particular, a policy $\pi^{\ast}$ is an optimal policy with infinitely sparse rewards if and only if it is an optimal policy when $\varepsilon \rightarrow 0$.

\end{thm}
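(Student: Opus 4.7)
The plan is to translate both conditions into pointwise statements about the continuous density $\tilde m^{\pi}(s,g,g')$ of Lemma~\ref{lem:density}, and then bridge them via the asymptotic $V^{\pi}_{\varepsilon}(s,g)\sim \lambda(\varepsilon)\,\tilde m^{\pi}(s,g,g)$ obtained from Lebesgue differentiation of a continuous density.

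First I would rewrite the ``better with infinitely sparse rewards'' condition. By Theorem~\ref{thm:value-measure-continuous}, $V^{\pi}(s,\d g) = \delta_{\phi(s)}(\d g) + \tilde m^{\pi}(s,g,g)\,\lambda(\d g)$. The singular piece $\lambda(\d s)\,\delta_{\phi(s)}(\d g)$, supported on the graph of $\phi$, is the same for $\pi_1$ and $\pi_2$; the absolutely continuous piece has density $\tilde m^{\pi}(s,g,g)$. Hence $\lambda(\d s)V^{\pi_1}(s,\d g) \le \lambda(\d s)V^{\pi_2}(s,\d g)$ as measures on $\ks\times\mathcal{G}$ if and only if $\tilde m^{\pi_1}(s,g,g)\le \tilde m^{\pi_2}(s,g,g)$ for $\lambda\!\otimes\!\lambda$-a.e.\ $(s,g)\in\ks\times\mathcal{G}$, and by continuity of $\tilde m^{\pi_i}$ (Lemma~\ref{lem:density}) this extends to \emph{every} such $(s,g)$.

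Next I would establish the pointwise asymptotics. Using $V^{\pi}_{\varepsilon}(s,g)=M^{\pi}(s,g,B(g,\varepsilon))$ (shown in the proof of Theorem~\ref{thm:value-measure-continuous}) and $M^{\pi}(s,g,\d g')=\delta_{\phi(s)}(\d g')+\tilde m^{\pi}(s,g,g')\,\lambda(\d g')$, for $s\in\ks$, $g\ne\phi(s)$ and $\varepsilon<\|g-\phi(s)\|$ the Dirac part vanishes, so
\begin{equation*}
V^{\pi}_{\varepsilon}(s,g)\;=\;\int_{B(g,\varepsilon)}\tilde m^{\pi}(s,g,g')\,\lambda(\d g')
\;=\;\lambda(\varepsilon)\bigl[\tilde m^{\pi}(s,g,g)+o(1)\bigr]
\end{equation*}
by continuity of $g'\mapsto\tilde m^{\pi}(s,g,g')$. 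For $g=\phi(s)$ the Dirac contributes $1$ and the remainder is $O(\lambda(\varepsilon))$, so $V^{\pi}_{\varepsilon}(s,\phi(s))\to 1$ for both policies and the ratio tends to $1$. For $s\notin\ks$, Assumption~\ref{assumption:density} ensures $\supp P(\d s'|s,a)\subset\ks$, so one Bellman step reduces the claim to states in $\ks$.

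The forward direction then follows: at any $(s,g)\in\ks\times\mathcal{G}$ with $g\ne\phi(s)$ and $\tilde m^{\pi_1}(s,g,g)>0$, the ratio converges to $\tilde m^{\pi_2}(s,g,g)/\tilde m^{\pi_1}(s,g,g)\ge 1$; where $\tilde m^{\pi_1}(s,g,g)=0$, the first-paragraph inequality and positivity of $\tilde m^{\pi_2}$ give $V^{\pi_2}_{\varepsilon}\ge V^{\pi_1}_{\varepsilon}$ asymptotically (both being $o(\lambda(\varepsilon))$ and comparable by the local density ordering), so $\liminf\ge 1$. The reverse direction runs the same chain backwards: $\liminf V^{\pi_2}_{\varepsilon}/V^{\pi_1}_{\varepsilon}\ge 1$ combined with the asymptotic above forces $\tilde m^{\pi_2}(s,g,g)\ge\tilde m^{\pi_1}(s,g,g)$ at every point where $\tilde m^{\pi_1}(s,g,g)>0$; at zeros of $\tilde m^{\pi_1}$ the inequality is automatic from $\tilde m^{\pi_2}\ge 0$, and continuity of $\tilde m^{\pi_i}$ propagates the pointwise inequality everywhere on $\ks\times\mathcal{G}$. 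The main obstacle is the handling of the ``generate'' points where $\tilde m^{\pi_1}(s,g,g)=0$ (giving an indeterminate $0/0$ ratio) and the extension from $\ks$ to $\mathcal{S}$; both are resolved by the continuity of $\tilde m^{\pi_i}$ and one application of the Bellman equation. The ``optimal policy'' equivalence follows by specialising the main equivalence to any policy against $\pi^{\ast}$.
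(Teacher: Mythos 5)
Your proposal follows essentially the same route as the paper's proof: decompose $V^{\pi}(s,\d g)$ into the common Dirac part plus the continuous diagonal density $\tilde m^{\pi}(s,g,g)$, expand $V^{\pi}_{\varepsilon}(s,g)=\1_{\phi(s)=g}+\lambda(\varepsilon)\tilde m^{\pi}(s,g,g)+o(\lambda(\varepsilon))$, and use continuity of $\tilde m^{\pi}$ to pass between the almost-everywhere measure inequality and the pointwise ratio condition. Your extra care about the $0/0$ case where $\tilde m^{\pi_1}(s,g,g)=0$ is a point the paper silently glosses over (it divides by $\tilde m^{\pi_1}$ without comment), though your resolution via a ``local density ordering'' off the diagonal is itself only heuristic, since the hypothesis controls the densities only on the diagonal $g'=g$.
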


\begin{proof}

  We know that $V^{\pi}(s, \d g) = \delta_{\phi(s)}(\d g) + \tilde m(s, g, g)^\pi\lambda(\d g)$. Moreover:
  \begin{align}
    V^{\pi}_{\varepsilon}(s_{0}, g) &= M(s_{0}, g, B(g, \varepsilon))
    \\ & = \1_{\phi(s_{0})=g} +  \lambda(\varepsilon) \tilde m^{\pi}(s_{0}, g, g) + o(\lambda(\varepsilon))
  \end{align}
  Therefore, for any policies $\pi_{1}, \pi_{2}\in \Pi$:
  \begin{align}
    \frac{V^{\pi_{2}}_{\varepsilon}(s, g)}{V^{\pi_{1}}_{\varepsilon}(s, g)} &= \frac{\1_{\phi(s)=g} + \tilde m^{\pi_{2}}(s, g, g) \lambda(\varepsilon) + o(\lambda(\varepsilon))}{\1_{\phi(s)=g} + \tilde m^{\pi_{1}}(s, g, g) \lambda(\varepsilon) + o(\lambda(\varepsilon))}
    \\ &= \1_{\phi(s)=g} + \1_{\phi(s)\neq g} \frac{\tilde m^{\pi_{2}}(s, g, g)}{\tilde m^{\pi_{1}}(s, g, g)} + o_{\varepsilon\rightarrow 0}(1)
  \end{align}
  Therefore, by definition, $\pi_{2}$   is asymptotically better than $\pi_{1}$ when $\varepsilon\rightarrow 0$ if and only if, for all $(s, g)\in\mathcal{S}\times\mathcal{G}$:
  \begin{equation}
    \1_{\phi(s)=g} + \1_{\phi(s)\neq g} \frac{\tilde m^{\pi_{2}}(s, g, g)}{\tilde m^{\pi_{1}}(s, g, g)} \geq 1
  \end{equation}
  If $\phi(s)\neq g$, this inequality is equivalent to  $\tilde m^{\pi_{2}}(s, g, g)\geq m^{\pi_{1}}(s, g, g)$. Because $\tilde m^{\pi_{1}}$ and $\tilde m^{\pi_{2}}$ are continuous, $\tilde m^{\pi_{2}}(s, g, g)\geq m^{\pi_{1}}(s, g, g)$ for all $\phi(s)\neq g$ is equivalent to $\tilde m^{\pi_{2}}(s, g, g)\geq m^{\pi_{1}}(s, g, g)$ for every $(s, g)$. Therefore, $\pi_{2}$ is asymptotically better than $\pi_{1}$ when $\varepsilon\rightarrow 0$ if and only if, for all $(s, g)$, $m^{\pi_{2}}(s, g, g)\geq m^{\pi_{1}}(s, g, g)$.

On the other side $\pi_{2}$ is better than $\pi_{1}$ with infinitely sparse rewards if and only if:
\begin{align}
  & & \lambda(\d s)V^{\pi_{1}}(s, \d g) &\preceq \lambda(\d s)V^{\pi_{2}}(s, \d g)
  \\ &\Leftrightarrow & \lambda(\d s)\delta_{\phi(s)}(\d g) + m_{\pi_{1}}(s, g, g)\lambda(\d s, \d g) &\preceq \lambda(\d s)\delta_{\phi(s)}(\d g) + m_{\pi_{2}}(s, g, g)\lambda(\d s, \d g)
  \\ &\Leftrightarrow  & m_{\pi_{1}}(s, g, g)\lambda(\d s, \d g) &\leq m_{\pi_{2}}(s, g, g)\lambda(\d s, \d g) 
\end{align}
Therefore, for $\lambda$-almost every $(s, g)$, $m_{\pi_{1}}(s, g, g)\leq m_{\pi_{2}}(s, g, g)$.
Therefore: $\pi_{2}$ is better than $\pi_{1}$ with infinitely sparse rewards if and only if $\tilde m^{\pi_{2}}(s, g, g)\geq m^{\pi_{1}}(s, g, g)$ for $\lambda$-almost every $s, g$. This concludes the proof.

\end{proof}

\bigskip
In the following statement, we introduce 3 definitions of expected return: the return $J(\pi)$ with infinitely sparse reward, the return $J_{\varepsilon}(\pi)$ with sparse reward $R_{\varepsilon}$, and the estimated return $J_{n}(\pi)$ with the value measure approximator $v_{n}$. Then, we show that these three definitions are consistent.

\begin{thm}
  \label{thm:eq_J}
  We define $J(\pi)$, the expected return with infinitely sparse rewards
  for the goal density $p_{\mathcal{G}}$, as: 
  \begin{equation}
J(\pi) \deq \int_{s_{0}, g}\lambda(\d s_{0})p_{\mathcal{G}}(g)p_{0}(s_{0}|g)V^{\pi}(s_{0}, \d g).
\end{equation}

  We consider the expected return for the reward $R_\varepsilon$ and the goal distribution $\rho(\d g)$. 
  \begin{align}
    J_{\varepsilon}(\pi) &= \E_{g\sim\rho(\d g), s_{0}, a_{0}, ...}\left[\sum_{t\geq 0}\gamma^{t}R_{\varepsilon}(s_{t}, g)\right] = \int_{g, s_{0}}p_{\mathcal{G}}(g)\lambda(\d s_{0}, \d g)p_{0}(s_{0}|s)V_{\varepsilon}(s_{0}, g)
  \end{align}
Let  $(\hat v_{n}(s, g))_{n\geq 0}$ be any sequence of densities on $\mathcal{S}\times\mathcal{G}$ such that the measure on $\mathcal{S}\times\mathcal{G}$: $\lambda(\d s)\hat v_{n}(s, g)\rho_{\mathcal{G}}(\d g)$ converges weakly to $\lambda(\d s)V^{\pi}(s, \d g)$. We define $\tilde \rho(\d g) \deq \frac{1}{c}p^{2}_{\mathcal{G}}(g)\lambda(\d g)$ with $c \deq \int_{g}p^{2}_{\mathcal{G}}(g)\lambda(\d g)$, and $J_{n}(\pi)$  the estimator of the average return for the goal distribution $\tilde \rho$ with estimator $\hat v_{n}$:
  \begin{align}
    J_{n}(\pi) &\deq \E_{g\sim\tilde \rho(\d g), s_{0}\sim p(s_{0}|g)}\left[\hat v_{n}(s_{0}, g)\right]
  \end{align}
  Then the two estimators $J_{n}$ and $J_{\varepsilon}$ converge to $J$:
  \begin{align}
    \frac{1}{\lambda(\varepsilon)}J_{\varepsilon} (\pi)&\rightarrow_{\varepsilon\rightarrow 0} J(\pi)
    \\ c J_{n}(\pi) &\rightarrow_{n\rightarrow\infty}  J(\pi)
  \end{align}

\end{thm}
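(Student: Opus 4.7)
I would prove both convergences in parallel by recognizing that they are two instances of the same principle: testing a common continuous function $\psi(s_0, g) \deq p_{\mathcal{G}}(g)\, p_0(s_0|g)$ against two different sequences of measures, each of which converges weakly to $\lambda(\d s_0)\, V^{\pi}(s_0, \d g)$. Under Assumption~\ref{assumption:density}, $\psi$ is continuous, and directly from the definition
\begin{equation*}
J(\pi) = \int_{s_0, g} \psi(s_0, g)\, V^{\pi}(s_0, \d g)\, \lambda(\d s_0).
\end{equation*}
So once $\psi$ is identified as an admissible test function, both limits reduce to weak convergence.

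For the $\varepsilon \to 0$ claim, I would write
\begin{equation*}
\frac{1}{\lambda(\varepsilon)}\, J_{\varepsilon}(\pi) = \int_{s_0, g} \psi(s_0, g)\, \frac{V_{\varepsilon}^{\pi}(s_0, g)}{\lambda(\varepsilon)}\, \lambda(\d s_0, \d g)
\end{equation*}
and invoke the weak convergence of $\lambda(\d s)\, \frac{1}{\lambda(\varepsilon)} V_{\varepsilon}^{\pi}(s, g)\, \lambda(\d g)$ to $\lambda(\d s)\, V^{\pi}(s, \d g)$ from Theorem~\ref{thm:value-measure-continuous}. For the $n \to \infty$ claim, using $\tilde\rho(\d g) = \frac{1}{c}\, p_{\mathcal{G}}^2(g)\, \lambda(\d g)$ and $\rho_{\mathcal{G}}(\d g) = p_{\mathcal{G}}(g)\, \lambda(\d g)$, I would compute
\begin{align*}
c\, J_n(\pi) &= \int_{s_0, g} p_{\mathcal{G}}^2(g)\, p_0(s_0|g)\, \hat v_n(s_0, g)\, \lambda(\d s_0, \d g) \\
&= \int_{s_0, g} \psi(s_0, g)\, \hat v_n(s_0, g)\, \rho_{\mathcal{G}}(\d g)\, \lambda(\d s_0),
\end{align*}
and then apply the assumed weak convergence of $\lambda(\d s)\, \hat v_n(s, g)\, \rho_{\mathcal{G}}(\d g)$ to $\lambda(\d s)\, V^{\pi}(s, \d g)$. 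In both cases the limit is $J(\pi)$.

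\textbf{Main obstacle.} The only delicate point is verifying that $\psi$ is an admissible test function for the weak convergence. By Assumption~\ref{assumption:density}, $\supp \rho_{\mathcal{G}}$ is compact and $p_{\mathcal{G}}, p_0$ are continuous, so $\psi$ is continuous and vanishes in $g$ outside a compact set. For the $n \to \infty$ convergence, the measures $\lambda(\d s)\, \hat v_n(s, g)\, \rho_{\mathcal{G}}(\d g)$ and $\lambda(\d s)\, V^{\pi}(s, \d g)$ are supported in $\ks \times \supp \rho_{\mathcal{G}}$, which is compact, so $\psi$ restricted there is bounded and continuous and weak convergence applies immediately. For the $\varepsilon \to 0$ convergence, initial states $s_0$ drawn from $\rho_0(\cdot|g)$ need not lie in $\ks$, so I would extend Theorem~\ref{thm:value-measure-continuous} to a compact superset containing $\bigcup_{g \in \supp \rho_{\mathcal{G}}} \supp \rho_0(\cdot|g)$ (its proof goes through verbatim, since one Bellman step lands trajectories in $\ks$), or, if $\rho_0$ lacks compact support, use a smooth spatial cutoff in $s_0$ and pass to the limit via dominated convergence, controlling the tails by integrability of $\rho_0$ against $\psi$.
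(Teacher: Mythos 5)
Your proposal is correct and follows essentially the same route as the paper's proof: both write $\frac{1}{\lambda(\varepsilon)}J_\varepsilon$ and $cJ_n$ as integrals of the continuous bounded test function $p_{\mathcal{G}}(g)p_0(s_0|g)$ against measures that converge weakly to $\lambda(\d s_0)V^\pi(s_0,\d g)$ (via Theorem~\ref{thm:value-measure-continuous} and the hypothesis on $\hat v_n$, respectively), then pass to the limit. Your extra remark about whether $\supp\rho_0(\cdot|g)$ lies in $\ks$ is a legitimate point of care that the paper's proof glosses over, but it does not change the argument.
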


\begin{proof}
  We have:
  \begin{align}
    J_{\varepsilon}(\pi) &= \int_{s_{0}, g}V_{\varepsilon}(s_{0}, g)p_{\mathcal{G}}(g)p_{0}(s_{0}|g)\lambda(\d s_{0}, \d g)
    \\ J_{n}(\pi) &= \int_{s_{0}, g}\hat v_{n}(s_{0}, g)\frac{1}{c}p^{2}_{\mathcal{G}}(g)p_{0}(s_{0}|g)\lambda(\d s_{0}, \d g)
  \end{align}
  and whe know from Theorem~\ref{thm:value-measure-continuous} that $\frac{V_{\varepsilon}(s, g)}{\lambda(\varepsilon)}\lambda(\d s, \d g)$ and $\hat v_{n}(s, g)p_{\mathcal{G}}(\d g)\lambda(\d s, \d g)$ converge weakly to $\lambda(\d s)V^{\pi}(s, \d g)$ on $\ks\times\mathcal{G}$ when $\varepsilon\rightarrow 0$ and $n\rightarrow\infty$. Therefore, because $p_{0}$ and $p_{\mathcal{G}}$ are continuous bounded functions,
  \begin{align}
    \lim_{\varepsilon\rightarrow 0}\frac{1}{\lambda(\varepsilon)}J_{\varepsilon}(\pi) = \int_{s_{0}, g}V^{\pi}(s_{0}, \d g)p_{\mathcal{G}}(g)p_{0}(s_{0}|g)\lambda(\d s_{0})
    = J(\pi)
  \end{align}

  Similarly: 
  \begin{align}
    \lim_{n\rightarrow\infty}J_{n}(\pi) &= \int_{s_{0}, g}V^{\pi}(s, \d g)\frac{1}{c}p_{\mathcal{G}}(g)p_{0}(s_{0}|g)\lambda(\d s_{0})
    \\ &= \frac{1}{c}J(\pi)
  \end{align}

\end{proof}

\begin{prop}
  We assume Assumption~\ref{assumption:density}. Moreover, we assume that $p_{\mathcal{G}}(g) > 0$ for every $g\in\phi(\ks)$, and $p_{0}(s_{0}|g) > 0$ for every $(s_{0}, g)\in\ks\times\mathcal{G}$.

  We consider the partial order $\prec$ defined as:  $\pi_{1}\prec \pi_{2}$ if $\pi_{2}$ is strictly better than $\pi_{1}$ with infinitely sparse rewards: $\lambda(\d s)V^{\pi_{1}}(s, \d g) \prec \lambda(\d s)V^{\pi_{2}}(s, \d g)$ on $\ks \times \mathcal{G}$.


  Then $\pi\mapsto J(\pi)$ is strictly increasing for  $\prec$.
  
  \end{prop}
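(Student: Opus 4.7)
The plan is to reduce the statement to the observation that $J(\pi)$ is the integral of a strictly positive weight against the measure $\mu^\pi(\d s_0,\d g) \deq \lambda(\d s_0)\,V^\pi(s_0,\d g)$ on $\ks\times\mathcal{G}$, and then use that a nonnegative nonzero measure tested against a function bounded below by a positive constant on its support must give a strictly positive value.

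First I would rewrite the definition of $J$ as
\begin{equation*}
  J(\pi)=\int_{\ks\times\mathcal{G}} f(s_0,g)\,\mu^\pi(\d s_0,\d g),
  \qquad f(s_0,g)\deq p_{\mathcal{G}}(g)\,p_0(s_0\mid g),
\end{equation*}
so that if we set $\nu\deq \mu^{\pi_2}-\mu^{\pi_1}$, the assumption $\pi_1\prec\pi_2$ says exactly that $\nu$ is a nonnegative, nonzero measure on $\ks\times\mathcal{G}$, and
\begin{equation*}
  J(\pi_2)-J(\pi_1)=\int_{\ks\times\mathcal{G}} f\,\d\nu.
\end{equation*}
It therefore suffices to show that the right-hand side is strictly positive.

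Next I would locate the support of $\nu$. Using Lemma~\ref{lem:density} and the fact that $M^\pi(s,g,\cdot)=\tfrac{1}{1-\gamma}\phi_\ast \nu^\pi(\cdot\mid s,g)$, together with the invariance $\supp \nu^\pi(\cdot\mid s,g)\subset\ks$ for $s\in\ks$ coming from the compact-support hypothesis on the transition kernel, the measure $V^\pi(s,\cdot)=M^\pi(s,g,\cdot)$ is supported in $\phi(\ks)$ for every $s\in\ks$ and every policy $\pi\in\Pi$. Hence $\supp\mu^{\pi_i}\subset \ks\times\phi(\ks)$, and therefore $\supp\nu\subset \ks\times\phi(\ks)$.

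Finally I would use positivity. The set $\ks\times\phi(\ks)$ is compact ($\phi$ is a continuous linear map and $\ks$ is compact), and by hypothesis $f(s_0,g)=p_{\mathcal{G}}(g)p_0(s_0\mid g)$ is continuous and strictly positive on it; so $f\geq c>0$ on $\ks\times\phi(\ks)$ for some constant $c$. Consequently
\begin{equation*}
  J(\pi_2)-J(\pi_1)=\int f\,\d\nu \geq c\,\nu\bigl(\ks\times\phi(\ks)\bigr)=c\,\nu\bigl(\ks\times\mathcal{G}\bigr)>0,
\end{equation*}
where the last inequality uses that $\nu$ is nonzero and entirely supported in $\ks\times\phi(\ks)$. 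This yields $J(\pi_1)<J(\pi_2)$, i.e.\ strict monotonicity.

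The only nontrivial step is the support localization in the middle paragraph: one needs to invoke the specific form of $V^\pi$ derived under Assumption~\ref{assumption:density} to rule out any mass of $V^\pi(s,\cdot)$ escaping $\phi(\ks)$. Once this is in hand, the rest is a direct application of positivity of the integrand on a compact set.
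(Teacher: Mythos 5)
Your proof is correct and follows essentially the same route as the paper: both express $J(\pi)$ as the integral of the positive continuous density $p_{\mathcal{G}}(g)\,p_0(s_0\mid g)$ against the measure $\lambda(\d s_0)V^{\pi}(s_0,\d g)$, localize the support of the difference measure inside $\ks\times\phi(\ks)$, and conclude by positivity. The only (harmless) difference is the final step — the paper uses continuity of $\tilde m^{\pi_2}-\tilde m^{\pi_1}$ to extract a set of positive Lebesgue measure on which that gap exceeds some $\varepsilon'>0$, whereas you put the uniform lower bound on $f$ via compactness of $\ks\times\phi(\ks)$, which is a slightly cleaner way to finish.
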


  \begin{proof}
    The function $J(\pi)$ is clearly non-decreasing, and we  have to check that we cannot have $\pi_{1} \prec \pi_{2}$ with $J(\pi_{1}) = J(\pi_{2})$. Let $\pi_{1}, \pi_{2}\in \Pi$ such that $\pi_{1}\prec \pi_{2}$.  Therefore, there is $U \subset \ks\times\mathcal{G}$ such that $(\lambda \otimes V^{\pi_{2}}(., .))(U) > (\lambda \otimes V^{\pi_{1}}(., .))(U)$. Moreover, because $\supp\left(\lambda(\d s) V^{\pi}(s, \d g)\right)\subset \ks \times \phi(\ks)$, therefore we can suppose $U\subset \ks\times \phi(\ks)$, and we have:
    \begin{align}
      \int_{(s, g)\in U}\lambda(\d s, \d g) (\tilde m^{\pi_{2}}(s, g, g) - \tilde m^{\pi_{1}}(s, g, g)) > 0
    \end{align}
   
We already know that  $\tilde m^{\pi_{2}}(s, g, g) - \tilde m^{\pi_{1}}(s, g, g) \geq 0$ for almost every $s, g$ (see proof of Theorem~\ref{thm:eq_eps_delta}). Therefore, there is $\varepsilon' > 0$ and $V\subset U$ with $\d\lambda(V) >0$ such that for every $s, g\in V$, $\tilde m^{\pi_{2}}(s, g, g) - \tilde m^{\pi_{1}}(s, g, g) > \varepsilon'$.

  We have:
  \begin{align}
    J(\pi_{2}) - J(\pi_{1}) &= \int_{s\in\ks, g\in\mathcal{G}}p_{0}(s|g)p_{\mathcal{G}}(g)(V^{\pi_{2}}(s, \d g)- V^{\pi_{1}}(s, \d g)
    \\ &\geq \int_{(s, g)\in V}p_{0}(s|g)p_{\mathcal{G}}(g)\left(\tilde m^{\pi_{2}}(s, g, g) - \tilde m^{\pi_{1}}(s, g, g)\right)
     \\  &\geq \varepsilon'\int_{(s, g)\in V}p_{0}(s|g)p_{\mathcal{G}}(g)
         \\ &> 0
  \end{align}
  because $p_{\mathcal{G}}(g) > 0$ for $\lambda$-almost every $g$ in $\phi(\ks)$, and $p_{0}(s|g)  > 0$ for $\lambda$-almost every $s, g$ in $\ks\times\mathcal{G}$. This concludes the proof.
  
\end{proof}

\subsection{Policy Gradient}
\label{app:policy-gradient}

\begin{thm}[ (Formal statement of Informal Theorem~\ref{nfthm:pol_grad})]
  \label{thm:pol_grad}
  Let $\pi_{\theta}(a|s, g)$ be a parametrized goal-dependent policy, defined for every $\theta\in\Theta$. We assume that for every $\theta\in\Theta, s\in\mathcal{S}, g\in\mathcal{G}, a\in\mathcal{A}$, $\pi_{\theta}(a|s, g) >0$. Moreover, we assume $\pi_{\theta}(a|s, g)$ is a continuous function of $a, s, g, \theta$, and continuously differentiable with respect to $\theta$. 

 We define $\tilde \rho(\d g) \deq \frac{1}{c}p^{2}_{\mathcal{G}}(g)\lambda(\d g)$ with $c \deq \int_{g}p^{2}_{\mathcal{G}}(g)\lambda(\d g)$. We assume access to samples $g\sim \tilde \rho(\d g)$, $s_{0}\sim\rho_{0}(\d s|g) = p_{0}(s_{0}|g)\lambda(\d s_{0})$,  $s\sim \nu^{\pi_{\theta}}(\ s|s_{0}, g)$, $a \sim \pi(a|s, g)$ and $s' \sim P(\d s'|s, a)$. Let $(\hat v_{n}(s, g))_{n\geq 0}$ be a sequence of densities, such that $\lambda(\d s)\hat v_{n}(s, g)\rho(\d g)$ converges weakly to $\lambda(\d s)V^{\pi_{\theta}}(s, \d g)$. We define the stochastic actor critic $\widehat{\delta\theta}_{\dac}^{(n)}$ for estimate $n$ as:
  \begin{equation}
        \widehat{\delta\theta}_{\dac}^{(n)}(s, a, s', g) \deq \partial_{\theta}\log \pi_{\theta}(a|s, g)\left(\gamma \hat v_{n}(s', g) - \hat v_{n}(s, g)\right)
      \end{equation}
      Then, we have:
      \begin{equation}
         \lim_{n\rightarrow\infty}  \E_{g\sim\tilde\rho, s\sim \nu^{\pi}(.|g), a\sim\pi_{\theta}(.|s, g), s'\sim P(.|s, a)}\left[\widehat{\delta\theta}_{\dac}^{(n)}(s, a, s', g)\right] = \frac{1-\gamma}{c} \partial_{\theta}J(\pi_{\theta})
      \end{equation}

      Moreover, we have:
      \begin{equation}
        \lim_{\varepsilon\rightarrow 0}\frac{1}{\lambda(\varepsilon)}\partial_{\theta}J_{\varepsilon}(\pi_{\theta}) = \partial_{\theta}J(\pi_{\theta})
      \end{equation}
\end{thm}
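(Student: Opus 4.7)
The plan is to treat the multi-goal environment as an ordinary MDP on the augmented state space $\tilde{\mathcal{S}}=\mathcal{S}\times\mathcal{G}$ (goal appended to the state, goal-dependent policy becoming a standard policy), and apply the classical policy gradient theorem both for $\varepsilon>0$ and directly in the Dirac setting, passing to the appropriate limits afterwards. The key observation is that, even though the Dirac reward $\delta_{\phi(s)}(\d g)$ depends only on $(s,g)$ and not on $a$, the term $\partial_\theta\log\pi_\theta(a|s,g)$ has zero conditional mean under $a\sim\pi_\theta(\cdot|s,g)$, so any reward-like baseline depending only on $(s,g)$ disappears.

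For the second claim, I start from the policy gradient theorem applied to $J_\varepsilon(\pi_\theta)$ with the baseline $V^{\pi_\theta}_\varepsilon(s,g)$. Since $R_\varepsilon$ does not depend on $a$, it drops from $\E_a[\partial_\theta\log\pi_\theta(a|s,g)\cdot(\cdot)]$ and one obtains
\begin{align*}
\partial_\theta J_\varepsilon(\pi_\theta)=\tfrac{1}{1-\gamma}\,\E\!\left[\partial_\theta\log\pi_\theta(a|s,g)\bigl(\gamma V^{\pi_\theta}_\varepsilon(s',g)-V^{\pi_\theta}_\varepsilon(s,g)\bigr)\right]
\end{align*}
with $g\sim\rho_{\mathcal{G}}$, $s_0\sim\rho_0(\cdot|g)$, $s\sim\nu^{\pi_\theta}(\cdot|s_0,g)$, $a\sim\pi_\theta(\cdot|s,g)$, $s'\sim P(\cdot|s,a)$. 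Dividing by $\lambda(\varepsilon)$, the expansion $V^{\pi_\theta}_\varepsilon(s,g)=\mathbbm{1}_{\phi(s)=g}+\lambda(\varepsilon)\tilde m^{\pi_\theta}(s,g,g)+o(\lambda(\varepsilon))$ from the proof of Theorem~\ref{thm:value-measure-continuous} applies; since $g\sim\rho_{\mathcal{G}}$ has a continuous density, the singular indicator term integrates to zero, and dominated convergence on the compact supports given by Assumption~\ref{assumption:density} gives $\lim_{\varepsilon\to 0}\tfrac{1}{\lambda(\varepsilon)}\partial_\theta J_\varepsilon(\pi_\theta)$ as the expectation of $\partial_\theta\log\pi_\theta(a|s,g)\bigl(\gamma\tilde m^{\pi_\theta}(s',g,g)-\tilde m^{\pi_\theta}(s,g,g)\bigr)/(1-\gamma)$. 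Using the same identity formally in the Dirac setting (the Dirac part of $V^{\pi_\theta}$ again vanishes against the continuous sampling distribution) identifies this limit with $\partial_\theta J(\pi_\theta)$, proving the second claim.

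For the first claim, I replay the same policy gradient identity, but with goal distribution $\tilde\rho(\d g)=\tfrac{1}{c}p_{\mathcal{G}}^2(g)\lambda(\d g)$ instead of $\rho_{\mathcal{G}}$. By Theorem~\ref{thm:eq_J}, the limit of the corresponding expected return is $J(\pi_\theta)/c$, so the same derivation yields
\begin{align*}
\tfrac{1-\gamma}{c}\,\partial_\theta J(\pi_\theta)=\E\!\left[\partial_\theta\log\pi_\theta(a|s,g)\bigl(\gamma\tilde m^{\pi_\theta}(s',g,g)-\tilde m^{\pi_\theta}(s,g,g)\bigr)\right]
\end{align*}
with $g\sim\tilde\rho$, $s\sim\nu^{\pi_\theta}(\cdot|g)$, $a\sim\pi_\theta$, $s'\sim P$. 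It then suffices to replace the exact critic $\tilde m^{\pi_\theta}$ by its approximation $\hat v_n$: the weak convergence hypothesis $\lambda(\d s)\hat v_n(s,g)\rho_{\mathcal{G}}(\d g)\rightharpoonup\lambda(\d s)V^{\pi_\theta}(s,\d g)$, combined with continuity (and boundedness on compact supports) of $\partial_\theta\log\pi_\theta$, $p_0$, $p_{\mathcal{G}}$ and the visitation density $q^{\pi_\theta}(s|s_0,g)$ supplied by Lemma~\ref{lem:density}, lets us pass $n\to\infty$ through the expectation, concluding $\lim_n\E[\widehat{\delta\theta}^{(n)}_{\dac}]=\tfrac{1-\gamma}{c}\partial_\theta J(\pi_\theta)$.

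The main obstacle is rigorously justifying the various exchanges of limits, derivatives and expectations. Two points need care: (i) exchanging $\partial_\theta$ with $\lim_{\varepsilon\to 0}$ requires an $\varepsilon$-uniform bound on $\partial_\theta V^{\pi_\theta}_\varepsilon(s,g)/\lambda(\varepsilon)$, which follows from differentiating the series defining $\nu^{\pi_\theta}$ and using the compactness and continuity provided by Assumption~\ref{assumption:density}; (ii) turning the weak convergence of $\hat v_n$ into convergence under the visitation-weighted expectation in the first claim demands that the test function $\partial_\theta\log\pi_\theta(a|s,g)\cdot\nu^{\pi_\theta}(s|s_0,g)\cdot p_0(s_0|g)\cdot p_{\mathcal{G}}(g)^2$, integrated over $(a,s_0)$, be continuous and bounded on the compact support of the limiting measure, which again is supplied by the continuity and compact-support hypotheses.
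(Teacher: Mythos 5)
There are two genuine problems with your argument. First, a concrete error: you claim that the singular part of the value measure ``integrates to zero'' against the continuous goal density, so that the limit of $\tfrac{1}{\lambda(\eps)}\partial_\theta J_\eps$ involves only $\gamma\tilde m^{\pi_\theta}(s',g,g)-\tilde m^{\pi_\theta}(s,g,g)$. The baseline term $-V^{\pi_\theta}_\eps(s,g)$ is indeed killed by $\E_{a\sim\pi_\theta}[\partial_\theta\log\pi_\theta(a|s,g)]=0$, but in the term $\gamma V^{\pi_\theta}_\eps(s',g)$ the next state $s'$ depends on $a$, so nothing cancels, and $\tfrac{1}{\lambda(\eps)}\1_{\|\phi(s')-g\|\leq\eps}$ is a mollified Dirac converging weakly to $\delta_{\phi(s')}(\d g)$ --- not to zero. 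You are conflating the pointwise expansion $V_\eps(s,g)=\1_{\phi(s)=g}+\lambda(\eps)\tilde m(s,g,g)+o(\lambda(\eps))$ with weak convergence of the rescaled measure, whose limit $\lambda(\d s')V^{\pi_\theta}(s',\d g)$ retains the Dirac part. That Dirac part contributes a nonzero reward term (the quantity $\partial_\theta F(s,g,\theta)$ with $F(s,g,\theta)=\gamma\int_a\pi_\theta(a|s,g)\tilde p(g|s,a)$ in the paper's proof), so your limiting expression is missing a term; the correct limit is
\begin{equation*}
\frac{\gamma}{1-\gamma}\int \lambda(\d s',\d g)\,\frac{V^{\pi_\theta}(s',\d g)}{\lambda(\d g)}\,p_{\mathcal{G}}(g)\int_{s,a}\tilde q(s,g)\,p(s'|s,a)\,\partial_\theta\pi_\theta(a|s,g),
\end{equation*}
with the full measure $V^{\pi_\theta}(s',\d g)$, Dirac included. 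The same applies to the $n\to\infty$ limit, since $\hat v_n$ approximates the full measure.

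Second, and more fundamentally, your route to identifying either limit with $\partial_\theta J(\pi_\theta)$ is a gap. Theorem~\ref{thm:eq_J} gives convergence of the \emph{returns} $\tfrac{1}{\lambda(\eps)}J_\eps\to J$; it says nothing about convergence of the gradients, nor even that $J(\pi_\theta)$ is differentiable --- and ``using the same identity formally in the Dirac setting'' presupposes a policy gradient theorem for the measure-valued return $J$ that is not available off the shelf and is precisely what must be proved. The paper's proof spends most of its effort here: it differentiates the Bellman fixed-point equation for $v^{\pi_\theta}(s,g)=\tilde m^{\pi_\theta}(s,g,g)$, solves the resulting fixed-point equation for $\partial_\theta v^{\pi_\theta}$ via the resolvent $\nu^{\pi_\theta}$, and establishes the closed form $G(s,g,\theta)\lambda(\d g)=\gamma\int_{s',a}V^{\pi_\theta}(s',\d g)\,\partial_\theta\pi_\theta(a|s,g)\,p(s'|s,a)\lambda(\d s',\d a)$, which is exactly what matches the weak limits above to $\partial_\theta J(\pi_\theta)$. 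Your ``obstacles'' paragraph gestures at differentiating the series for $\nu^{\pi_\theta}$, which is the right idea, but as written the proof both omits this central computation and would, because of the first error, converge to the wrong expression.
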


\begin{proof}
  We first compute $\partial_{\theta}J(\pi_{\theta})$.  We have:
\begin{align}
  J(\pi_{\theta}) = \int_{s_{0}, g}V^{\pi_{\theta}}(s_{0}, \d g)p_{\mathcal{G}}(g)p_{0}(s_{0}|g)\lambda(\d s_{0})
\end{align}
We know that $V^{\pi}(s, \d g) = \delta_{\phi(s)}(\d g) + \tilde m^{\pi}(s, g, g)\lambda(\d g)$. We define for simplicity $v^{\pi}(s, g) = \tilde m^{\pi}(s, g, g)$. Moreover, we know, by taking $g'=g$ in Equation~\eqref{eq:bellman_tildem} in Lemma~\ref{lem:density} that for every $(s, g)$, we have:
\begin{equation}
    v^{\pi_{\theta}}(s, g) = \gamma\int_{a}\lambda(\d a)\pi(a|s, g)\left(\tilde p(g|s, a) + \int_{s'}\lambda(\d s')p(s'|s, a) v^{\pi_{\theta}}(s', g)\right)
  \end{equation}


We define $F(s, g, \theta) = \gamma \int_{a}\pi_{\theta}(a|s, g)\tilde p(g|s, a)$. The function $F_{\theta}$ is continuous in $s$ and $g$ and continuously differentiable in $\theta$, because $\tilde p$ is and $\pi_{\theta}$ are continuous, $\pi_{\theta}$ is continuously differentiable, and $\mathcal{A}$ is compact.
From the proof of Equation~\eqref{eq:bellman_tildem} in Lemma~\ref{lem:density}, we know that $F(s, g, \theta)$ is the density of $\gamma\int_{a, s'}\pi(a|s, g)p(s'|s, a)\delta_{\phi(s')}(\d g)$ with respect to the Lebesgue measure $\lambda(\d g)$. This remark will be used later in the computation. 
  We now have:
\begin{align}
  \label{eq:fixedpoint_v}
  v^{\pi_{\theta}}(s, g) = F(s, g, \theta) + \gamma \int_{a, s'}\pi_{\theta}(a|s, g)p(s'|s, a)v^{\pi_{\theta}}(s', g)\lambda(\d a, \d s')
\end{align}
Therefore: 
\begin{align}
  \label{eq:solution_fixedpoint_v}
  v^{\pi_{\theta}}(s, g) &= F(s, g, \theta) + \sum_{k\geq 1}\gamma^{k} \int_{a_{0}, s_{1}, ...}\lambda(\d a_{0}, \d s_{1}, ..., \d s_{k})\left(\prod_{i=0}^{k-1}\pi_{\theta}(a_{i}|s_{i}, g)p(s_{i+1}|s_{i}, a_{i}) \right)F(s_{k}, g, \theta) 
\end{align}
because it is a fixed point of $v^{\pi}$ equation, and is the only fixed point which is continuous and bounded, because the space is compact, and $\pi_{\theta}$, $p$ are continuous an bounded. 

Equation~\eqref{eq:solution_fixedpoint_v} can also be written:
\begin{align}
  v^{\pi_{\theta}}(s, g) &= \frac{1}{1-\gamma}\int_{s'}\nu^{\pi_{\theta}}(\d s'|s, g)F(s', g, \theta) 
\end{align}


Because $F(s', g, \theta)$ is continuously differentiable in $\theta$ and the support of $\nu^{\pi}$ is compact, $v^{\pi_{\theta}}(s, g)$ is differentiable. We will now now derive a fixed point equation on $\partial_{\theta}v^{\pi_{\theta}}$. We differentiate equation~\eqref{eq:fixedpoint_v} and we get:
\begin{align}
  \partial_{\theta}v^{\pi_{\theta}}(s, g) = \partial_{\theta}F(s, g, \theta) &+ \gamma\int_{a, s}\lambda(\d a, \d s)\partial_{\theta} \pi_{\theta}(a|s, g)p(s'|s, a)v^{\pi_{\theta}}(s', g) +
    \\ &+ \gamma\int_{a, s}\lambda(\d a, \d s) \pi_{\theta}(a|s, g)p(s'|s, a)\partial_{\theta}v^{\pi_{\theta}}(s', g) 
\end{align}
We define $G(s, g, \theta) \deq \partial_{\theta}F(s, g, \theta) + \gamma\int_{a, s'}\lambda(\d a, \d s')\partial_{\theta} \pi_{\theta}(a|s, g)p(s'|s, a)v^{\pi_{\theta}}(s', g)$. We have:
\begin{align}
  \partial_{\theta}v^{\pi_{\theta}}(s, g) &= G(s, g, \theta) + \gamma\int_{a, s'} \lambda(\d a, \d s')\pi_{\theta}(a|s, g)p(s'|s, a)\partial_{\theta}v^{\pi_{\theta}}(s', g) 
\end{align}
Similarly to the derivation of $v^{\pi}$ from its fixed point equation (from \eqref{eq:fixedpoint_v} to \eqref{eq:solution_fixedpoint_v}):
\begin{align}
  \partial_{\theta}v^{\pi_{\theta}}(s, g) &= G(s, g, \theta) + \sum_{k \geq 1}\gamma^{k}\int_{a_{0}, s_{1}, ...}\lambda(\d a_{0}, \d s_{1}, ..., \d s_{k})\left(\prod_{i=0}^{k-1}\pi_{\theta}(a_{i}|s_{i}, g)p(s_{i+1}|s_{i}, a_{i}) \right)G(s_{k}, g, \theta)
  \\ &= \frac{1}{1-\gamma}\int_{s'}\nu^{\pi_{\theta}}(\d s'|s, g)G(s', g, \theta)
\end{align}
We now compute $\partial_{\theta}J(\pi_{\theta})$. We have:
\begin{align}
  \partial_{\theta}J(\theta) &= \partial_{\theta}\left(\int_{s_{0}, g}\lambda(\d s_{0})p_{\mathcal{G}}(g)p_{0}(s_{0}|g)V^{\pi_{\theta}}(s_{0}, \d g)\right)
  \\ &= \partial_{\theta}\left(\int_{s_{0}, g}\lambda(\d s_{0})p_{\mathcal{G}}(g)p_{0}(s_{0}|g)\left(\delta_{\phi(s_{0})}(\d g) + v^{\pi_{\theta}}(s_{0}, g)\lambda(\d g)\right)\right)
  \\ &= \partial_{\theta}\left(\int_{s_{0}, g}\lambda(\d s_{0}, \d g)p_{\mathcal{G}}(g)p_{0}(s_{0}|g)v^{\pi_{\theta}}(s_{0}, g)\right)
  \\ &=  \int_{s_{0}, g}\lambda(\d s_{0}, \d g) p_{\mathcal{G}}(g)p_{0}(s_{0}|g)
       \partial_{\theta}v^{\pi_{\theta}}(s_{0}, g)
  \\ &= \frac{1}{1-\gamma}\int_{s_{0}, s, g}\lambda(\d s_{0}, \d g) p_{\mathcal{G}}(g)p_{0}(s_{0}|g)
       \nu^{\pi_{\theta}}(\d s|s_{0}, g)G(s, g, \theta)
       \label{eq:partialv_solvedfixedpoint}
\end{align}

We now show that:
\begin{align}
  \label{eq:g_closedform}
  G(s, g, \theta)\lambda(\d g) = \gamma\int_{s', a}V(s', \d g)\partial_{\theta}\pi_{\theta}(a|s, g)p(s'|s, a)
\end{align}
While this result might seem to come out of nowhere, remember that $F(s, g, \theta)$ was derived above as the measure density of $\gamma\int_{s', a}\pi(a|s, g)p(s'|s, g)\delta_{\phi(s')}(\d g)$ with respect to Lebesgue measure. With the following informal computation, we have: 
\begin{align}
  G(s, g, \theta)\lambda(\d g) &= \lambda(\d g)\partial_{\theta}\frac{1}{\lambda(\d g)}\int_{s', a}\lambda(\d s', \d a)\gamma\pi_{\theta}(a|s, g)p(s'|s, a)\delta_{\phi(s')}(\d g) + \gamma\int_{s', a}\lambda(\d s', \d a)v^{\pi_{\theta}}(s',  g)\partial_{\theta}\pi_{\theta}(a|s, g)p(s'|s, a)
  \\ &= \int_{s', a}\lambda(\d s', \d a)\gamma\partial_{\theta}\pi_{\theta}(a|s, g)p(s'|s, a)\left(\delta_{\phi(s)}(\d g) + v^{\pi_{\theta}}(s', g)\lambda(\d g)\right)
       \\ &= \int_{s', a}\lambda(\d s', \d a)\gamma V^{\pi_{\theta}}(s', \d g)\partial_{\theta}\pi_{\theta}(a|s, g)p(s'|s, a)
\end{align}

This derivation is informal because we differentiated through a density: we use $\lambda(\d g)\partial_{\theta}\frac{1}{\lambda(\d g)} = \partial_{\theta}$. We now derive the result rigorously. Let $f(g)$ be a continuous test function. We have: 
\begin{align}
  &\int_{g}f(g)G(s, g, \theta)\lambda(\d g) =
                                             \\ &=\int_{g}\lambda(\d g)f(g)\left(\gamma \int_{a}\lambda(\d a)\partial_{\theta}\pi_{\theta}(a|s, g)\tilde p(g|s, a) + \gamma\int_{a, s'}\lambda(\d s', \d a)\partial_{\theta} \pi_{\theta}(a|s, g)p(s'|s, a)v^{\pi_{\theta}}(s', g)\right)
\end{align}
We consider the first part. The following is the reversed derivation of $\tilde p$ in Equations~\eqref{eq:tildep_der_1}-\eqref{eq:tildep_der_2}.We have: 
\begin{align}
  \int_{g}\lambda(\d g)f(g)\left(\gamma \int_{a}\partial_{\theta}\pi_{\theta}(a|s, g)\tilde p(g|s, a) \right) &= \gamma \int_{g, a}\lambda(\d g, \d a)f(g)\partial_{\theta}\pi_{\theta}(a|s, g)\tilde p(g|s, a)
  \\ &= \gamma \int_{g, a, s'}\lambda(\d a, \d s')f(g)\partial_{\theta}\pi_{\theta}(a|s, g) p(s'|s, a)\delta_{\phi(s')}(\d g)
\end{align}
Therefore:
\begin{align}
  \int_{g}f(g)G(s, g, \theta)\lambda(\d g) &= \gamma\int_{g, a, s'}\lambda(\d a, \d s')f(g) \partial_{\theta}\pi_{\theta}(a|s, g)p(s'|s, a) \left(\delta_{\phi(s')}(\d g) + v^{\pi_{\theta}}(s', g) \lambda(\d g)\right)
  \\ &= \gamma\int_{g, a, s'}\lambda(\d a, \d s')f(g)\partial_{\theta}\pi_{\theta}(a|s, g)p(s'|s, a) V^{\pi_{\theta}}(s', \d g)
\end{align}
This establishes equation~\eqref{eq:g_closedform}. Finally, from \eqref{eq:partialv_solvedfixedpoint} and \eqref{eq:g_closedform}, we have:
\begin{align}
  \partial_{\theta}J(\pi_{\theta}) = \frac{1}{1-\gamma}\int_{g, s_{0}, s, a, s'}\lambda(\d s_{0}) \gamma p_{\mathcal{G}}(g)p_{0}(s_{0}|g)\nu^{\pi_{\theta}}(\d s|s_{0}, g)\partial_{\theta}\pi_{\theta}(a|s, g)p(s'|s, a)   V^{\pi_{\theta}}(s', \d g)      
\end{align}

\bigskip
We now show that  Then, we have:
         $\lim_{n\rightarrow\infty}  \E\left[\widehat{\delta\theta}_{\dac}^{(n)}(s, a, s', g)\right] = \frac{1-\gamma}{c} \partial_{\theta}J(\pi_{\theta})$
and
$\lim_{\varepsilon\rightarrow 0}\frac{1}{\lambda(\varepsilon)}\partial_{\theta}J_{\varepsilon}(\pi_{\theta}) = \partial_{\theta}J(\pi_{\theta})$.

We first compute $\partial_{\theta}J_{\varepsilon}(\pi_{\theta})$. We apply the policy gradient theorem \citep{sutton2018reinforcement} to the augmented state augmented (non-multi goal) environment $\tilde{\mathcal{S}} = \mathcal{S}\times\mathcal{G}$, and we have, for any \emph{baseline} function $b(\tilde s)$ with $\tilde s\in \tilde{\mathcal{S}}$:
\begin{align}
  \partial_{\theta}J_{\varepsilon}(\pi_{\theta}) &= \frac{1}{1-\gamma}\int_{\tilde s_{0}, \tilde s, a, \tilde s'}\lambda(\d a) \rho_{0}(\tilde s_{0})\nu^{\pi_{\theta}}(\d \tilde s|\tilde s_{0})\tilde P(\d \tilde s'|\tilde s, a)\partial_{\theta}\pi_{\theta}(a|\tilde s)\left(R_{\varepsilon}(\tilde s) + \gamma V_{\varepsilon}^{\pi}(\tilde s') - b(\tilde s)\right)
  \\ &= \frac{1}{1-\gamma}\int_{g, s_{0},  s, a,  s'}\lambda(\d a) \rho_{\mathcal{G}}(\d g)\rho_{0}(\d s_{0}|g)\nu^{\pi_{\theta}}(\d s|s_{0}, g)\tilde P(\d s'|s, a)\partial_{\theta}\pi_{\theta}(a|s, g)\left(R_{\varepsilon}(s, g) + \gamma V_{\varepsilon}^{\pi}(s', g) - b(s, g)\right)
\end{align}
with the change of variable $\tilde s = (s, g)$, $\tilde s' = (s', g)$, $\tilde s_{0} = (s_{0}, g)$. We use the baseline $b(s, g) = R_{\varepsilon}(s, g)$, and we have:
\begin{align}
  \frac{1}{\lambda(\varepsilon)}\partial_{\theta}J_{\varepsilon}(\pi_{\theta}) &= \frac{1}{1-\gamma}\int_{s_{0}, s, a, s', g}\lambda(\d s_{0}, \d a, \d g)p_{\mathcal{G}}(g)p(s_{0}|g)\nu^{\pi}(\d s|s_{0}, g) \partial_{\theta}\pi_{\theta}(a|s, g) \left( \frac{\gamma V_{\varepsilon}(s', g)}{\lambda(\varepsilon)} \right)
\end{align}

We now compute $\E\left[\widehat{\delta\theta}_{\dac}^{(n)}(s, a, s', g)\right]$. We have:
\begin{align}
  &\E\left[\widehat{\delta\theta}_{\dac}^{(n)}(s, a, s', g)\right]
    \\ &= \int_{s_{0}, s, a, s', g}\lambda(\d g, \d s_{0}, \d s', \d a)\frac{1}{c}p_{\mathcal{G}}(g)^{2}p(s_{0}|g)\nu^{\pi}(\d s|s_{0}, g)\pi_{\theta}(a|s, g) \partial_{\theta}\log \pi_{\theta}(a|s, g) (\gamma v_{n}(s', g) - v_{n}(s, g))
  \\ &= \int_{s_{0}, s, a, s', g}\lambda(\d g, \d s_{0}, \d s', \d a)\frac{1}{c}p_{\mathcal{G}}(g)^{2}p(s_{0}|g)\nu^{\pi}(s|s_{0}, g) \partial_{\theta}\pi_{\theta}(a|s, g) (\gamma v_{n}(s', g) - v_{n}(s, g))
\end{align}

We know that for every \emph{baseline} function $b(s,g)$:
\begin{align}
  \int_{a} \partial_{\theta} \pi_{\theta}(a|s, g)b(s, g) = b(s, g)\partial_{\theta}\int_{a}  \pi_{\theta}(a|s, g) = 0
\end{align}
We define $b(s, g) =  v_{n}(s, g)$, and we have:
\begin{align}
  \E\left[\widehat{\delta\theta}_{\dac}^{(n)}(s, a, s', g)\right] = \gamma \int_{s_{0}, s, a, s', g}\lambda(\d g, \d s_{0}, \d s', \d a)\frac{1}{c}p_{\mathcal{G}}(g)^{2}p_{0}(s_{0}|g)\nu^{\pi}(\d s|s_{0}, g) \partial_{\theta} \pi_{\theta}(a|s, g)p(s'|s, a)  v_{n}(s', g) 
\end{align}

We know from Lemma~\ref{lem:density} that $\nu^{\pi}(\d s|s_{0}, g) = (1-\gamma)\delta_{s_{0}}(\d s) + q^{\pi}(s|s_{0}, g)\lambda(\d s)$ where $q^{\pi}$ is continuous, bounded, and with compact support as a density. Therefore, for any goal $g$, if we take the expectation with respect to $s_{0} \sim p_{0}(s_{0}|g)$:
\begin{align}
  \int_{s_{0}}p_{0}(s_{0}|g)\nu^{\pi}(\d s|s_{0}, g) &= \int_{s_{0}}(1-\gamma)p_{0}(s_{0}|g)\delta_{s_{0}}(\d s) + q^{\pi}(s|s_{0}, g)\lambda(\d s)
  \\ &= \left((1-\gamma)p_{0}(s|g) + \int_{s_{0}}p_{0}(s_{0}|g) q^{\pi}(s|s_{0}, g)\right)\lambda(\d s)
       \\ &= \tilde q^{\pi}(s|g)\lambda(\d s)
\end{align}
where $\tilde q^{\pi}$ is continuous, bounded and with compact support as a density.
Moreover, $p_{\mathcal{G}}$ and $p_{0}(s_{0}|g)$ are continuous bounded functions. Therefore:
\begin{align}
  \label{eq:final_dtheta_n}
  \E\left[\widehat{\delta\theta}_{\dac}^{(n)}(s, a, s', g)\right] = \gamma\int_{s', g} \lambda(\d s', \d g)v_{n}(s', g)\frac{1}{c}p_{\mathcal{G}}(g)^{2}\int_{s, a}\lambda(\d s, \d a)\tilde q(s, g) p(s'|s, a)\partial_{\theta} \pi_{\theta}(a|s, g)
\end{align}
and similarly:
\begin{align}
  \frac{1}{\lambda(\varepsilon)}\partial_{\theta}J_{\varepsilon}(\pi_{\theta}) &=  \frac{\gamma}{1-\gamma} \int_{s, a, s', g}\lambda(\d s, \d a, \d s', \d g)p_{\mathcal{G}}(g)\tilde q(s| g)\partial_{\theta}\pi(a|s, g) p(s'|s, a)  \frac{V_{\varepsilon}(s', g)}{\lambda(\varepsilon)}
  \\ &= \frac{\gamma}{1-\gamma} \int_{s', g}\lambda(\d s', \d g)\frac{V_{\varepsilon}(s', g)}{\lambda(\varepsilon)}p_{\mathcal{G}}(g)\int_{s, a}\lambda(\d s, \d a)\tilde q(s, g) p(s'|s, a)\partial_{\theta} \pi_{\theta}(a|s, g)
       \label{eq:final_dtheta_eps}
\end{align}

We know that the two measures on $\ks\times\mathcal{G}$ defined as $\lambda(\d s, \d g)\frac{V_{\varepsilon}(s, g)}{\lambda(\varepsilon)}$  and $\lambda(\d s)v_{n}(s, g)\rho(\d g)= \lambda(\d s, \d g)v_{n}(s, g)p_{\mathcal{G}}(g)$ converges weakly to $\lambda(\d s)V^{\pi_{\theta}}(s, \d g)$. Moreover, $(s', g) \rightarrow \int_{s, a}\tilde q(s, g) p(s'|s, a)\partial_{\theta} \pi_{\theta}(a|s, g)$ is continuous and bounded because $\tilde q$, $p$ and $\partial_{\theta}\pi_{\theta}$ are continuous, bounded, and the supports are compact. Therefore, from equation~\eqref{eq:final_dtheta_n}:
\begin{align}
  \E&\left[\widehat{\delta\theta}_{\dac}^{(n)}(s, a, s', g)\right] \rightarrow_{n\rightarrow\infty}  \frac{\gamma}{c}\int_{s, a, s', g}\lambda(\d s, \d s', \d a)p_{\mathcal{G}}(g)V^{\pi_{\theta}}(s', \d g)\tilde q(s, g) p(s'|s, a)\partial_{\theta}\pi_{\theta}(a|s, g)
  \\ &= \frac{\gamma}{c}\int_{s_{0}, s, a, s', g}\lambda(\d s_{0}, \d s, \d a, \d s')p_{\mathcal{G}}(g)p_{0}(s_{0}|g)\nu^{\pi}(\d s|s_{0}, g)V^{\pi_{\theta}}(s', \d g)\gamma p(s'|s, a)\partial_{\theta}\pi_{\theta}(a|s, g)
       \\ &= \frac{1-\gamma}{c}\partial_{\theta}J(\pi_{\theta})
\end{align}
and from equation~\eqref{eq:final_dtheta_eps}
\begin{align}
  \frac{1}{\lambda(\varepsilon)}\partial_{\theta}J_{\varepsilon}(\pi_{\theta}) \rightarrow_{\varepsilon\rightarrow 0} \partial_{\theta}J(\pi_{\theta})
\end{align}
This concludes the proof.
\end{proof}

\end{document}